\renewcommand{\cal}{\mathcal}
\newcommand\cA{{\mathcal A}}
\newcommand\cB{{\mathcal B}}
\newcommand{\cC}{{\cal C}}
\newcommand{\cY}{{\cal Y}}
\newcommand{\cD}{{\cal D}}
\newcommand{\cE}{{\cal E}}
\newcommand\cO{{\mathcal O}}
\newcommand{\cR}{{\mathcal R}}
\newcommand{\cS}{{\mathcal S}}
\newcommand{\cX}{{\mathcal X}}
\newcommand{\bE}{\mathbb{E}}
\newcommand{\bN}{\mathbb{N}}
\newcommand{\bP}{\mathbb{P}}
\newcommand{\bR}{{\mathbb R}}
\renewcommand{\leq}{\leqslant}
\renewcommand{\geq}{\geqslant}
\newcommand{\zhun}[1]{\textcolor{blue}{[ZD:~#1]}}
\definecolor{english}{rgb}{0.0, 0.5, 0.0}
\newcommand{\p}{{\bm{p}}}
\newcommand{\br}{{\bm{r}}}
\newcommand{\bpi}{{\bm{\pi}}}
\newtheorem{theorem}{Theorem}[section]
\newtheorem{lemma}{Lemma}[section]
\newtheorem{remark}{Remark}[section]
\newtheorem{assumption}{Assumption}[section]
\newtheorem{definition}{Definition}[section]
\title{Reinforcement Learning with Stepwise Fairness Constraints}
\author{Zhun Deng\thanks{Harvard University, \texttt{zhundeng@g.harvard.edu}.}
\quad He Sun\thanks{Harvard University, \texttt{he\_sun@g.harvard.edu}.}
\quad Zhiwei Steven Wu\thanks{Carnegie Mellon University, \texttt{zstevenwu@cmu.edu}.}
\quad Linjun Zhang\thanks{Rutgers University, \texttt{linjun.zhang@rutgers.edu}.}
\quad David C. Parkes\thanks{Harvard University, \texttt{parkes@eecs.harvard.edu}.}}
\date{}
\begin{document}

\maketitle

%

%

\begin{abstract}
AI methods are used in societally important settings, ranging from credit to employment to housing, and it is crucial to provide fairness in regard to algorithmic decision making. Moreover, many settings are dynamic, with populations responding to sequential decision policies.  We introduce the study of reinforcement learning (RL) with stepwise fairness constraints, requiring group fairness  at each time step. Our  focus is on tabular episodic RL, and we provide learning algorithms with strong theoretical guarantees in regard to policy optimality and fairness violation. Our framework provides useful tools to study the impact of fairness constraints in sequential settings and brings up new challenges in RL.
\end{abstract}

\section{Introduction}
Decision making systems trained with real-world data are deployed ubiquitously in our daily life, for example, in regard to credit, education, and medical care. However, those decision systems may demonstrate discrimination against disadvantaged groups due to the biases in the data \cite{Dwork2012}. In order to mitigate this issue, many have proposed to impose fairness constraints \cite{hardt2016equality,Dwork2012} on the decision, such that certain statistical parity properties are achieved. Despite the fact that fair learning has been extensively studied, most of this work is in the static setting without considering the sequential feedback effects of decisions. At the same time, in many scenarios, algorithmic decisions may incur changes in the underlying features or qualification status of individuals, which further feeds back to the decision making process; for example, banks' decision may induce borrowers to react, for example changing their FICO score by closing  credit cards. 

When there exist sequential feedback effects, even ignoring one-step feedback effects can harm minority groups~\cite{liu2018delayed}. In response,~\cite{tu2020fair} advocates to study a discrete-time sequential decision process, where responses to the decisions made at each time step are accompanied by changes in the features and qualifications of the population in the next time step. In particular, they study and show the drawback of myopic optimization together with requiring fairness at each time step, which we  refer to as \textit{stepwise fairness constraints}. While imposing stepwise fairness constraints is a natural way to incorporate fairness into a Markov decision process (MDP), it makes more sense from the perspective of a decision maker to consider  the long-term reward. Thus, in this paper, we study stepwise fairness constraints together with optimal, sequential-decision making, taking the perspective of a forward-looking decision maker. On one hand, our work could be viewed as a {\em Fair Partially Observable Markov Decision Process} (F-POMDP) framework to promote fair sequential decision making \footnote{The problem of long-term well-being of groups brought up in \cite{liu2018delayed} is due to a misalignment of the institute's reward function and well-being measurements, and cannot be solved in any existing literature. It also doesn't prevent stepwise fairness constraints from being a way to build fair MDP.}. On the other hand, our work also provides a computational tool for studying the quantities of interests, such as well-being of groups, in a natural sequential decision making setting.
%

In particular, we initiate both the theoretical and experimental studies of reinforcement learning, i.e. optimizing long-term reward in a partially observable Markov decision process (POMDP), under stepwise fairness constraints. We consider an episodic setting, which models for example 
economic and societal activities that exhibit  seasonality; e.g., new mortgage applicants who 
apply for loans from banks more often in the spring and summer season every year, 
or  graduate school admission, which  usually starts in the
autumn and completes around December every year. 
%
Similar to~\cite{liu2018delayed,tu2020fair}, we mainly consider two  types of fairness notions, \textit{demographic parity} and \textit{equalized opportunity}, and for a POMDP framework that has discrete actions and a discrete state space. These are illustrative of other stepwise fairness constraints that could be adopted. 
We take a model-based learning approach, and 
provide practical optimization algorithms that
enjoy strong theoretical guarantees  in regard to policy optimality and fairness violations
as the number of episodes increases. We summarize our contributions as below:

1. Theoretically, we demonstrate how to use sampled trajectories of individuals to solve RL with fairness constraints and provide theoretical guarantees to ensure vanishing regrets in reward and fairness violation as the number of episodes increases.

2. Experimentally, we implement the first algorithm for tabular episodic RL with stepwise fairness constraints.

\subsection{Related work}
As the rapid development of machine learning \cite{he2016deep,he2017mask,goodfellow2020generative,deng2021adversarial,deng2021improving,deng2021toward,deng2020towards}, there is an increasing interest in the study of decision making problems in the context of people\cite{Hardt2015,Shavit2019,Ball2019,Chen2020,sun2020decision}. Hardt et al.~\cite{Hardt2015}
model a classification problem as a sequential game (Stackelberg competition) between two players, where the first player has the ability to commit to his strategy before the second player responds. They characterize the equilibruim and obtain near optimal computationally efficient learning algorithms.  Shavit and Moses~\cite{Shavit2019} study an algorithmic decision-maker who incentivizes
people to act in certain ways to receive a better decision.
Ball et al.~\cite{Ball2019} study a model of predictive scoring, 
where there is a sender agent being scored, a receiver agent who 
wants to predict the quality of the sender, and an
intermediary who observes multiple, potentially mutable features of the sender.

There is also a growing literature on algorithmic fairness~\cite{liu2018delayed,Calders2009,Kusner2017,Dwork2012,deng2022fifa,deng2020representation,burhanpurkar2021scaffolding,zhang2022and,deng2020interpreting}.
%
Liu et al.~\cite{liu2018delayed}, for example, characterize the delayed impact of standard fairness criteria
under a feedback model with a single period of adaptation. They use a one step feedback model to capture the sequential dynamics of the environment. 
However, these papers do not consider the fairness in a more general sequential decision process. 
There is also  a line of literature regarding fair bandits, but not more general (PO)MDP problems~\cite{Joseph2016,Hashimoto2018}.
%
In regard to fairness considerations in reinforcement learning, it has also gained great attention~\cite{DAmour2020,Creager2019,Wen2021,Jabbari2017,Mandal2022,deng2021improving,kawaguchi2022understanding,deng2021toward}. In particular, Creager et al.~\cite{Creager2019} use  causal directed acyclic graphs as a unifying framework for fairness. D'Amour et al.~\cite{DAmour2020} use simulation to study the fairness of algorithms and  show that neither static nor single-step analyses is  enough to
 understand the long-term consequences of a decision system.
Jabbari et al.~\cite{Jabbari2017} 
define fairness constraints to require that an algorithm never prefers one action over another if the long-term reward of choosing the latter action is higher, whereas we consider groupwise notions of fairness. 
Mandal et al.~\cite{Mandal2022} adopt a welfare-based, axiomatic approach, and give a regret bound for the Nash Social, Minimum and generalized Gini Welfare. 
In contrast with our work, the fairness concepts are not group-based but rather based on the value contributed from different agents in the system. 
Similar to this paper, Zhang et al.~\cite{Zhang2020}
study the dynamics of population qualification and algorithmic decisions under
a partially observed Markov decision problem setting, but whereas they 
only consider myopic policies, we frame this as a
general reinforcement learning policies. 
%


\section{Preliminaries}


We consider a binary decision setting, with training examples that consist of triplets $(x, y, \vartheta)$, where $x\in\cX$ is a feature vector, $\vartheta\in \Lambda$ is a protected group attribute such as race or gender, and the label $y\in\{0,1\}$.
For simplicity, we only consider binary sensitive attributes $\Lambda=\{\alpha,\beta\}$, but our method can also be generalized to deal with multiple sensitive attributes (see Appendix~\ref{app:multipleatt}).
For $k\in \bN^+$, we use $[k]$ to denote the set $\{1,2,\cdots,k\}$. 

Based on feature $x$, a decision maker makes a decision $a\in\cA=\{0,1\}$ (e.g., make a loan or not). We will also denote an individual's state as $s=(x,y)$, and let $\cS=\cX\times\cY$.  After a decision is made, a possibly group-dependent reward, which may be stochastic, $r^\vartheta: (s,a)\mapsto \bR$ is obtained by the decision maker. 
A concrete  example of a reward function, with  $r_+,r_->0$,  is 
$$
r^\vartheta(s,a)=\left\{\begin{matrix}
r_+, &\mbox{if $y=1,a=1$;}\\ 
	-r_-,&\mbox{if $y=0,a=1$;}  \\
	0, &\mbox{if $a=0$.}
\end{matrix}\right.$$

Here, the decision maker gains $r_+$ by accepting a qualified individual and incurs a cost $r_-$ by accepting an unqualified individual. 

\subsection{Sequential setting}


%

Our model as a partially observable Markov decision process (POMDP) mainly follows~\cite{tu2020fair}, but they consider a fair, myopically-optimizing policy while we consider long-term rewards as in typical RL settings. 
Following~\cite{liu2018delayed,tu2020fair}, the decision maker is
interested in the expected reward 
achieved across time for a \textit{\textbf{random individual}} drawn from the population.
Each random individual has their group membership sampled according to 
$p_\alpha=\bP(\vartheta=\alpha)$ and $p_\beta=\bP(\vartheta=\beta)$, and interacts with the decision maker over multiple time steps.
%
%
At each time step $h$, the sampled individual with attribute $\vartheta$ has feature $x^\vartheta_h=x^\vartheta \in\cX$ along with a hidden qualification status $y^\vartheta_h=y^\vartheta \in\{0,1\}$. One example is that the feature $x^\vartheta_h$ is determined by the hidden qualification status with $x^\vartheta_h\sim p(\cdot|y^\vartheta)$. 
We call $s^\vartheta_h=(x^\vartheta_h,y^\vartheta_h)$ the 
{\em state } 
of the individual at time $h$. 
The initial state $s^\vartheta_1$ is sampled from $p^\vartheta$.

At each time step $h$, the 
decision maker adopts a decision $a^\vartheta_h$  based on the observed feature $x^\vartheta$ by following a {\em group-dependent policy} $\pi^\vartheta_h(x^\vartheta)$, i.e. $a^\vartheta_h\sim \pi^\vartheta_h(x^\vartheta_h)$, where $\pi^\vartheta_h:\cX\rightarrow\Delta(\cA)$, and $\Delta(\cA)$ is the set of distributions on $\cA$.\footnote{Following~\cite{tu2020fair}, 
we use group-dependent policies so that the formulation can be more generalized. Our technique can also be used for group-independent policies if this is required.}
%
%
The decision maker receives 
{\em reward} $r^\vartheta(s^\vartheta_h,a^\vartheta_h)$, and $r^\vartheta\in[l,u]$, where $-\infty<l-u< \infty$. Without loss of generality, let us assume $r^{\vartheta}\in[0,1]$. The mean of stochastic reward function is denoted by $r^{*\vartheta}$.

After the decision is made, the individual is informed of the decision and their qualification status,  and  their qualification status, $y^\vartheta_{h+1}$, and features $x^\vartheta_{h+1}$, may
then undergo a stochastic transition.
 We assume that this transition follows
 a {\em time-invariant} but {\em group-dependent transition kernel},
which we denote as 
$p^{*\vartheta}(s^\vartheta_{h+1}|s^\vartheta_h,a^\vartheta_h)$, where $p^{*\vartheta}:\cS\times \cA\mapsto \Delta(\cS)$, and $\Delta(\cS)$ is the set of distributions on $\cS$. 
 As explained in~\cite{tu2020fair}, in addition to thinking about a single, randomly chosen individual 
 repeatedly interacting with the decision maker, this also immediately models a finite population of randomly chosen individuals, some from each group, and with all individuals in a group
 subject to the same, group-contingent decision policies. In addition, we have a reward function pair, $(r^{\alpha},r^{\beta})$, 
 which may be stochastic. 





%
%

 \subsection{Fair policies}
 
 The goal of the decision maker is to   find a {\em policy}, $\bpi=(\pi^\alpha,\pi^\beta)$, to
 maximize the total, expected reward over an episode   for a random individual, while satisfying stepwise fairness constraints, i.e. imposing a certain type of fairness constraint on states and actions for each time step.
  
  
A random individual   in group $\vartheta$, and with decision policy $\pi^\vartheta$,  follows
stochastic state, action, reward sequence $s^\vartheta_1,a^\vartheta_1,r^\vartheta(s^\vartheta_1,a^\vartheta_1),s^\vartheta_2,a^\vartheta_2,r^\vartheta(s^\vartheta_2,a^\vartheta_2),s^\vartheta_3,\cdots$.
%
%
%
%
%
Let $\bE^{\pi,p}[\cdot],\bP^{\pi,p}[\cdot]$ be the expectation and probability of a random variable defined with respect to this stochastic process.
%
%
We denote the expected reward for a random individual at time step $h$  as,
$$\cR^*_h(\p^*,\bpi)=\sum_{\vartheta\in\{\alpha,\beta\}}p_\vartheta\cdot  \bE^{\pi^\vartheta,p^{*\vartheta}}[r^{*\vartheta}(s^\vartheta_h,a^\vartheta_h)].$$

Note here that $\bE^{\pi^\vartheta,p^{*\vartheta}}[\cdot]$ refers to the expected value for an individual in group $\vartheta$, i.e.,
conditioned on the individual being sampled in this group.
Our ideal goal is to obtain the  policy  $\bpi^*$ that solves the following optimization problem:
\begin{align*}
&\qquad\quad\max_{\bpi}\sum_{h=1}^H \cR^*_h (\p^*,\bpi)\\
  &\qquad\qquad \mbox{s.t.}~~\forall h\in[H],  \\
  &\mathit{faircon}(\{\pi^\vartheta,p^{*\vartheta},s^\vartheta_h,a^\vartheta_h\}_{\vartheta=\{\alpha,\beta\}}),
\end{align*}
%
where $\mathit{faircon}$ corresponds to a particular fairness concept.
We consider two group fairness definitions, and also discuss how the approach can be  extended to additional fairness concepts (see Section~\ref{subsec:extfairnotion}).
Specifically, we consider the following two   fairness concepts: 

(i) \textit{RL with demographic parity (DP)}. For this case,  $\mathit{faircon}$ is
   $$\quad\bP^{\pi^\alpha,p^{*\alpha}}[a^\alpha_h=1]=\bP^{\pi^\beta,p^{*\beta}}[a^\beta_h=1],$$
   which means at each time step $h$, the decision for individuals from different groups is  statistically independent of the sensitive attribute.
   
(ii) \textit{RL with equalized opportunity (EqOpt)}. For this case, $\mathit{faircon}$ is
    $$\bP^{\pi^\alpha,p^{*\alpha}}[a^\alpha_h=1|y^\alpha_h=1]=\bP^{\pi^\beta,p^{*\beta}}[a^\beta_h=1|y^\beta_h=1],$$
which means at each time step $h$, the decision for a random individual from
each of the two different groups is  statistically independent of the sensitive attribute conditioned 
on the individual being qualified.
In each case, $\bP^{\pi^\alpha,p^{*\alpha}}[\cdot]$ refers to the probability for an individual in group $\vartheta$, i.e.,
conditioned on the individual being sampled in this group.

The above optimization problems are feasible under technical Assumption~\ref{ass:kernel}, presented later, and we assume feasibility throughout the paper (see also Appendix~\ref{app:feasibility}). 

\begin{remark}
Since we are modelling the behavior of a randomly drawn individual from the population, the objective  should be viewed as to find a policy pair $\bpi=(\pi^\alpha,\pi^\beta)$ to optimize the long-term reward of 
of the decision maker while ensuring fairness over the population. 
\end{remark}

\subsection{Episodic RL protocol} 
This is a learning setting, and we study an
episodic sequential decision setting where
a learner  repeatedly   interacts with an environment
across $K>0$  independent episodes.
Such a scenario is natural in a number of practical settings, as we discussed in the introduction.
%
%
Throughout the paper, we consider the tabular case, i.e., we assume finite cardinality for $\cS$ and $\cA$.\footnote{In the example of credit score and loan payment in \cite{liu2018delayed}, the credit score is discretized and served as $\cX$ here and the action space $\cA$ and qualification status space $\cY$ are both $\{0,1\}$.}
%
Without loss of generality, we can assume 
that the initial state of an episode is a fixed state $s_0$ (the next state can be sampled randomly, and  state $s_0$ does not contribute to any reward or fairness considerations, see \cite{brantley2020constrained} for further detailed explanation).
%
%
%
%
At episode $k\in[K]$, denote policy pair $\bpi_k=(\pi^\alpha_k,\pi^\beta_k)=\{(\pi^\alpha_{k,h},\pi^\beta_{k,h})\}_{h=1}^H$, where $H$ is the horizon.
%
An individual sampled from group $G_\vartheta$ starts from state $s^\vartheta_{k,1}$, thus, we can consider starting state pair $(s^\alpha_{k,1}, s^\beta_{k,1})=(s^\alpha_0,s^\beta_0)$ for the trajectory of different groups (the initial state depends on the group from which the individual is drawn). At each time step $h\in [H]$,
the decision maker selects an action $a^\vartheta_{k,h}\sim \pi^\vartheta_{k,h}(x^\vartheta_{k,h})$.
Here, although the policy only uses the $x$ component, it is convenient to write it as a function of $s$.
The decision maker gets  reward $r^\vartheta(s^\vartheta_{k,h},a^\vartheta_{k,h})$, and the state of the individual for next time step is drawn according to $s^\vartheta_{k,h+1}\sim p^{*\vartheta}(\cdot|s^\vartheta_{k,h},a^\vartheta_{k,h})$.

\begin{remark}
The policy is only based on the  \textit{\textbf{feature vector}} $x$. However, we are able to access $y$ in the training data.
\end{remark}

\section{Learning Algorithms}\label{sec:EFH algo}

Before we formally state our algorithms, we need to gather data to estimate the unknown quantities such as reward function mean pair $\br^*=(r^{*\alpha},r^{*\beta})$ and transition probability pair $\p^*$. In addition, we will incorporate an {\em exploration bonus} to further modify the estimation.

\paragraph{Data gathering and estimation.} 
In order to analyze the policy effect on the population, we model the behavior of a randomly drawn individual who interacts with the environment across $H$ steps. Here, we demonstrate how to aggregate individuals' data   for each episode and estimate quantities of interest. Specifically, at episode $k\in[K]$, for each group $\vartheta$, we assume  $n^\vartheta_{k}$  individuals are drawn, according to $p_\alpha$ and $p_\beta$. Throughout the paper, we assume $n^\vartheta_k\ge 1$, for each $\vartheta$ and $k$. 
A decision is made independently for each individual at each step, using a group-specific policy, leading to a stochastic transition in the state of the individual. In Appendix~\ref{app:optin}, we will further discuss how to gather data when allowing individuals who opt in or out 
during an episode. We use the counting method to obtain estimates of the statistics of interest.  For the $i$-th individual in episode $k$, their status and action at time step $h$ is denoted as $s^{\vartheta,(i)}_{k,h}$ and $a^{\vartheta,(i)}_{k,h}$. 

For $\vartheta\in\{\alpha,\beta\}$, let $\p_k=(p^\alpha_k,p^\beta_k)$ and $\br_k=(r^\alpha_k,r^\beta_k)$, 
\begin{align*}
N^\vartheta_{k}(s,a)&=\max\big\{1, \sum_{t\in [k-1],h\in[H],i\in[n^\vartheta_k]} \mathbbm{1}\{s^{\vartheta,(i)}_{k,h}=s,a^{\vartheta,(i)}_{k,h}=a\}\big\},\\
p^\vartheta_{k} (s'|s,a)&=\frac{1}{N^\vartheta_{k}(s,a)}\sum_{t\in [k-1],h\in[H],i\in[n^\vartheta_k]} \mathbbm{1}\{s^{\vartheta,(i)}_{k,h}=s,a^{\vartheta,(i)}_{k,h}=a, s^{\vartheta,(i)}_{k,h+1}=s'\},\\
\hat{r}^\vartheta_{k} (s,a)&=\frac{1}{N^\vartheta_{k}(s,a)}\sum_{t\in [k-1],h\in[H],i\in[n^\vartheta_k]}r^\vartheta(s,a)\mathbbm{1}\{s^{\vartheta,(i)}_{k,h}=s,a^{\vartheta,(i)}_{k,h}=a\}.
\end{align*}

\paragraph{Exploration bonus method.} In RL, it is standard to introduce optimism in order to encourage exploring under-explored states. Specifically, for $\vartheta\in\{\alpha,\beta\}$, we adopt a {\em bonus term}, $\hat{b}^\vartheta_k$, to add to the estimated reward function $\hat{r}^\vartheta_{k}$, such that we obtain $r^\vartheta_{k}(s,a)=\hat{r}^\vartheta_{k}(s,a)+\hat{b}^\vartheta_{k}(s,a)$, where the $\hat{b}^\vartheta_{k}(s,a)$ values assign larger values for under-explored $(s,a)$'s. We  specify how to choose $\hat b^\vartheta_k$ in Section \ref{subsec:theory}, and  denote
\begin{equation*}
	\cR_{k,h}(\p,\bpi)= \sum_{\vartheta\in\{\alpha,\beta\}} p_\vartheta\cdot \bE^{\pi^\vartheta,p^{\vartheta}}[r_k^{\vartheta}(s^\vartheta_{k,h},a^\vartheta_{k,h})].
\end{equation*}

For the purpose of analysis, we  treat $p_\vartheta$'s  as known constants for simplicity; for example, perhaps these proportions  are provided by census.

\textbf{Practical optimization for DP.} In reality, given we don't have access to $\p^*$ and $r^{*\vartheta}$, we need to solving a surrogate optimization problem and hope the optimal policy can have similar performance as the ideal optimal policy under certain performance criteria. In the following, we provide a simple algorithm for RL with demographic parity. It is based on optimization under $p^{\vartheta}_k$ and $r^{\vartheta}_k$:
\begin{align*}
  &\qquad\quad\max_{\bpi\in\Pi_k}\sum_{h=1}^H \cR_{k,h} (\p_k,\bpi),\\
   &\qquad\qquad~~s.t.~~\forall h\in[H],\\
   &|\bP^{\pi^\alpha,p_k^{\alpha}}(a^\alpha_{k,h}=1)-\bP^{\pi^\beta,p_k^{\beta}}(a^\beta_{k,h}=1)|\le \hat c_{k,h},
\end{align*}
where we have $\Pi_k=\{(\pi^\alpha,\pi^\beta): \pi^\vartheta(a=1|x)\ge \eta^{\mathit{DP}}_k,\forall x,a,h,\vartheta\}.$ $\Pi_k$ can ensure the reachability from any $x$ to the decision $a=1$. Here $\{\eta^{\mathit{DP}}_k\}_k$ is a sequence of real numbers and $\{\hat c_{k,h}\}_{k,h}$ are relaxations.
Intuitively, if we set $\eta^{\mathit{DP}}_k$ and $\hat c_{k,h}$ to be decreasing and vanishing as $k$ increases, we would expect the above optimization  to approach the ideal optimization problem as $k$ increases. We formalize this in Section \ref{subsec:theory}. 


\paragraph{Practical optimization for EqOpt.} For equalized opportunity, and similar to the case of DP, we have
 \begin{align*}
  &\qquad\qquad\max_{\pi\in\Pi_k}\sum_{h=1}^H \cR_{k,h} (\p_k,\bpi),\\
  &s.t.~~\forall h\in[H],~~|\bP^{\pi^\alpha,p_k^{\alpha}}(a^\alpha_{k,h}=1|y^\alpha_{k,h}=1)-\bP^{\pi^\beta,p_k^{\beta}}(a^\beta_{k,h}=1|y^\beta_{k,h}=1)|\le \hat d_{k,h},
 \end{align*}
where  we have $\Pi_k=\{(\pi^\alpha,\pi^\beta): \pi^\vartheta(a=1|x)\ge \eta^{\mathit{EqOpt}}_k,\forall x,a,h,\vartheta\}.$
Here $\{\eta^{\mathit{EqOpt}}_k\}_k$ is a sequence of real numbers and $\{\hat d_{k,h}\}_{k,h}$ are relaxations. We formalize this in Section \ref{subsec:theory}.   

\paragraph{Algorithm.} We can solve these optimization problems through  occupancy measures, and they each become different kinds of quadratically constrained linear programs (QCLP) (see Appendix~\ref{app:occupancy}). 
Although QCLP is generally NP-hard, many methods based on relaxations and approximations such as semi-definite program (SDP) have been extensively discussed. We use Gurobi to solve these relaxed optimization problems.

\section{Theoretical Analysis}


In order to track the performance, we consider the following regrets. For policy pairs $\{\bpi_k\}_{k=1}^K$, for {\em reward regret} in episode $k$, we  track:
$$\cR^{\text{type}}_{\mathrm{reg}}(k)= \frac{1}{H}\sum_{h=1}^H\Big(\cR^*_h (\p^*,\bpi^{*\text{type}})-\sum_{t=1}^k\cR^*_h (\p^*,\bpi_k)\Big),$$
where $\bpi^{*\text{type}}$ is the optimal policy pair of RL with constraint types mentioned above and $\text{type}\in\{\mathit{\mathit{DP}},\mathit{\mathit{EqOpt}}\}$. For simplicity, we will omit the supscript ``type" when it is clear from the context and use $\bpi^*=(\pi^{*\alpha},\pi^{*\beta})$. 

For the fairness constraints, we consider the violation for each type of constraint in episode $k$ as the following,
$$\cC_{\mathrm{reg}}^{\mathit{DP}}(k)=\frac{1}{H}\sum_{h=1}^H \big|\bP^{\pi^{\alpha}_k,p^{*\alpha}}(a^\alpha_{k,h}=1)-\bP^{\pi^\beta_k,p^{*\beta}}(a^\beta_{k,h}=1)\big|,$$
and 
\begin{align*}
 \cC_{\mathrm{\mathrm{reg}}}^{\mathit{\mathit{EqOpt}}}(k)=\frac{1}{H}\sum_{h=1}^H \big|\bP^{\pi^{\alpha}_k,p^{*\alpha}}(a^\alpha_{k,h}=1\big|y^\alpha_{k,h}=1)-\bP^{\pi^\beta_k,p^{*\beta}}(a^\beta_{k,h}=1|y^\beta_{k,h}=1)\big|.  
\end{align*}
The theoretical guarantees hold for any episode, not just the last episode.

\subsection{Choice of various of quantities} \label{subsec:theory}
In this part, we provide a formal theoretical guarantee for the performance of our algorithms under the previously mentioned criteria with suitably chosen quantities $\{\hat b^\vartheta_k\}_k$, $\{\hat c_k\}_k$, and $\{\hat d_k\}_k$, for each episode $k$. 

\paragraph{$Q$ and $V$ functions.} Two of the mostly used concepts in RL are $Q$ and $V$ functions. Specifically, $Q$ functions track the expected reward when a learner starts from state $s\in\cS$. Meanwhile, $V$ functions are the corresponding expected $Q$ functions of the selected action. For a reward function $r$ and a transition function $p$, the $Q$ and $V$ functions are defined as:
\begin{align*}
Q^{\pi,p}_r(s,a,h)&=r(s,a)+\sum_{s'\in\cS}p(s'|s,a)V^{\pi,p}_r(s',h+1),\\
V^{\pi,p}_r(s,h)&=\bE_{a\sim\pi(\cdot|s)}[Q^{\pi,p}_r(s,a,h)],
\end{align*}
where we set $V^{\pi,p}_r(s,H+1)=0$.

\paragraph{Choice of $\hat b^\vartheta_k$.} For $\{\hat b^\vartheta_k\}_k$, similar to \cite{brantley2020constrained}, we need $\{\hat b^\vartheta_k\}_k$ to be valid. 
\begin{definition}[Validity]
A bonus $\hat b^\vartheta_k$ is valid for episode $k$ if for all $(s,a)\in\cS\times\cA$ and $h\in[H]$,
\begin{align*}
&\Big |\hat r^\vartheta_k(s,a)-r^{*\vartheta}_k(s,a)+\sum_{s'\in\cS}\Big( p^\vartheta_k(s'|s,a)- p^{*\vartheta}(s'|s,a)\Big)
V^{\pi^{*\vartheta},p^{*\vartheta}}_{r^{*\vartheta}}(s',h+1)\Big|\le \hat b^\vartheta_k(s,a).
\end{align*}
\end{definition}

Following the exporation-bonus setting~\cite{brantley2020constrained}, we set $\hat b^\vartheta_k=\min\Big\{2H,2H\sqrt{\frac{2\ln(16SAHk^2/\delta)}{N^\vartheta_k(s,a)}}\Big\}$, and have the following lemma.
\begin{lemma}\label{lm:b_k}
With probability at least $1-\delta$, for
$$\hat{b}^\vartheta_k(s,a)=\min\Big\{2H,2H\sqrt{\frac{2\ln(16SAHk^2/\delta)}{N^\vartheta_k(s,a)}}\Big\},$$ 
the bonus $\hat b^{\vartheta}_k(s,a)$ is valid for every  episode $k$ simultaneously.
\end{lemma}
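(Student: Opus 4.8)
The plan is to recognize the left-hand side of the validity inequality as a normalized sum of bounded, mean-zero martingale differences, and to control it by Azuma--Hoeffding followed by a union bound; this is the standard optimism analysis adapted to our group-dependent, time-aggregated estimators.

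First I would fix $\vartheta$, a pair $(s,a)$, a step $h$, and an episode $k$, and enumerate the visits to $(s,a)$ recorded across episodes $t\in[k-1]$, all horizon indices, and all sampled individuals, in chronological order as $j=1,\dots,N$ with $N=N^\vartheta_k(s,a)$. For the $j$-th visit we observe a reward $R_j$ (mean $r^{*\vartheta}(s,a)$, lying in $[0,1]$) and a next state $S'_j$. Writing $\bar V=\sum_{s'}p^{*\vartheta}(s'|s,a)\,V^{\pi^{*\vartheta},p^{*\vartheta}}_{r^{*\vartheta}}(s',h+1)$, the quantity inside the absolute value in the definition of validity equals $\tfrac1N\sum_{j=1}^N X_j$ with
$$X_j=\big(R_j-r^{*\vartheta}(s,a)\big)+\big(V^{\pi^{*\vartheta},p^{*\vartheta}}_{r^{*\vartheta}}(S'_j,h+1)-\bar V\big).$$
The central point is that $V^{\pi^{*\vartheta},p^{*\vartheta}}_{r^{*\vartheta}}(\cdot,h+1)$ is a fixed function of the (unknown but non-random) optimal policy, true kernel, and true reward, so it does not depend on the collected data; moreover, because the kernel $p^{*\vartheta}$ is time-invariant, the conditional law of $S'_j$ given the past is $p^{*\vartheta}(\cdot|s,a)$ regardless of which horizon index the $j$-th visit occurred at. Hence, relative to the natural filtration generated by the interaction history, $\{X_j\}$ is a martingale difference sequence, and since $|R_j-r^{*\vartheta}|\le 1$ and $V\in[0,H]$ give $|X_j|\le H+1\le 2H$, each term is bounded by $2H$. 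This is what legitimizes aggregating visits across time steps against the single target $V(\cdot,h+1)$, and I expect it to be the most delicate point to state correctly.

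Next I would apply Azuma--Hoeffding with range parameter $2H$: for a fixed count, $\bP[\,|\tfrac1N\sum_j X_j|>2H\sqrt{2\ln(16SAHk^2/\delta)/N}\,]\le \delta/(8SAHk^2)$. The subtlety is that $N=N^\vartheta_k(s,a)$ is itself random, so I would invoke the anytime (uniform-in-$N$) martingale concentration in the style of \cite{brantley2020constrained}, applied to the single chronological martingale per $(s,a,h,\vartheta)$ and then evaluated at the realized count; this is what makes the bound hold for \emph{every} $k$ at once. Separately, the trivial bounds $|\hat r^\vartheta_k-r^{*\vartheta}_k|\le 1$ and $|\sum_{s'}(p^\vartheta_k-p^{*\vartheta})V|\le H$ (after centering $V$) show the left-hand side never exceeds $2H$, which justifies taking the minimum with $2H$ in $\hat b^\vartheta_k$ and also covers the degenerate case where $(s,a)$ is unvisited and $N^\vartheta_k(s,a)=\max\{1,\cdots\}=1$.

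Finally I would take a union bound over the $SA$ state--action pairs, the $H$ steps, the two groups, and all episodes $k\ge 1$, using $\sum_{k\ge 1}k^{-2}=\pi^2/6<2$ so that the per-term failure probability $\delta/(8SAHk^2)$ sums to at most $\delta$. The main obstacle is the random-stopping/uniform-in-$N$ issue in the third step: obtaining a single high-probability event that simultaneously controls the deviation for the data-dependent sample counts $N^\vartheta_k(s,a)$ across all episodes, rather than for a fixed $N$, which is exactly where the anytime martingale inequality (and the $k^2$ inside the logarithm) is needed.
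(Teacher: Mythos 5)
Your proposal is correct and follows essentially the same route as the paper's proof: a Hoeffding-type concentration bound on the combined reward-plus-transition deviation against the fixed target $V^{\pi^{*\vartheta},p^{*\vartheta}}_{r^{*\vartheta}}(\cdot,h+1)$, the trivial $2H$ cap from boundedness, and a union bound over $(s,a)$, $h$, the two groups, and episodes via $\sum_k k^{-2}$. If anything, you are more careful than the paper's own argument on the one delicate point (the randomness of $N^\vartheta_k(s,a)$, handled via the martingale/anytime formulation), which the paper's proof passes over silently.
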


\paragraph{Choice of $\hat c_{k,h}$ and $\hat d_{k,h}$.} For $\{\hat c_{k,h}\}_{k,h}$ and $\{\hat d_{k,h}\}_{k,h}$, we require them to be \textit{compatible}. 
\begin{definition}[Compatibility]
The sequence $\{\hat c_{k,h}\}_{k,h}$ is {\em compatible} if for all $h\in[H], k\in[K],$ $$|\bP^{\pi^{*\alpha},p_k^{\alpha}}(a_h=1)-\bP^{\pi^{*\beta},p_k^{\beta}}(a_h=1)|\le \hat c_{k,h}.$$
%
The sequence $\{\hat d_{k,h}\}_{k,h}$ is {\em compatible} if for all $h\in[H], k\in[K]$, $$|\bP^{\pi^{*\alpha},p_k^{\alpha}}(a_h=1|y_h=1)-\bP^{\pi^{*\beta},p_k^{\beta}}(a_h=1|y_h=1)|\le \hat d_{k,h}.$$
\end{definition}

Briefly speaking, we hope that when substituting $p^{*\vartheta}$ to $p^{\vartheta}_k$, that $\hat c_{k,h}$ and $\hat d_{k,h}$ can control the fairness constraints violation. 
Let us use $S$ to denote $|\cS|$ and $A$ to denote $|\cA|$.
\begin{lemma}\label{lm:c_kd_k}
Denote $N^{\vartheta,\min}_{k}=\min_{s,a}N^\vartheta_k(s,a)$. For any $\{\epsilon_k\}_{k=1}^K$, with probability at least $1-\delta$, we take
$$\hat c_{k,h}=\sum_{\vartheta\in\{\alpha,\beta\}}H\sqrt{\frac{2S\ln(16SAHk^2/(\epsilon_k\delta))}{N^{\vartheta,\min}_{k}}}+2\epsilon_k HS.$$
Then, the sequence $\{\hat c_{k,h}\}_{k,h}$ is compatible.
\end{lemma}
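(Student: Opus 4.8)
The plan is to reduce the compatibility inequality to a per-group perturbation bound on the step-$h$ acceptance probability, and then control that perturbation by a telescoping occupancy-measure recursion together with a concentration bound on the estimated kernels. First I would exploit feasibility of the optimal policy: since $\bpi^*=(\pi^{*\alpha},\pi^{*\beta})$ solves the ideal DP-constrained program, it satisfies the true-kernel equality $\bP^{\pi^{*\alpha},p^{*\alpha}}(a_h=1)=\bP^{\pi^{*\beta},p^{*\beta}}(a_h=1)$. Inserting this into the left-hand side of the compatibility condition and applying the triangle inequality gives
\[
|\bP^{\pi^{*\alpha},p_k^{\alpha}}(a_h=1)-\bP^{\pi^{*\beta},p_k^{\beta}}(a_h=1)|\le \sum_{\vartheta\in\{\alpha,\beta\}}\big|\bP^{\pi^{*\vartheta},p_k^{\vartheta}}(a_h=1)-\bP^{\pi^{*\vartheta},p^{*\vartheta}}(a_h=1)\big|,
\]
so it suffices to bound, for each fixed group $\vartheta$, the change in the step-$h$ acceptance probability caused by replacing $p^{*\vartheta}$ by $p_k^{\vartheta}$ under the fixed policy $\pi^{*\vartheta}$.

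Next, writing $d_h^{p}$ for the step-$h$ state-occupancy measure induced by $\pi^{*\vartheta}$ and a kernel $p$, we have $\bP^{\pi^{*\vartheta},p}(a_h=1)=\sum_{s}d_h^{p}(s)\,\pi^{*\vartheta}(a=1\mid s)$; since $\pi^{*\vartheta}(a=1\mid\cdot)\in[0,1]$, the per-group term is at most $\|d_h^{p_k^{\vartheta}}-d_h^{p^{*\vartheta}}\|_1$. I would then establish, by adding and subtracting $d_h^{p^{*\vartheta}}(s)\pi^{*\vartheta}(a\mid s)p_k^{\vartheta}(s'\mid s,a)$ inside the update for $d_{h+1}$, the one-step recursion
\[
\|d_{h+1}^{p_k^{\vartheta}}-d_{h+1}^{p^{*\vartheta}}\|_1\le \|d_h^{p_k^{\vartheta}}-d_h^{p^{*\vartheta}}\|_1+\sum_{s,a}d_h^{p^{*\vartheta}}(s)\pi^{*\vartheta}(a\mid s)\,\|p_k^{\vartheta}(\cdot\mid s,a)-p^{*\vartheta}(\cdot\mid s,a)\|_1.
\]
Because both trajectories start from the common state $s_0$, unrolling from $d_1^{p_k^{\vartheta}}=d_1^{p^{*\vartheta}}$ telescopes this into a sum over $h'=1,\dots,h-1$ of the occupancy-weighted transition error; bounding the weights by their total mass $1$ and using $h-1\le H$ yields $\|d_h^{p_k^{\vartheta}}-d_h^{p^{*\vartheta}}\|_1\le H\max_{s,a}\|p_k^{\vartheta}(\cdot\mid s,a)-p^{*\vartheta}(\cdot\mid s,a)\|_1$.

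It then remains to concentrate the transition error. For a $(s,a)$ that has been visited, $p_k^{\vartheta}(\cdot\mid s,a)$ is an empirical distribution over the $S$ next-states, so a Weissman-type $L_1$-deviation inequality gives $\|p_k^{\vartheta}(\cdot\mid s,a)-p^{*\vartheta}(\cdot\mid s,a)\|_1\lesssim\sqrt{S/N^\vartheta_k(s,a)}$ up to logarithmic factors. Requiring this simultaneously over all $(s,a)$, all $h$, and all episodes $k$ (an anytime guarantee, whence the $k^2$ so that $\sum_k k^{-2}$ converges) contributes the $\ln(16SAHk^2/\cdot)$ scale, and replacing $N^\vartheta_k(s,a)$ by the uniform lower bound $N^{\vartheta,\min}_k$ produces the stated main term $\sum_{\vartheta}H\sqrt{2S\ln(16SAHk^2/(\epsilon_k\delta))/N^{\vartheta,\min}_k}$ after summing the two groups.

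The hard part will be the $\epsilon_k$-dependent terms, which come from the state–action pairs that are visited too rarely — in particular never-visited pairs, where $p_k^{\vartheta}(\cdot\mid s,a)\equiv 0$ is not even a distribution and the deviation inequality does not apply. I would handle these by a truncation at level $\epsilon_k$: spending an $\epsilon_k$ fraction of the failure budget (which is exactly what inserts the extra $1/\epsilon_k$ inside the logarithm) gives a good event on which the sufficiently-visited kernels concentrate as above, while the residual contribution of the rarely-visited pairs, accumulated across the $S$ next-states and the $H$ steps of the occupancy recursion, is absorbed into the additive $2\epsilon_k HS$. Making this split uniform over all episodes and both groups, and reconciling the true-kernel occupancy weights appearing in the recursion with the empirical visitation counts $N^\vartheta_k$ that drive the concentration, is the only genuinely delicate point; the remaining steps are the triangle inequality, the telescoping recursion, and a routine union bound.
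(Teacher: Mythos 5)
Your proposal is correct in outline but follows a genuinely different route from the paper. The paper reduces the per-group perturbation $\bigl|\bP^{\pi^{*\vartheta},p_k^{\vartheta}}(a_{h^*}=1)-\bP^{\pi^{*\vartheta},p^{*\vartheta}}(a_{h^*}=1)\bigr|$ to a sum of Bellman errors via the simulation lemma (Lemma~\ref{lm:simulation}), applied to the artificial reward $m_{h^*}=\mathbbm{1}\{a_h=1\}$ at step $h^*$; the Bellman error $\sum_{s'}(p_k-p^*)(s'|s,a)V^{\pi^*,p_k}_{m_{h^*}}(s')$ involves a \emph{data-dependent} value function, which forces an $\epsilon$-net over $[0,1]^S$, a Hoeffding bound per net element, and a union bound over the net of cardinality $(1/\epsilon_k)^S$ --- this is exactly where both the factor $S$ inside the logarithm, the $1/\epsilon_k$ inside the logarithm, and the additive $\epsilon_k S$ discretization term come from. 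You instead telescope the state-occupancy measures and control $\max_{s,a}\lVert p_k^{\vartheta}(\cdot|s,a)-p^{*\vartheta}(\cdot|s,a)\rVert_1$ by a Weissman-type $L_1$ bound; since the $L_1$ kernel error is a value-function-free quantity, you need no net at all, and your resulting bound $H\sqrt{2(S\ln 2+\ln(SAk^2/\delta'))/N^{\vartheta,\min}_k}$ is in fact \emph{smaller} than the stated $\hat c_{k,h}$, so compatibility still follows (a larger valid choice of $\hat c_{k,h}$ remains valid). The one point where your account diverges from reality rather than just from the paper is the provenance of the $\epsilon_k$ terms: they are not a failure-probability budget for rarely visited pairs but an artifact of the paper's $\epsilon$-net discretization, and in your approach they simply need not appear; the never-visited pairs (where $N^\vartheta_k=\max\{1,0\}$ and $p^\vartheta_k(\cdot|s,a)\equiv 0$) are handled trivially because the concentration bound at $N=1$ already exceeds the maximal possible $L_1$ deviation, an issue the paper's Hoeffding-based argument shares. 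Both routes land on the same $H\sqrt{S/N^{\vartheta,\min}_k}$-type rate; yours is arguably cleaner here, while the paper's simulation-lemma machinery is reused verbatim for the conditional-probability bound in Lemma~\ref{lm:d}, where the indicator reward $\mathbbm{1}\{a_h=1,y_h=1\}$ is swapped in with no other change.
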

Similarly, for $\hat d_{k,h}$, we have the following lemma. 
\begin{lemma}\label{lm:d}
Denote $p^{\vartheta,\min}_{k}=\min_{s,a}p^\vartheta_k(y=1|s,a)$. For any $\{\epsilon_k\}_{k=1}^K$, with probability at least $1-\delta$, we take
$$\hat d_{k,h}=\sum_{\vartheta\in\{\alpha,\beta\}}\frac{3H\sqrt{\frac{2S\ln(32SAk^2/(\epsilon_k\delta))}{N^{\vartheta,\min}_{k}}}+3\epsilon_k HS}{p^{\vartheta,\min}_{k}\left(p^{\vartheta,\min}_{k}-\sqrt{\frac{4\ln 2+2\ln(4SAk^2/\delta)}{N^{\vartheta,\min}_{k}}}\right)}$$
if $p^{\vartheta,\min}_{k}>\sqrt{\frac{4\ln 2+2\ln(4SAk^2/\delta)}{N^{\vartheta,\min}_{k}}}$; Otherwise, we set $\hat d_{k,h}=1$.
Then, the sequence $\{\hat d_{k,h}\}_{k,h}$ is compatible.
\end{lemma}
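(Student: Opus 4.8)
The plan is to reduce the two-sided EqOpt gap to a sum of single-group perturbation errors, and then to control how the estimated kernel $p_k^\vartheta$ perturbs a \emph{conditional} acceptance probability relative to the true kernel $p^{*\vartheta}$. Write $q_h^\vartheta \defeq \bP^{\pi^{*\vartheta},p_k^\vartheta}(a_h=1\mid y_h=1)$ and $q_h^{*\vartheta}\defeq \bP^{\pi^{*\vartheta},p^{*\vartheta}}(a_h=1\mid y_h=1)$. Since the optimal policy $\bpi^*$ is feasible for the true model, it satisfies the exact EqOpt constraint, i.e. $q_h^{*\alpha}=q_h^{*\beta}$; hence by the triangle inequality the left-hand side of compatibility is at most $\sum_{\vartheta}|q_h^\vartheta - q_h^{*\vartheta}|$, and it suffices to bound each group's term by the corresponding summand of $\hat d_{k,h}$.

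For a fixed group, express the conditional probability as a ratio $q_h^\vartheta = N_h^\vartheta/D_h^\vartheta$, with $N_h^\vartheta \defeq \bP^{\pi^{*\vartheta},p_k^\vartheta}(a_h=1,y_h=1)$ and $D_h^\vartheta \defeq \bP^{\pi^{*\vartheta},p_k^\vartheta}(y_h=1)$, and likewise $q_h^{*\vartheta}=N_h^{*\vartheta}/D_h^{*\vartheta}$ under $p^{*\vartheta}$. I would then apply the quotient bound
\[
\Big|\frac{N_h^\vartheta}{D_h^\vartheta}-\frac{N_h^{*\vartheta}}{D_h^{*\vartheta}}\Big|
\le \frac{|N_h^\vartheta-N_h^{*\vartheta}|+|D_h^\vartheta-D_h^{*\vartheta}|}{D_h^\vartheta D_h^{*\vartheta}},
\]
which follows by writing the difference over a common denominator and using $0\le N_h^{*\vartheta}\le D_h^{*\vartheta}\le 1$. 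This isolates a numerator controlled by kernel perturbation and a denominator controlled by a reachability lower bound.

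The denominator is handled by two observations. First, because every state visited at the relevant steps is reached through one transition and $p_k^\vartheta(y=1\mid s,a)\ge p^{\vartheta,\min}_k$ for all $(s,a)$, marginalizing gives $D_h^\vartheta\ge p^{\vartheta,\min}_k$. Second, a one-dimensional concentration bound on the Bernoulli parameter $p_k^\vartheta(y=1\mid s,a)$ around $p^{*\vartheta}(y=1\mid s,a)$, uniform over the $SA$ pairs via a union bound, shows that with probability at least $1-\delta$ these two kernels differ by at most $\sqrt{(4\ln 2+2\ln(4SAk^2/\delta))/N^{\vartheta,\min}_k}$; hence $D_h^{*\vartheta}\ge p^{\vartheta,\min}_k-\sqrt{(4\ln 2+2\ln(4SAk^2/\delta))/N^{\vartheta,\min}_k}$. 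Multiplying yields exactly the denominator appearing in $\hat d_{k,h}$, and the edge case where this lower bound is non-positive is precisely where the statement falls back to $\hat d_{k,h}=1$, which trivially bounds the difference of two numbers in $[0,1]$.

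The numerator is the main obstacle, and is where I would reuse the machinery behind Lemma~\ref{lm:c_kd_k}. Both $|N_h^\vartheta-N_h^{*\vartheta}|$ and $|D_h^\vartheta-D_h^{*\vartheta}|$ are differences of (joint, resp.\ marginal) event probabilities computed under the \emph{same} policy $\pi^{*\vartheta}$ but under $p_k^\vartheta$ versus $p^{*\vartheta}$, so each is bounded by the $\ell_1$ distance between the time-$h$ state distributions $\mu_h^{\pi^{*\vartheta},p_k^\vartheta}$ and $\mu_h^{\pi^{*\vartheta},p^{*\vartheta}}$. Unrolling the dynamics step by step, this $\ell_1$ distance telescopes into a sum over at most $H$ steps of the expected per-step kernel error $\|p_k^\vartheta(\cdot\mid s,a)-p^{*\vartheta}(\cdot\mid s,a)\|_1$; bounding that error by the same covering/concentration argument as in Lemma~\ref{lm:c_kd_k} produces an $H\sqrt{2S\ln(32SAk^2/(\epsilon_k\delta))/N^{\vartheta,\min}_k}+\epsilon_k HS$ term per quantity, and the constant $3$ in the numerator of $\hat d_{k,h}$ absorbs the sum of the two such terms together with the slack from the quotient bound. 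The care points I expect to spend the most effort on are (i) keeping the conditioning on $y_h=1$ consistent while unrolling, so that the transition error is propagated through the correct occupancy measure, and (ii) assembling all the high-probability events---the concentration events for the transition kernels feeding the numerator and the $y$-marginal concentration event feeding the denominator---under a single union bound so that the full statement holds with probability at least $1-\delta$ simultaneously for all $k$ and $h$.
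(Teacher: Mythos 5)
Your proposal is correct and follows essentially the same route as the paper's proof: reduce to per-group perturbations via the exact EqOpt feasibility of $\bpi^*$, apply the quotient bound to the conditional probability, control the numerator terms with the simulation-lemma/telescoping machinery of Lemma~\ref{lm:c_kd_k} (your $\ell_1$-unrolling is that same decomposition), and lower-bound the denominator via one-step marginalization plus Bernoulli concentration on $p_k^\vartheta(y=1\mid s,a)$, with the fallback $\hat d_{k,h}=1$ when that bound degenerates. The only cosmetic difference is the bookkeeping of the constant $3$, which the paper obtains as $1+2$ by bounding the $y$-marginal difference through a sum over the two actions; your direct bound on the marginal is at least as tight.
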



\subsection{Main theorems}

In this subsection, we provide our formal theoretical guarantees for the reward regret and fairness constraints violation. We require technical Assumption~\ref{ass:kernel}.
%
%
\begin{assumption}\label{ass:kernel}
(a). For all $(s,a,s')\in\cS\times\cA\times\cS$, there exists a universal constant $C>0$, such that $p^{*\vartheta}(s'|s,a)\ge C$ for $\vartheta=\{\alpha,\beta\}$. (b). For all $(x,a)\in\cX\times\cA$, there exists a universal constant $\tilde C$, such that $\pi^{*\vartheta}(a|x)\ge \tilde C$.
\end{assumption}
Assumption~\ref{ass:kernel}  implies irreducibility of the Markov process and  ensures feasibility of our
optimization problems (see Appendix~\ref{app:feasibility}). 
Recall that at episode $k\in[K]$, for each group $\vartheta$, we have $n^\vartheta_{k}\ge 1$ individuals drawn for each group $\vartheta$. 
And as a concrete exemplified choice of $\eta_k$ and $\epsilon_k$, we take $\eta_k=k^{-\frac{1}{3}}$ and $\epsilon_k=(kHS)^{-1}$. 

\paragraph{Reward regret.} For the reward regret, for either demographic parity or equalized opportunity, we can provide the following theoretical guarantee. Recall for two positive sequences $\{a_j\}$ and $\{b_j\}$, we write $a_j =\cO(b_j)$ if $\lim_{j\rightarrow\infty}(a_j/b_j) < \infty$. 

\begin{theorem}\label{thm:reward}
For $type\in\{\mathit{DP},\mathit{EqOpt}\}$, with probability at least $1-\delta$, there exists a threshold $T=\cO\left(\Big(\frac{H\ln(SA/\delta)}{n_k}\Big)^{3}\right)$, such that for all $k\ge T$,
$$\cR^{\text{type}}_{\mathrm{reg}}(k)=\cO\left(Hk^{-\frac{1}{3}}\sqrt{HS\ln(S^2AH^2k^3/\delta)}\right).$$
\end{theorem}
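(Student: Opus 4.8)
The plan is to follow the optimism-under-uncertainty template for constrained episodic RL (as in~\cite{brantley2020constrained}), with the twist that the exploration floor $\eta_k$ inside $\Pi_k$ plays a double role: it guarantees feasibility of the true optimum $\bpi^*$ and it forces the visitation counts to grow fast enough to pin down the rate. Throughout I would condition on a single good event of probability at least $1-\delta$ on which the bonus is valid (Lemma~\ref{lm:b_k}), the slacks $\{\hat c_{k,h}\}$, $\{\hat d_{k,h}\}$ are compatible (Lemmas~\ref{lm:c_kd_k} and~\ref{lm:d}), and the empirical kernels concentrate in $\ell_1$, i.e. $\|p^\vartheta_k(\cdot|s,a)-p^{*\vartheta}(\cdot|s,a)\|_1 \lesssim \sqrt{S\ln(SAHk/\delta)/N^\vartheta_k(s,a)}$ for all $(s,a,\vartheta)$; a union bound over $k$ keeps the total failure probability at $\delta$.

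The first move is an optimism-plus-feasibility comparison. Fixing $\bpi^*$ and expanding $V^{\pi^{*\vartheta},p^\vartheta_k}_{r^\vartheta_k}(s_0,1)-V^{\pi^{*\vartheta},p^{*\vartheta}}_{r^{*\vartheta}}(s_0,1)$ by the value-difference (simulation) lemma, each per-step term equals $\hat b^\vartheta_k + \big(\hat r^\vartheta_k - r^{*\vartheta} + \sum_{s'}(p^\vartheta_k-p^{*\vartheta})V^{\pi^{*\vartheta},p^{*\vartheta}}_{r^{*\vartheta}}\big)\ge 0$ precisely by the validity inequality, giving $\sum_h \cR_{k,h}(\p_k,\bpi^*)\ge \sum_h \cR^*_h(\p^*,\bpi^*)$. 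I would then argue that $\bpi^*$ is feasible for the surrogate program once $k\ge T$: the threshold is large enough that $\eta_k=k^{-1/3}\le \tilde C$, so Assumption~\ref{ass:kernel}(b) gives $\bpi^*\in\Pi_k$, while compatibility gives that $\bpi^*$ meets the relaxed fairness constraint (with slack $\hat c_{k,h}$ for DP or $\hat d_{k,h}$ for EqOpt) under $p^\vartheta_k$. Optimality of $\bpi_k$ in the surrogate then yields $\sum_h \cR_{k,h}(\p_k,\bpi_k)\ge \sum_h \cR_{k,h}(\p_k,\bpi^*)\ge \sum_h \cR^*_h(\p^*,\bpi^*)$; this is the only place the two constraint types are handled differently.

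The second move bounds the overestimation that $\bpi_k$ itself accrues. Applying the value-difference lemma to $\bpi_k$ gives $\sum_h \cR_{k,h}(\p_k,\bpi_k)-\sum_h \cR^*_h(\p^*,\bpi_k)=\sum_\vartheta p_\vartheta\sum_{h'}\bE^{\pi^\vartheta_k,p^\vartheta_k}\big[(\hat r^\vartheta_k-r^{*\vartheta})+\hat b^\vartheta_k+\sum_{s'}(p^\vartheta_k-p^{*\vartheta})V^{\pi^\vartheta_k,p^{*\vartheta}}_{r^{*\vartheta}}\big]$. Here the transition term cannot invoke validity, since $V^{\pi^\vartheta_k,p^{*\vartheta}}$ is data-dependent; I instead bound it by $H\|p^\vartheta_k-p^{*\vartheta}\|_1\lesssim H\sqrt{S\ln(SAHk/\delta)/N^\vartheta_k}$ via the $\ell_1$ concentration, which is the source of the $\sqrt{S}$ factor, while the reward and bonus terms are each $\lesssim H\sqrt{\ln(SAHk/\delta)/N^\vartheta_k}$. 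Summing over the $H$ steps gives an overestimation of order $H^2\sqrt{S\ln(SAHk/\delta)/N^{\vartheta,\min}_k}$, and chaining with the previous paragraph controls $\sum_h\big(\cR^*_h(\p^*,\bpi^*)-\cR^*_h(\p^*,\bpi_k)\big)$ by the same order.

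The crux, and the step I expect to be \emph{hardest}, is converting the per-episode action floor into a uniform lower bound on the counts. By Assumption~\ref{ass:kernel}(a) every state is reached with probability bounded below by a constant, and since episode $t$ runs a policy in $\Pi_t$ with action floor $\eta_t=t^{-1/3}$, every $(s,a)$ is visited in episode $t$ with probability $\gtrsim \eta_t$; summing the expected counts and applying a multiplicative Chernoff bound gives $N^{\vartheta,\min}_k\gtrsim n_k\sum_{t<k}t^{-1/3}\asymp n_k k^{2/3}$ for $k\ge T$, where $T=\cO\big((H\ln(SA/\delta)/n_k)^3\big)$ is exactly the threshold making these high-probability count bounds — and, for EqOpt, the denominator condition $p^{\vartheta,\min}_k>\sqrt{(4\ln2+2\ln(4SAk^2/\delta))/N^{\vartheta,\min}_k}$ of Lemma~\ref{lm:d} — hold. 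Substituting $N^{\vartheta,\min}_k\gtrsim n_k k^{2/3}$ into the order $H^2\sqrt{S\ln(SAHk/\delta)/N^{\vartheta,\min}_k}$ and dividing by $H$ per the definition of $\cR^{\text{type}}_{\mathrm{reg}}(k)$ produces the claimed $\cO\big(Hk^{-1/3}\sqrt{HS\ln(S^2AH^2k^3/\delta)}\big)$ rate, uniformly over $type\in\{\mathit{DP},\mathit{EqOpt}\}$.
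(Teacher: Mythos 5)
Your proposal is correct and follows essentially the same route as the paper's proof: condition on the good event where bonuses are valid and slacks compatible, use optimism plus feasibility of $\bpi^*$ in the surrogate program (once $\eta_k\le\tilde C$) to reduce the regret to the Bellman-error sum of $\bpi_k$ under $(\p_k,\br_k)$, bound that via $\ell_1$ concentration of the empirical kernel (the source of the $\sqrt{S}$), and close with a high-probability lower bound $N^{\vartheta,\min}_k\gtrsim n_k\eta_k k$ driven by Assumption~\ref{ass:kernel} and the action floor. The only cosmetic differences are that the paper establishes optimism by backward induction rather than the simulation lemma, and handles the within-episode dependence in the count lower bound with a concentration inequality for non-stationary Markov chains \cite{kontorovich2008concentration} rather than a per-episode Chernoff bound.
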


By Theorem~\ref{thm:reward}, our algorithms for each of the group fairness notions can ensure vanishing reward regrets when $k$ goes to infinity, which implies that the performance in regard to regret reward improves as the number of episodes increases.

\paragraph{Fairness constraints violation.} For fairness violation, we have the following Theorem~\ref{thm:constraint}.
\begin{theorem}\label{thm:constraint}
For $type\in\{\mathit{DP},\mathit{EqOpt}\}$, with probability at least $1-\delta$, there exists a threshold $T=\cO\left(\Big(\frac{H\ln(SA/\delta)}{n_k}\Big)^{3}\right)$, such that for all $k\ge T$,
$$\cC_{\mathrm{reg}}^{\text{type}}(k)\le \cO\left( k^{-\frac{1}{3}}\sqrt{SH\ln(S^2HAk^3/\delta)}\right).$$
\end{theorem}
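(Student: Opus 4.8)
The plan is to fix the constraint type (I treat demographic parity; equalized opportunity is analogous, as explained at the end) and bound the true fairness gap at each time step $h$ before averaging over $h\in[H]$. The crux is that the learned policy $\bpi_k$ satisfies the surrogate constraint only under the \emph{estimated} kernels $p_k^\vartheta$, whereas $\cC_{\mathrm{reg}}^{\mathit{DP}}(k)$ is measured under the \emph{true} kernels $p^{*\vartheta}$. I would therefore insert the estimated-kernel quantities and split via the triangle inequality:
\begin{align*}
\big|\bP^{\pi^{\alpha}_k,p^{*\alpha}}(a^\alpha_{k,h}=1)-\bP^{\pi^\beta_k,p^{*\beta}}(a^\beta_{k,h}=1)\big|
&\le \big|\bP^{\pi^{\alpha}_k,p^{*\alpha}}(a^\alpha_{k,h}=1)-\bP^{\pi^{\alpha}_k,p^{\alpha}_k}(a^\alpha_{k,h}=1)\big|\\
&\quad+ \big|\bP^{\pi^{\alpha}_k,p^{\alpha}_k}(a^\alpha_{k,h}=1)-\bP^{\pi^{\beta}_k,p^{\beta}_k}(a^\beta_{k,h}=1)\big|\\
&\quad+ \big|\bP^{\pi^{\beta}_k,p^{\beta}_k}(a^\beta_{k,h}=1)-\bP^{\pi^{\beta}_k,p^{*\beta}}(a^\beta_{k,h}=1)\big|.
\end{align*}
The middle term is immediate: $\bpi_k$ is feasible for the practical DP program, so it obeys the surrogate constraint with slack, bounding this piece by $\hat c_{k,h}$.

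For the two perturbation terms I would run a simulation-lemma/telescoping argument. Under a fixed policy $\pi^\vartheta_k$, the marginal action probability at step $h$ is a linear functional of the state-visitation distribution at step $h$; replacing $p^{*\vartheta}$ by $p^\vartheta_k$ perturbs that distribution by an amount that telescopes into the accumulated one-step kernel error $\sum_{h'<h}\max_{s,a}\|p^\vartheta_k(\cdot|s,a)-p^{*\vartheta}(\cdot|s,a)\|_1$. Each one-step $\ell_1$ error is controlled with high probability by $\sqrt{S\ln(\cdots)/N_k^{\vartheta,\min}}$ through the same empirical-distribution concentration that underlies Lemma~\ref{lm:c_kd_k}, so each perturbation term is $\cO\!\big(H\sqrt{S\ln(\cdots)/N_k^{\vartheta,\min}}\big)$, the same order as $\hat c_{k,h}$. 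The hard part will be precisely this propagation of kernel error across $h$ steps together with establishing a \emph{uniform} lower bound on the visitation counts $N_k^{\vartheta,\min}$, since the rate hinges entirely on how fast these counts grow.

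To convert the $N_k^{\vartheta,\min}$ dependence into the stated $k^{-1/3}$ rate I would exploit the exploration floor $\pi^\vartheta(a{=}1|x)\ge\eta_k$ built into $\Pi_k$ together with Assumption~\ref{ass:kernel}: the kernel lower bound $C$ makes every state reachable at every step, while the floor $\eta_t=t^{-1/3}$ forces every action in episode $t$ to be taken with probability at least $\eta_t$, so each $(s,a)$ is visited with per-step probability $\gtrsim C\eta_t$. Summing the $n_t^\vartheta\ge 1$ trajectories and $H$ steps over $t<k$ and concentrating the counts around their means gives $N_k^{\vartheta,\min}\gtrsim n_k H C\sum_{t<k}t^{-1/3}\asymp n_k H C\,k^{2/3}$ with high probability. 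Substituting this, and using $\epsilon_k=(kHS)^{-1}$ (which reduces the $2\epsilon_k HS$ term of Lemma~\ref{lm:c_kd_k} to a negligible $\cO(1/k)$), yields $H\sqrt{S\ln(\cdots)/N_k^{\vartheta,\min}}=\cO\!\big(k^{-1/3}\sqrt{SH\ln(S^2HAk^3/\delta)}\big)$ for both $\hat c_{k,h}$ and the perturbation terms; averaging over $h$ preserves the rate. The threshold $T=\cO\big((H\ln(SA/\delta)/n_k)^3\big)$ is exactly the smallest $k$ at which this high-probability count lower bound takes effect.

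The EqOpt case is structurally identical: marginal action probabilities become the conditionals $\bP(a_{k,h}{=}1\mid y_{k,h}{=}1)$, the slack $\hat c_{k,h}$ becomes $\hat d_{k,h}$, and Lemma~\ref{lm:c_kd_k} is replaced by Lemma~\ref{lm:d}. The one extra subtlety is the conditioning: the ratio form divides by $\bP(y_{k,h}{=}1)$, so the perturbation bound picks up the factor $1/p_k^{\vartheta,\min}$ appearing in Lemma~\ref{lm:d}, and the estimate is meaningful only once $p_k^{\vartheta,\min}>\sqrt{(4\ln 2+2\ln(4SAk^2/\delta))/N_k^{\vartheta,\min}}$, i.e.\ once $k\ge T$; before that $\hat d_{k,h}$ is set to $1$ and no guarantee is claimed. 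Since $p_k^{\vartheta,\min}$ is bounded below by a constant under Assumption~\ref{ass:kernel}, the denominator is $\Theta(1)$ for $k\ge T$ and the EqOpt rate matches the DP rate.
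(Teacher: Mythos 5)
Your proposal is correct and follows essentially the same route as the paper: the same triangle-inequality decomposition into the surrogate slack $\hat c_{k,h}$ (resp.\ $\hat d_{k,h}$) plus two kernel-perturbation terms, the same reuse of the simulation-lemma argument from Lemmas~\ref{lm:c_kd_k} and~\ref{lm:d} with $\bpi_k$ in place of $\bpi^*$, the same count lower bound $N_k^{\vartheta,\min}\gtrsim C\eta_k n_k(H-1)(k-1)\asymp n_kHk^{2/3}$ driven by the exploration floor and Assumption~\ref{ass:kernel}, and the same treatment of the EqOpt denominator. The only detail you gloss over is that concentrating the visitation counts requires a concentration inequality for dependent (within-episode Markov) sequences rather than a plain i.i.d.\ bound, which the paper handles via the Kontorovich-type result; this does not change the argument's structure or the rate.
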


By Theorem~\ref{thm:constraint},  our algorithms for each of the group fairness notions can ensure vanishing fairness violation when $k$ goes to infinity, which implies the performance in regard to fairness violations improves as  the number of episodes increases.

\section{Experiments}
\paragraph{Settings.}
We take $H=8$ for each episode, and update our policy every 
$k=2^{l}$ episodes, where $l=3,4,\ldots, 18$. We update our policy in this non-linear way  for reasons of computational cost, i.e., to reduce the number of optimization problems we need to solve while still collecting a large quantity of data. 
We choose the relaxation parameters $\hat{c}_{k,h}$ and $\hat{d}_{k,h}$ as defined in the previous sections. 
After each policy update, we use 8,000 episodes to evaluate the  new policy. We also produce confidence intervals by repeating each experiment five
times.


\paragraph{Estimation and Optimization Process.}

%
\begin{figure*}[h!]
\centering

\begin{subfigure}[b]{0.32\textwidth}
\centering
\includegraphics[width = \textwidth]{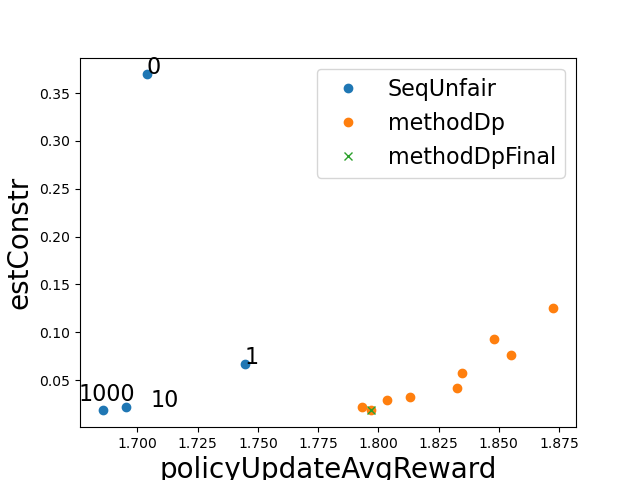}
\caption{DP FICO Pareto ~~~~~~~~~ \label{DpRealGenerativeMix_linear_estConstr_policyUpdateAvgReward}}
\end{subfigure}
\hfill
\begin{subfigure}[b]{0.32\textwidth}
\centering
\includegraphics[width = 1\textwidth]{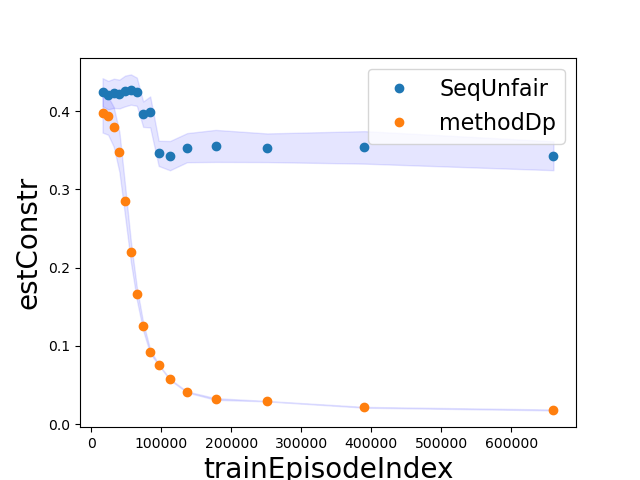}
\caption{DP FICO estConstr \label{DpRealGenerativeMix_linear_estConstr_trainEpisodeIndex}}
\end{subfigure}
\hfill
\begin{subfigure}[b]{0.32\textwidth}
\centering
\includegraphics[width = 1\textwidth]{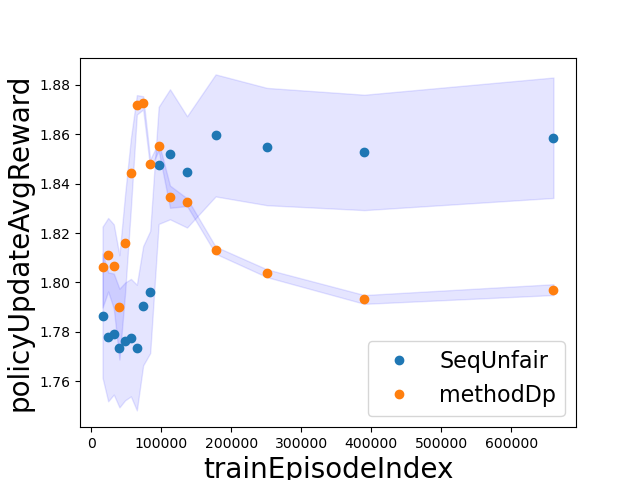}
\caption{DP FICO return \label{DpRealGenerativeMix_linear_policyUpdateAvgReward_trainEpisodeIndex}}
\end{subfigure}
\vskip\baselineskip
\begin{subfigure}[b]{0.32\textwidth}
\centering
\includegraphics[width = \textwidth]{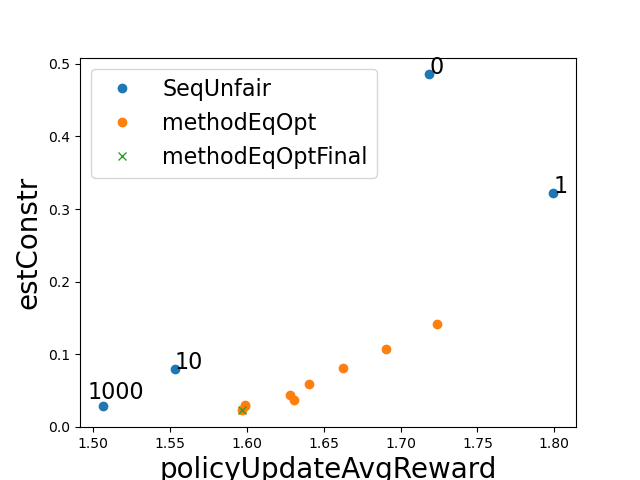}
\caption{EqOpt FICO Pareto ~~~~~~~~~ \label{EqOptRealGenerativeMix_linear_estConstr_policyUpdateAvgReward}}
\end{subfigure}
\hfill
\begin{subfigure}[b]{0.32\textwidth}
\centering
\includegraphics[width = 1\textwidth]{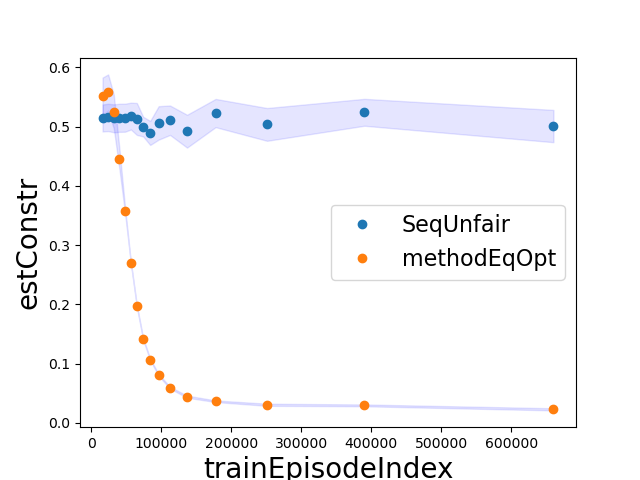}
\caption{EqOpt FICO estConstr ~~~\label{EqOptRealGenerativeMix_linear_estConstr_trainEpisodeIndex}}
\end{subfigure}
\hfill
\begin{subfigure}[b]{0.32\textwidth}
\centering
\includegraphics[width = 1\textwidth]{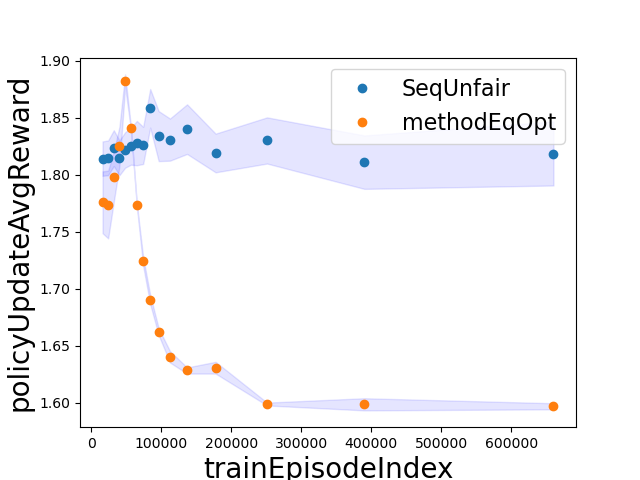}
\caption{EqOpt FICO return \label{EqOptRealGenerativeMix_linear_policyUpdateAvgReward_trainEpisodeIndex}}
\end{subfigure}
$$\vspace{-0.5in}$$
\caption{FICO data result. \ref{DpRealGenerativeMix_linear_estConstr_policyUpdateAvgReward} and \ref{EqOptRealGenerativeMix_linear_estConstr_policyUpdateAvgReward} for Pareto frontier, \ref{DpRealGenerativeMix_linear_estConstr_trainEpisodeIndex}\ref{DpRealGenerativeMix_linear_policyUpdateAvgReward_trainEpisodeIndex} for the constraint violation level in the training dynamics  \ref{EqOptRealGenerativeMix_linear_estConstr_trainEpisodeIndex} \ref{EqOptRealGenerativeMix_linear_policyUpdateAvgReward_trainEpisodeIndex} for the avergage episodic return over the training dynamics. In Pareto plots, the point with cross marker is the result for the final episode. The text close to points stand for penalty parameters}
\end{figure*}

We estimate transition probabilities and rewards using the counting method outlined above. These estimates are used as the input for our algorithm.
The optimization problems are non-convex, and the detailed optimization formulations are included in Appendix~\ref{app:experiment}. We use Gurobi to solve, and set the optimality-value tolerance, feasibility tolerance, and solving time limit as 1e-3, 1e-6, and 300 seconds, respectively. We use the barrier algorithm for all problems. 

\paragraph{FICO Data.} We adapt FICO score data\footnote{\url{https://docs.responsibly.ai/notebooks/demo-fico-analysis.html}} to form
our data generating process. The FICO score dataset contains data from non-Hispanic White and Black cohorts consist of $j \in \cX=\{0, \ldots, 4\}$, which represent normalized score $\{0, 25, 50, 75, 100\}$ and can be viewed as the feature. In particular, the FICO data provides empirical  distributions for credit scores of different sensitive groups, i.e. $\hat{\bP}_{\text{FICO}}(x^\vartheta = j)$,  along with an empirical qualification distribution conditioned on each score level $\hat{\bP}_{\text{FICO}}(y^\vartheta = y|x^\vartheta = j)$, where $y\in\{0,1\}$. 
%
We simulate the data generating process according to the empirical distributions stated above. For the population dynamics,  
we model the intial score distribution as
$\bP(x^\vartheta_0=j) = \hat{\bP}_{\text{FICO}}(x^\vartheta_0=j)$, according to
 the empirical  FICO distribution. 
For the initial qualification distribution conditioned on  score, 
$\bP(y^\vartheta_0=1|x_0=j)=\hat{\bP}_{\text{FICO}}(y^\vartheta = 1|x^\vartheta = j)$.
%
Then, we generate the underlying group-dependent and time invariant transition kernel $p^{*\vartheta}$ in the following way: we first set a distribution for $p^{*\vartheta}(x'=j'|x=j,y=w,a=v)$ for $j',j\in\cX$ and $w,v\in\{0,1\}$. Then, we set $p^{*\vartheta}(x',y'|x,y,a)$ as $p^{*\vartheta}(x'|x,y,a)\hat{\bP}_{\text{FICO}}(y^\vartheta = y'|x^\vartheta = x').$
More details about the data generating process is specified in Appendix~\ref{app:experiment}.

\paragraph{Reward function.} We choose a score-dependent reward, conditioned on the qualification level and decision as the following:
%
\[
    r^\vartheta(x_h, y_h, a_h) =  
\begin{cases}
    \beta^\vartheta_1 x_h , & \text{if } y_h = 1, a_h = 1;\\
    -\beta^\vartheta_2 x_h , & \text{if } y_h = 0, a_h = 1;\\
    0,                    & \text{if } a_h = 0,
\end{cases}
\]
where $\beta^\vartheta_1, \beta^\vartheta_2 \in (0,1)$. Our reward function is chosen based on the following considerations: in real life, the decision maker may give a higher loan amount to a candidate with a higher credit score. As a result, the decision maker benefits more when a qualified candidate  has higher score, i.e. higher reward value (e.g. higher total repayment interests) gained from a qualified candidate with higher score because of higher loan amount; the decision maker also suffers more (more negative reward gained) if the candidate with higher score is not qualified, because a larger amount of loan is not repaid. In our experiment, we set $ \beta^\alpha_1=0.1, \beta^\beta_1=0.9, \beta^\alpha_2=0.9, \beta^\beta_2=0.1$. 
This difference in reward function per group further reflects a decision maker who may,  potentially irrationally and  unfairly, benefit or penalize through their reward function for different groups, even for two individuals who otherwise have the same score $x$ and qualification $y$ (perhaps the decision maker worries that the model is not equally accurate or calibrated per group).



\paragraph{Baselines.}

We use the following policies as baselines:

1. Sequentially optimal policies without fairness constraints.
    %

2. Sequentially optimal policies with an objective that includes a fairness penalty term, which serves as an alternative method for our proposed optimization problem. 
   In this case, our optimization objective, for a particular kind of fairness constraint, is:
    $$\max_{\bpi\in\Pi_k}\sum_{h=1}^H\left[ \cR_{k,h} (\p_k,\bpi)+\mathit{FP}(\{\pi^\vartheta,p^{\vartheta}_k,s^\vartheta_h,a^\vartheta_h\}_{\vartheta};\lambda)\right],$$
    where $\mathit{FP}(\{\pi^\vartheta,p^{\vartheta}_k,s^\vartheta_h,a^\vartheta_h\}_{\vartheta};\lambda)$ is the fairness penalty and $\lambda$ is the penalty parameter.
    \begin{itemize}
        \item[(a).] \textit{Demographic parity}. For this case, we choose $\mathit{FP}(\{\pi^\vartheta,p^{\vartheta}_k,s^\vartheta_h,a^\vartheta_h\}_{\vartheta};\lambda)$ as $$\lambda(\bP^{\pi^\alpha,p_k^{\alpha}}(a^\alpha_{k,h}=1)-\bP^{\pi^\beta,p_k^{\beta}}(a^\beta_{k,h}=1))^2.$$
        \item[(b).] \textit{Equalized opportunity.} For this case, we choose $\mathit{FP}(\{\pi^\vartheta,p^{\vartheta}_k,s^\vartheta_h,a^\vartheta_h\}_{\vartheta};\lambda)$ as
        \begin{align*}
\lambda\Big(&\bP^{\pi^\alpha,p_k^{\alpha}}(a^\alpha_{k,h}=1|y^\alpha_{k,h}=1)-\bP^{\pi^\beta,p_k^{\beta}}(a^\beta_{k,h}=1|y^\beta_{k,h}=1)\Big)^2
        \end{align*}
    \end{itemize}

    %


\paragraph{Experimental results.}
Figure~\ref{DpRealGenerativeMix_linear_estConstr_policyUpdateAvgReward} shows the Pareto frontier in terms of episodic total return and episodic step-average fairness violation for demographic parity, and Figure \ref{EqOptRealGenerativeMix_linear_estConstr_policyUpdateAvgReward} shows the counterpart for equal opportunity.
Figure~\ref{DpRealGenerativeMix_linear_estConstr_trainEpisodeIndex} and~\ref{DpRealGenerativeMix_linear_policyUpdateAvgReward_trainEpisodeIndex} demonstrate the training dynamics of different algorithms
for demographic parity, and 
Figures~\ref{EqOptRealGenerativeMix_linear_estConstr_trainEpisodeIndex} and~\ref{EqOptRealGenerativeMix_linear_policyUpdateAvgReward_trainEpisodeIndex} demonstrate the counterpart
for equal opportunity.
Our proposed method outperforms the baselines in terms of Pareto frontiers, and converges to a stable level in terms of fairness violation over the training episodes. 
%
\if 0
We can our proposed method with snapshots from the training dynamics outperforms the surrogate penalty baseline with different penalty parameters and the myopic baselines with different parameters. \\over the episodes for demographic parity with respective to the episodic constraints and episodic return, respectively. We can see our algorithm converges after some episodes, and it achieves smaller episodic constraint and better episodic return than other methods. 
Figure \ref{EqOptRealGenerativeMix_linear_estConstr_policyUpdateAvgReward} shows the pareto frontier in terms of episodic return and episodic constraints for equal opportunity notion. We can our proposed method with snapshots from the training dynamics outperforms the surrogate penalty baseline with different penalty parameters and the myopic baselines with different parameters. \\
Figure \ref{EqOptRealGenerativeMix_linear_estConstr_trainEpisodeIndex} and Figure \ref{EqOptRealGenerativeMix_linear_policyUpdateAvgReward_trainEpisodeIndex} show the training dynamics over the episodes for equal opportunity with respective to the episodic constraints and episodic return, respectively. We can see our algorithm converges after some episodes, and it achieves smaller episodic constraint and better episodic return than other methods. \\
\fi
Overall, from the Pareto frontier, we can observe that our method reaches smaller fairness violation level for non-hispanic white and black individuals in both fairness notions, under a fixed reward level for the decision maker. In addition, from the confidence intervals, we can see that our algorithm has a much narrower confidence band than the other baseline.

\section{Discussion}
In this section, we discuss possible extensions of our framework and future directions.

\subsection{Extension of stepwise fairness notions}\label{subsec:extfairnotion}
In our paper, we mainly consider two popular types of fairness criteria for stepwise fairness constraints, i.e. demographic parity and equalized opportunity. However, our techniques can also be extended in future work to additional types of fairness criteria. In particular, we can consider a family of fairness constraints, as formally introduced in \cite{agarwal2018reductions}. They consider constraints in the form 
$$M\mu(a)\le c$$ for matrix $M$ and vector $c$, where the $j$-th coordinate of $\mu$ is $\mu_j(a)=\bE[g(x,y,a,\vartheta)|E_j]$ for $j\in\mathcal{J}$, and $M\in\bR^{|\mathcal K|\times |\mathcal J|}$, $c\in\bR^{\mathcal K}$. Here, $\mathcal K=\cA\times\cY\times\{+,-\}$ ($+,-$ impose positive/negative sign so as to recover $|\cdot|$ in constraints), for $\cY=\{0,1\}$, and $\mathcal J=(\Lambda\cup \{*\})\times\{0,1\}$. $E_j$ is an event defined with respect to $(x,y,\vartheta)$ and $*$  denotes  the entire probability space. This formulation includes demographic parity and equalized opportunity as special cases.

\subsection{Aggregated fairness notions}\label{subsec:extfairnotion}
Despite the fact that we  consider stepwise fairness constraints, our techniques can also be extended in future work to aggregate fairness notions that consider the entire episodic process. Specifically, we could consider {\em aggregate equalized opportunity} (also called {\em aggregate true positive rate} in \cite{d2020fairness}):
$$\sum_{h=1}^H\bP(a_h=1|y_h=1,\vartheta)\frac{\bP(y_h=1|\vartheta)}{\sum_{h=1}^H\bP(y_h=1|\vartheta)},$$
which can be viewed as a weighted sum of equalized opportunity across steps. This should be relatively straightforward to handle through 
our techniques.

\subsection{Non-episodic, infinite horizon Markov decision processes}
Another natural direction is to extend our framework to non-episodic infinite horizon. Taking DP as an example, we could consider
\begin{align*}
  &\qquad\quad\max_{\bpi\in\Pi_k}\sum_{h=1}^\infty \gamma^h\cR_{h} (\p,\bpi),\\
   &\qquad\qquad~~s.t.~~\forall h\in[H],\\
   &\gamma^h|\bP^{\pi^\alpha,p^{\alpha}}(a^\alpha_{h}=1)-\bP^{\pi^\beta,p^{\beta}}(a^\beta_{h}=1)|\le \hat c_{h}.
\end{align*}
This will involve using a more advanced version of concentration inequality for Markov chain, and we  leave this to future work.

\bibliography{cite,zhun}
\bibliographystyle{plain}

%
%




%

%

\appendix
\section{Solving Our Algorithms via Occupancy Measures}\label{app:occupancy}
Let us use occupation measure to reformulate the problem, and the objective will become occupation measures. For episode $k$, let us denote the $\bP^{\pi^\vartheta_k,p^\vartheta_k}(x^\vartheta_{k,h}=x,y^\vartheta_{k,h}=y,a^\vartheta_{k,h}=a)$ as $\rho^\vartheta_k(x,y,a,h)$, which we call occupancy measures. 
\subsection{Demographic Parity}
For episode $k$, the optimization problem can be reformulated as:
$$\max_\rho \sum_{x,y,a,h,\vartheta}p_\vartheta\rho^\alpha_k(x,y,a,h) \hat{r}^{\vartheta}_k(x,y,a)$$
such that 

$$\forall h~~\Big|\sum_{y,x} \rho^\alpha_k(x,y,a=1,h)-\sum_{y,x} \rho^\beta_k(x,y,a=1,h)\Big|\le \hat {c}_{k,h},$$

$$\forall \vartheta, x, t~~ \frac{\rho^\vartheta_k(x,y=1,a=1,h)}{\sum_{a}\rho^\vartheta_k(x,y=1,a,h)}=\frac{\rho^\vartheta_k(x,y=0,a=1,h)}{\sum_{a}\rho^\vartheta_k(x,y=0,a,h)}\quad\text{this formula makes sure the policy only depends on}~x$$

$$\forall \vartheta, x',y',h\quad \sum_{a} \rho^\vartheta_k(x',y',a,h+1)=\sum_{x,y,a}\rho^\vartheta_k(x,y,a,h)p^\vartheta_{(k)}(x',y'|x,y,a),$$

$$\forall \vartheta,x,y,a,h  ~~0\le \rho^\vartheta_k(x,y,a,h)\le 1, \quad \sum_{x,y,a}\rho^\vartheta_k(x,y,a,h)=1.$$

\subsection{Equal Opportunity}
For episode $k$, the optimization problem can be reformulated as:
$$\max_\rho \sum_{x,y,a,h,\vartheta}p_\vartheta\rho^\alpha_k(x,y,a,h) \hat{r}^{\vartheta}_k(x,y,a)$$
such that 
$$\forall h~~ \left|\frac{\sum_x\rho^\alpha_k(x,y=1,a=1,h)}{\sum_{x,a}\rho^\alpha_k(x,y=1,a,h)}-\frac{\sum_x\rho^\beta_k(x,y=1,a=1,h)}{\sum_{x,a}\rho^\beta_k(x,y=1,a,h)}\right|\le \hat d_{k,h},$$

$$\forall \vartheta, x, t~~ \frac{\rho^\vartheta_k(x,y=1,a=1,h)}{\sum_{a}\rho^\vartheta_k(x,y=1,a,h)}=\frac{\rho^\vartheta_k(x,y=0,a=1,h)}{\sum_{a}\rho^\vartheta_k(x,y=0,a,h)}\quad\text{this formula makes sure the policy only depends on}~x$$

$$\forall \vartheta, x',y',h\quad \sum_{a} \rho^\vartheta_k(x',y',a,h+1)=\sum_{x,y,a}\rho^\vartheta_k(x,y,a,h)p^\vartheta_{(k)}(x',y'|x,y,a),$$

$$\forall \vartheta,x,y,a,h  ~~0\le \rho^\vartheta_k(x,y,a,h)\le 1, \quad \sum_{x,y,a}\rho^\vartheta_k(x,y,a,h)=1.$$


\section{Extension to Multiple Sensitive Attributes}\label{app:multipleatt}
Consider there are multiple attributes $\vartheta\in \Theta=\{\theta_1,\theta_2,\cdots,\theta_q\}$, where $\Theta$ is a set of sensitive attributes.
%
We denote the expected reward for a random individual at time step $h$  as,
$$\cR^*_h(\p^*,\bpi)=\sum_{\vartheta\in\Theta}p_\vartheta\cdot  \bE^{\pi^\vartheta,p^{*\vartheta}}[r^{*\vartheta}(s^\vartheta_h,a^\vartheta_h)].$$

Note here that $\bE^{\pi^\vartheta,p^{*\vartheta}}[\cdot]$ refers to the expected value for an individual in group $\vartheta$, i.e.,
conditioned on the individual being sampled in this group.
Our ideal goal is to obtain the  policy  $\bpi^*$ that solves the following optimization problem:
\begin{align*}
&\qquad\quad\max_{\bpi}\sum_{h=1}^H \cR^*_h (\p^*,\bpi)\\
  &\qquad\qquad \mbox{s.t.}~~\forall h\in[H],  \\
  &\mathit{faircon}(\{\pi^\vartheta,p^{*\vartheta},s^\vartheta_h,a^\vartheta_h\}_{\vartheta\in\Theta}),
\end{align*}
%
where $\mathit{faircon}$ corresponds to a particular fairness concept.

(i) \textit{RL with demographic parity (DP)}. For this case,  $\mathit{faircon}$ is for any $\theta_i,\theta_j\in\Theta$
   $$\quad\bP^{\pi^{\theta_i},p^{*\theta_i}}[a^{\theta_j}_h=1]=\bP^{\pi^\beta,p^{*\theta_j}}[a^{\theta_j}_h=1],$$
   which means at each time step $h$, the decision for individuals from different groups is  statistically independent of the sensitive attribute.
   
(ii) \textit{RL with equalized opportunity (EqOpt)}. For this case,  $\mathit{faircon}$ is for any $\theta_i,\theta_j\in\Theta$
   $$\quad\bP^{\pi^{\theta_i},p^{*\theta_i}}[a^{\theta_j}_h=1|y^{\theta_j}_h=1]=\bP^{\pi^\beta,p^{*\theta_j}}[a^{\theta_j}_h=1|y^{\theta_j}_h=1],$$
which means at each time step $h$, the decision for a random individual from
each of the two different groups is  statistically independent of the sensitive attribute conditioned 
on the individual being qualified.

The corresponding practical optimization for DP and EqOpt can also adapted to multiple sensitive attributes similarly. 
\section{Feasibility Discussions}\label{app:feasibility}

Let us recall our assumption.

\begin{assumption}[Restatement of Assumption~\ref{ass:kernel}]
(a). For all $(s,a,s')\in\cS\times\cA\times\cS$, there exists a universal constant $C>0$, such that $p^{*\vartheta}(s'|s,a)\ge C$ for $\vartheta=\{\alpha,\beta\}$. (b). For all $(x,a)\in\cX\times\cA$, there exists a universal constant $\tilde C$, such that $\pi^{*\vartheta}(a|x)\ge \tilde C$.
\end{assumption}

For the ideal optimization, Assumption~\ref{ass:kernel} make sure every step $\bP^{\pi^{\vartheta},p^{*\vartheta}}(s^\vartheta_h=s,a^\vartheta_h=a)\ge C \min_{x}\pi^\vartheta(a|x)$ for all $s,a$ and $h>1$. Thus, for DP, we can set $\pi^\alpha(a=1|x)=\pi^\beta(a=1|x)=1$ for all $x$, then, we know this policy is a feasible policy. Similar for EqOpt, we also take $\pi^\alpha(a=1|x)=\pi^\beta(a=1|x)=1$ for all $x$, and this is a feasible solution given the event $y^\vartheta_h=1$ is always of positive probability. 

Same argument can be applied to the practical optimization for both DP and EqOpt.

\section{Modelling Individual's Opting In/Out}\label{app:optin}
Recall that in order to analyze the policy effect on the population, we model the behavior of a randomly drawn individual who interacts with the environment across $H$ steps. However, when we gather data from real individuals, they may opt in opt out at a certain step and not interact with the environment throughout the $H$ steps. Nevertheless, we can still aggregate their data and provide estimation of quantities of interest -- a representative randomly drawn individual that interact with the environment for the full episode. Specifically, at episode $k\in[K]$, each time step $h\in [H]$, there are $n_{k,h}$ individuals (people opt in and opt out in the process, so there are different number of people at each time step). For each group $\vartheta$, we pool all those people together and obtain $n^\vartheta_k=\sum_h n^\vartheta_{k,h}$ people in total. We will use the counting method to obtain empirical measures for those quantities, each quantity will need to sum over all the $n^\vartheta_k$ people. For the $i$-th individual, his/her status and action at time $h$ is denoted as $s^{\vartheta,(i)}_{k,h}$ and $a^{\vartheta,(i)}_{k,h}$. Some people will opt out, for example the $i$-th individual opts out at $h+1$, so there is no status for him/her at time $h+1$. Given that, let us define $\mathbbm{1}\{s^{\vartheta,(i)}_{k,h}=s,a^{\vartheta,(i)}_{k,h}=a, s^{\vartheta,(i)}_{k,h+1}=\cdot\}$ to be the indicator, which will be $1$ only when $s^{(i)}_{k,h}=s,a^{(i)}_{k,h}=a$, and that individual still hasn't opted out at time $h+1$ ($s^{\vartheta,(i)}_{k,h+1}$ exists). Similarly, if the the $i$-th individual has opted out at time $h+1$, those indicators used below concerning $s^{\vartheta,(i)}_{k,h+1}$ will be $0$.

For $\vartheta\in\{\alpha,\beta\}$, let $\p_k=(p^\alpha_k,p^\beta_k)$ and $\br_k=(r^\alpha_k,r^\beta_k)$, 
\begin{align*}
N^\vartheta_{k}(s,a)=\max\big\{1, \sum_{t\in [k-1],h\in[H],i\in[n^\vartheta_k]} \mathbbm{1}\{s^{\vartheta,(i)}_{k,h}=s,a^{\vartheta,(i)}_{k,h}=a, s^{\vartheta,(i)}_{k,h+1}=\cdot\}\big\},\\
p^\vartheta_{k} (s'|s,a)=\frac{1}{N^\vartheta_{k}(s,a)}\sum_{t\in [k-1],h\in[H],i\in[n^\vartheta_k]}\mathbbm{1}\{s^{\vartheta,(i)}_{k,h}=s,a^{\vartheta,(i)}_{k,h}=a, s^{\vartheta,(i)}_{k,h+1}=s'\},\\
\hat{r}^\vartheta_{k} (s,a)=\frac{1}{N^\vartheta_{k}(s,a)}\sum_{t\in [k-1],h\in[H],i\in[n^\vartheta_k]}r^\vartheta(s,a)\mathbbm{1}\{s^{\vartheta,(i)}_{k,h}=s,a^{\vartheta,(i)}_{k,h}=a\}.
\end{align*}

Our analysis carries to this setting as long as we assume at each time step $h$ in all episodes, there is an individual that will not opt out in the next time step $h+1$.


\section{Omitted proofs}

 For notation-wise, for simplicity, we \textbf{omit the supscript $\vartheta$} in most of the proofs. In addition, without loss of generality, in the following proofs, \textbf{we assume} $r\in[0,1]$. This is just for proof simplicity, our algorithms can still be applied to the models in the preliminary, which allows negative reward values. Also, for quantities defined below such as $Q^{\pi,p}_r$ and $V^{\pi,p}_r$, we will \textbf{omit subscripts and supscripts} when it is clear from the text. Recall the following concepts:
\paragraph{$Q$ and $V$ functions.} Two of the mostly used concepts in RL are $Q$ and $V$ functions. Specifically, $Q$ functions track the expected reward when a learner starts from state $s\in\cS$. Meanwhile, $V$ functions are the corresponding expected $Q$ functions of the selected action. For a reward function $r$ and a MDP with transition function $p$, $Q$ and $V$ functions are defined as:
\begin{align*}
Q^{\pi,p}_r(s,a,h)&=r(s,a)+\sum_{s'\in\cS}p(s'|s,a)V^{\pi,p}_r(s',h+1),\\
V^{\pi,p}_r(s,h)&=\bE_{a\sim\pi(\cdot|s)}[Q^{\pi,p}_r(s,a,h)],
\end{align*}
where we set $V^{\pi,p}_r(s,H+1)=0$.

\paragraph{Bellman error.}For an arbitrary function $m$, for underlying objectives $m^*$ and $p^*$, the \textit{Bellman error} for $p,m$ under policy $\pi$ at stage $h$ is denoted as:
\begin{equation}
    \cB^{\pi,p}_m(s,a,h)=Q^{\pi,p}_m(s,a,h)-\Big(m^*(s,a)+\sum_{s'}p^*(s'|s,a)V^{\pi,p}_m(s',h+1)\Big).
\end{equation}
\subsection{Proof of lemma \ref{lm:b_k}}

\begin{lemma}[Restatement of lemma \ref{lm:b_k}]
If we take $\hat b^\vartheta_k=\min\Big\{2H,2H\sqrt{\frac{2\ln(16SAHk^2/\delta)}{N^\vartheta_k(s,a)}}\Big\}$, then with probability $1-\delta$, the bonus $\hat b^{\vartheta}_k(s,a)$ is valid for all $k$ episodes and $\vartheta=\{\alpha,\beta\}$. 
\end{lemma}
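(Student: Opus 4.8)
The plan is to read the left-hand side of the validity inequality as the deviation of one empirical mean from its expectation, and then to control it with Hoeffding's inequality together with a union bound over state--action pairs, stages, and episodes. The structural fact that makes this work is that the value function appearing in the definition of validity, $V^{\pi^{*\vartheta},p^{*\vartheta}}_{r^{*\vartheta}}(\cdot,h+1)$, is assembled entirely from the \emph{true} kernel $p^{*\vartheta}$, the \emph{true} mean reward $r^{*\vartheta}$, and the optimal policy $\pi^{*\vartheta}$; it is therefore a deterministic, data-independent function with values in $[0,H]$ (since $r\in[0,1]$ and at most $H$ stages remain). This decoupling from the collected data is exactly what lets us treat the transition estimate $p^\vartheta_k$ as an empirical average against a fixed vector and invoke a scalar concentration inequality.

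Concretely, I would fix $\vartheta$, a pair $(s,a)\in\cS\times\cA$, a stage $h$, and an episode $k$, and enumerate the $N\defeq N^\vartheta_k(s,a)$ recorded visits to $(s,a)$ in episodes $1,\dots,k-1$. Writing $r_j$ for the reward and $s'_j$ for the successor at the $j$-th visit, set
$$X_j \defeq r_j + V^{\pi^{*\vartheta},p^{*\vartheta}}_{r^{*\vartheta}}(s'_j,h+1).$$
By the counting definitions of $\hat r^\vartheta_k$ and $p^\vartheta_k$, the expression inside the absolute value in the validity definition equals exactly $\frac1N\sum_{j=1}^N X_j-\E[X_1]$, where $\E[X_1]=r^{*\vartheta}(s,a)+\sum_{s'}p^{*\vartheta}(s'|s,a)V^{\pi^{*\vartheta},p^{*\vartheta}}_{r^{*\vartheta}}(s',h+1)$. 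Each $X_j$ lies in $[0,H+1]\subseteq[0,2H]$, so Hoeffding's inequality bounds this deviation by $2H\sqrt{\frac{\ln(2/\delta')}{2N}}$ with probability $1-\delta'$, which is at most the stated $\hat b^\vartheta_k(s,a)$ once $\delta'$ is taken polynomially small in $\delta/(SAHk^2)$.

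It then remains to make the concentration uniform. Because the trajectories are generated by policies that depend on past episodes, $N$ and the successors $s'_j$ form an adaptively collected sample rather than a fixed i.i.d. one, so I would either treat $(X_j-\E[X_1])$ as a bounded martingale-difference sequence and apply Azuma--Hoeffding, or fix a deterministic count and add a union bound over the possible values of $N$. Setting the per-event failure level of order $\delta/(SAHk^2)$ and summing over the $SA$ pairs, the $H$ stages, and all episodes via $\sum_k k^{-2}<\infty$ (with the numerical constant $16$ absorbing the Hoeffding factor and the union over counts) produces the logarithmic term $\ln(16SAHk^2/\delta)$ and delivers the bound for every episode simultaneously with probability at least $1-\delta$.

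Finally, the outer $\min\{2H,\cdot\}$ needs no real work: since $\frac1N\sum_j X_j$ and $\E[X_1]$ both lie in $[0,H+1]$, their difference is at most $H+1\le 2H$ deterministically, so the constant $2H$ is always a valid bound and simply takes over whenever $N^\vartheta_k(s,a)$ is too small for the Hoeffding term to fall below $2H$. The one genuinely delicate point is the uniformity over the random, data-dependent counts $N^\vartheta_k(s,a)$ and over all episodes at once; everything else is a routine Hoeffding-and-union-bound computation.
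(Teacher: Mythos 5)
Your proposal is correct and follows essentially the same route as the paper's proof: rewrite the validity expression as the deviation of the empirical mean of $r_j+V^{\pi^{*\vartheta},p^{*\vartheta}}_{r^{*\vartheta}}(s'_j,h+1)$ (bounded by $2H$ since $V\le H$) from its expectation, apply Hoeffding, and union bound over $(s,a)$, $h$, $k$, and $\vartheta$ with per-event level $\delta/(4SAHk^2)$ and $\sum_k k^{-2}$ summable. If anything, you are more careful than the paper, which invokes Hoeffding directly without commenting on the adaptively collected, random count $N^\vartheta_k(s,a)$ — the martingale/union-over-counts fix you mention is exactly the right way to make that step rigorous.
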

\begin{proof}
For a fixed $\vartheta\in\{\alpha,\beta\}$, given that $r\in[0,1]$, we have $\sup_{s\in\cS,h\in H} |V(s,h)|\le H$. For a single state-action pair $(s,a)$, by Hoeffding inequality, with probability $1-\delta'$
\begin{equation*}
\Big |\hat r_k(s,a)-r^{*}_k(s,a)+\sum_{s'\in\cS}\big( p_k(s'|s,a)- p^{*}(s'|s,a)\big)V(s',h+1)\Big|\le 2H\sqrt{\frac{2\ln(2/\delta')}{N_k(s,a)}}.
\end{equation*}
Also, by the boundedness of $\hat r_k$ and $V$,
\begin{equation*}
\Big |\hat r_k(s,a)-r^{*}_k(s,a)+\sum_{s'\in\cS}\big( p_k(s'|s,a)- p^{*}(s'|s,a)\big)V(s',h+1)\Big|\le 2H.
\end{equation*}
Further if we take $\delta'=\frac{\delta}{4SAHk^2}$, and further apply union bound on stats and actions, then the failure probability is $\delta/(4k^2)$ for episode $k$. Lastly, bounding across episodes, we have the failure probabilty is:
$$\sum_{i=1}^K\frac{\delta}{4k^2}\le \delta.$$ 

Since there are two values for $\vartheta$, again we apply union bound, then we have the final result.
\end{proof}
\subsection{Proof of Lemma \ref{lm:c_kd_k}}
To prove our result, we need the following handy lemma.
\begin{lemma}[Simulation lemma \cite{kearns2002near}]\label{lm:simulation}
For any policy $\pi$, objective $m$, transition probabilty $p$, and  underlying objectives $m^*,p^*$, it holds that
$$\bE^{\pi}V^{\pi,p}_m(s_1,1)-\bE^{\pi}V^{\pi,p^*}_{m^*}(s_1,1)=\bE^{\pi}\Big[\sum_{h=1}^H\cB^{\pi,p}_m(s_h,a_h,h)\Big].$$
\end{lemma}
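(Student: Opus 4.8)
The plan is to establish a one-step recursion for the per-state value gap and then telescope it along a trajectory generated under the true dynamics. Write $\bE^\pi$ for expectation over the trajectory $s_1,a_1,s_2,a_2,\ldots$ produced by following $\pi$ under the \emph{true} transition $p^*$, and set $\Delta(s,h)\defeq V^{\pi,p}_m(s,h)-V^{\pi,p^*}_{m^*}(s,h)$, so that the left-hand side of the claim is exactly $\bE^\pi[\Delta(s_1,1)]$.

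First I would rearrange the definition of the Bellman error into $Q^{\pi,p}_m(s,a,h)=m^*(s,a)+\sum_{s'}p^*(s'|s,a)V^{\pi,p}_m(s',h+1)+\cB^{\pi,p}_m(s,a,h)$. Taking $\bE_{a\sim\pi(\cdot|s)}$ of both sides and using $V^{\pi,p}_m(s,h)=\bE_{a\sim\pi(\cdot|s)}[Q^{\pi,p}_m(s,a,h)]$ expresses $V^{\pi,p}_m(s,h)$ in terms of $m^*$, $p^*$, the estimated next-step value $V^{\pi,p}_m(\cdot,h+1)$, and the Bellman error. Subtracting the exact Bellman identity $V^{\pi,p^*}_{m^*}(s,h)=\bE_{a\sim\pi(\cdot|s)}[m^*(s,a)+\sum_{s'}p^*(s'|s,a)V^{\pi,p^*}_{m^*}(s',h+1)]$, which is just the definition of $Q,V$ specialized to $p=p^*,m=m^*$, cancels the $m^*$ terms and yields the recursion $\Delta(s,h)=\bE_{a\sim\pi(\cdot|s)}\big[\cB^{\pi,p}_m(s,a,h)+\sum_{s'}p^*(s'|s,a)\Delta(s',h+1)\big]$.

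Next I would take $\bE^\pi$ over the trajectory. The crucial point is that, conditioned on $(s_h,a_h)$, the next state $s_{h+1}$ is drawn from $p^*(\cdot|s_h,a_h)$ because the trajectory evolves under the true kernel; hence by the tower property $\bE^\pi\big[\sum_{s'}p^*(s'|s_h,a_h)\Delta(s',h+1)\big]=\bE^\pi[\Delta(s_{h+1},h+1)]$. This collapses the recursion to $\bE^\pi[\Delta(s_h,h)]=\bE^\pi[\cB^{\pi,p}_m(s_h,a_h,h)]+\bE^\pi[\Delta(s_{h+1},h+1)]$. Telescoping from $h=1$ to $h=H$ and invoking the terminal condition $V^{\pi,p}_m(\cdot,H+1)=V^{\pi,p^*}_{m^*}(\cdot,H+1)=0$, so that $\Delta(\cdot,H+1)=0$, gives $\bE^\pi[\Delta(s_1,1)]=\sum_{h=1}^H\bE^\pi[\cB^{\pi,p}_m(s_h,a_h,h)]$, which is precisely the claimed identity.

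There is no deep obstacle here; the argument is a standard telescoping/coupling identity and the computations are routine. The one subtlety to get right is the alignment between the transition used in the Bellman error and the one generating the trajectory: the Bellman error measures the mismatch of the \emph{estimated} value $V^{\pi,p}_m$ against a one-step backup taken with the \emph{true} pair $(m^*,p^*)$, and the expectation $\bE^\pi$ must be over trajectories sampled from $p^*$ so that the $\sum_{s'}p^*(s'|s_h,a_h)\Delta(s',h+1)$ term folds cleanly into $\bE^\pi[\Delta(s_{h+1},h+1)]$. Were the trajectory instead sampled from $p$, this step would fail, so I would make the sampling convention explicit at the outset.
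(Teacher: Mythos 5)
Your proof is correct. The paper itself gives no proof of this lemma---it imports it by citation to Kearns and Singh---and your argument is exactly the standard one that underlies the cited result: rearrange the definition of the Bellman error into a one-step backup identity, subtract the exact Bellman equation for $(m^*,p^*)$, and telescope the value gap $\Delta(s,h)$ along a trajectory generated under $p^*$, using $\Delta(\cdot,H+1)=0$. Your closing remark is also well taken: the paper writes $\bE^{\pi}$ without specifying the generating kernel, and the identity only folds up cleanly when the trajectory is sampled from $p^*$ (matching the kernel appearing inside the Bellman error), so making that convention explicit is a genuine improvement in precision rather than pedantry.
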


\begin{lemma}[Restatement of Lemma \ref{lm:c_kd_k}]
Denote $N^{\vartheta,min}_{k}=\min_{s,a}N^\vartheta_k(s,a)$, for any $\{\epsilon_k\}_{k=1}^K$, with probability at least $1-\delta$, we take
$$\hat c_{k,h}=\sum_{\vartheta\in\{\alpha,\beta\}}H\sqrt{\frac{2S\ln(16SAHk^2/(\epsilon_k\delta))}{N^{\vartheta,min}_{k}}}+2\epsilon_k HS,$$
then $\hat c_{k,h}$ is compatible for all $h\in[H]$ and $k\in[K]$.
\end{lemma}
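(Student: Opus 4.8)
The plan is to control the left-hand side of the compatibility inequality by comparing the estimated-model acceptance probabilities to the true-model ones, for which the DP constraint holds \emph{exactly} by optimality of $\bpi^*=(\pi^{*\alpha},\pi^{*\beta})$. Since $\bpi^*$ is feasible for the ideal DP program, $\bP^{\pi^{*\alpha},p^{*\alpha}}(a_h=1)=\bP^{\pi^{*\beta},p^{*\beta}}(a_h=1)$ for every $h$. Inserting this equality and using the triangle inequality gives
$$\bigl|\bP^{\pi^{*\alpha},p_k^{\alpha}}(a_h=1)-\bP^{\pi^{*\beta},p_k^{\beta}}(a_h=1)\bigr|\le \sum_{\vartheta\in\{\alpha,\beta\}}\bigl|\bP^{\pi^{*\vartheta},p_k^{\vartheta}}(a_h=1)-\bP^{\pi^{*\vartheta},p^{*\vartheta}}(a_h=1)\bigr|,$$
so it suffices to bound, for each group $\vartheta$ separately and under the fixed policy $\pi^{*\vartheta}$, the change in the stage-$h$ acceptance probability caused by replacing the true kernel $p^{*\vartheta}$ with its estimate $p_k^{\vartheta}$.

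Next I would express $\bP^{\pi,p}(a_h=1)$ as a value function so that the simulation lemma (Lemma~\ref{lm:simulation}) applies. Taking the known, stage-dependent ``reward'' $m(s,a)=\indic{a=1}$ active only at stage $h$ (and zero otherwise), one has $V^{\pi,p}_m(s_1,1)=\bP^{\pi,p}(a_h=1)$ with $\pi=\pi^{*\vartheta}$, and because $m=m^*$ the Bellman error reduces to the pure model-mismatch term $\cB^{\pi,p_k^{\vartheta}}_m(s,a,h')=\sum_{s'}(p_k^{\vartheta}-p^{*\vartheta})(s'|s,a)\,V^{\pi,p_k^{\vartheta}}_m(s',h'+1)$. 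Lemma~\ref{lm:simulation} then gives
$$\bP^{\pi,p_k^{\vartheta}}(a_h=1)-\bP^{\pi,p^{*\vartheta}}(a_h=1)=\bE^{\pi,p^{*\vartheta}}\Big[\sum_{h'=1}^{h-1}\sum_{s'}(p_k^{\vartheta}-p^{*\vartheta})(s'|s_{h'},a_{h'})\,V^{\pi,p_k^{\vartheta}}_m(s',h'+1)\Big].$$
Since $m$ is a single-stage indicator, $V^{\pi,p_k^{\vartheta}}_m$ is itself an acceptance probability and so lies in $[0,1]$, and only the $h-1\le H-1$ stages preceding $h$ contribute. Bounding the outer expectation by a supremum over $(s,a)$ reduces the task to controlling $\bigl|\sum_{s'}(p_k^{\vartheta}-p^{*\vartheta})(s'|s,a)\,V(s')\bigr|$ uniformly in $(s,a)$, with the $H$ prefactor coming from the stage sum.

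The main obstacle is the concentration step: $V^{\pi,p_k^{\vartheta}}_m$ is built from the very samples that define $p_k^{\vartheta}(\cdot|s,a)$, so a direct Hoeffding bound on the inner product is not justified. I would break this dependence with an $\epsilon_k$-net over the candidate value vectors (equivalently, over the unit-ball test functions dual to $\|\cdot\|_1$). For a fixed net point $\bar V$, the quantity $\sum_{s'}(p_k^{\vartheta}-p^{*\vartheta})(s'|s,a)\bar V(s')$ is an average of bounded independent increments and concentrates by Hoeffding at rate $\sqrt{\ln(2/\delta')/N_k^{\vartheta}(s,a)}$; a union bound over the net (of size $(1/\epsilon_k)^{S}$), over the $SA$ pairs, over stages, and over episodes with the $k^2$ weighting (so that $\sum_k\delta/k^2\le\delta$) produces the logarithmic factor $\ln(16SAHk^2/(\epsilon_k\delta))$, and replacing $N_k^{\vartheta}(s,a)$ by $N^{\vartheta,\min}_k$ yields the per-group term $H\sqrt{2S\ln(16SAHk^2/(\epsilon_k\delta))/N^{\vartheta,\min}_k}$. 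The net-approximation error $\bigl|\sum_{s'}(p_k^{\vartheta}-p^{*\vartheta})(s'|s,a)(V-\bar V)(s')\bigr|\le \sum_{s'}|(p_k^{\vartheta}-p^{*\vartheta})(s'|s,a)|\,\epsilon_k\le S\epsilon_k$, bounded crudely per coordinate and summed over the at most $H$ contributing stages, gives $HS\epsilon_k$ per group.

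Finally I would sum the two per-group bounds: the concentration terms combine into $\sum_{\vartheta}H\sqrt{2S\ln(16SAHk^2/(\epsilon_k\delta))/N^{\vartheta,\min}_k}$ and the two discretization contributions $HS\epsilon_k$ add to $2\epsilon_k HS$, recovering exactly the stated $\hat c_{k,h}$. This bound is uniform in $h$ (the stage index enters only through the number of contributing stages, already absorbed into the $H$ factor) and, because the union bound was taken across all episodes and state-action pairs at once, it holds simultaneously for all $h\in[H]$ and $k\in[K]$ with probability at least $1-\delta$, which is precisely compatibility.
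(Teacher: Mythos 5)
Your proposal is correct and follows essentially the same route as the paper's proof: reduce compatibility via feasibility of $\bpi^*$ and the triangle inequality to a per-group model-mismatch term, encode the stage-$h$ acceptance probability as a value function for a single-stage indicator reward, apply the simulation lemma so that only the transition error appears in the Bellman error, and handle the data-dependence of $V^{\pi^{*\vartheta},p_k^{\vartheta}}_m$ with an $\epsilon_k$-net plus Hoeffding and union bounds. The only difference is that you spell out the triangle-inequality step that the paper leaves implicit, which is a welcome clarification rather than a deviation.
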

\begin{proof}
For a specific time step $h^*$, let us consider taking

\begin{equation}
  m_{h^*}(s_{k,h},a_{k,h})=m^*_{h^*}(s_{k,h},a_{k,h})=\begin{cases}
    \mathbbm{1}\{a_{k,h}=1\}, & \text{if $h=h^*$}.\\
    0, & \text{otherwise}.
  \end{cases}
\end{equation}
And it is easy to observe that
$$\bE^{\pi}V^{\pi,p}_{m_{h^*}}(s_1,1)=\bP^{\pi,p}(a_{k,h^*}=1).$$

Thus, in order to bound
$$|\bP^{\pi^*,p_k}(a_{k,h^*}=1)-\bP^{\pi^*,p^*}(a_{k,h^*}=1)|,$$
it is equivalent is to bound
$$|\bE^{\pi^*}V^{\pi^*,p_k}_{m_{h^*}}(s_1,1)-\bE^{\pi^*}V^{\pi^*,p^*}_{m_{h^*}}(s_1,1)|=\Big|\bE^{\pi^*}\Big[\sum_{h=1}^H\cB^{\pi^*,p_k}_{m_{h^*}}(s_h,a_h,h)\Big]\Big|.$$
Here, a slight fine-grained analysis suggests that we can replace $\sum_{h=1}^H$ to $\sum_{h=1}^{h^*}$, but this change cannot change the order of the final bound, so for simplicity, we still use $\sum_{h=1}^H$.

Now, let us analyze $\Big|\cB^{\pi^*,p_k}_{m_{h^*}}(s_h,a_h,h)\Big|.$ Specifically, 
\begin{align*}
\Big|\cB^{\pi^*,p_k}_{m_{h^*}}(s_h,a_h,h)\Big|&=\Big|\sum_{s'\in \cS}\Big(p_k(s'|s,a)-p^*(s'|s,a)\Big)V^{\pi^*,p_k}_{m_{h^*}}(s')\Big|.
\end{align*}

Since $V^{\pi^*,p_k}_{m_{h^*}}(s')$ (we will use $V$ for simplicity from now on) is data dependent, we need to to use a union bound argument. Notice $V(s)\in[0,1]$ for all $s$, thus, we can let $\Psi$ to be a $\epsilon$-net on $[0,1]^S$. For any fixed $V\in \Psi$, by similar proof as in Lemma~\ref{lm:b_k}, we have with probability at least $1-\delta'$, for all $k\in[K]$,
$$\Big|\sum_{s'\in \cS}\Big(p_k(s'|s,a)-p^*(s'|s,a)\Big)V(s')\Big|\le\sqrt{\frac{2\ln(8SAk^2/\delta')}{N_k(s,a)}}.$$

The cardinality of $\Psi$ is $(1/\epsilon)^S$. Thus, by using union bound and let $\delta=\delta'/(1/\epsilon)^S$ (we don't need to union bound over $H$ because we have bounded for all elements in epsilon nets), we have with probability at least $1-\delta$, for all $h\in[H]$,
$$\Big|\cB^{\pi^*,p_k}_{m_{h^*}}(s_h,a_h,h)\Big|\le\sqrt{\frac{2S\ln(8SAHk^2/(\epsilon\delta))}{N_k(s_h,a_h)}}+\epsilon S.$$

Notice our argument still valid if we set $\epsilon$ as $\epsilon_k$ for episode $k$. Then, we have that with probability at least $1-\delta$ 

$$\hat c_{k,h}=\sum_{\vartheta\in\{\alpha,\beta\}}2\sqrt{\frac{2S\ln(16SAk^2/(\epsilon_k\delta))}{\min_{s,a}N^\vartheta_k(s,a)}}+2\epsilon_k HS.$$
is compatible for all $k\in[K]$.

\end{proof}
\subsection{Proof of Lemma \ref{lm:d}}
\begin{lemma}[Restatement of Lemma~\ref{lm:d}]
Denote $p^{\vartheta,min}_{k}=\min_{s,a}p_k(y=1|s,a)$, For any $\{\epsilon_k\}_{k=1}^K$, with probability at least $1-\delta$, we take
$$\hat d_{k,h}=\sum_{\vartheta\in\{\alpha,\beta\}}\frac{3H\sqrt{\frac{2S\ln(32SAk^2/(\epsilon_k\delta))}{N^{\vartheta,min}_{k}}}+3\epsilon_k HS}{p^{\vartheta,min}_{k}\left(p^{\vartheta,min}_{k}-\sqrt{\frac{4\ln 2+2\ln(4SAk^2/\delta)}{N^{\vartheta,min}_{k}}}\right)}$$
if $p^{\vartheta,min}_{k}>\sqrt{\frac{4\ln 2+2\ln(4SAk^2/\delta)}{N^{\vartheta,min}_{k}}}$; Otherwise, we set $\hat d_{k,h}=1$.
Then $\hat d_{k,h}$ is compatible for all $h\in[H]$ and $k\in[K]$.
\end{lemma}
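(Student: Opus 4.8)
The plan is to reduce the two-group conditional-probability gap to a per-group perturbation bound, and then treat each conditional probability as a ratio of two occupancy-type quantities whose numerator and denominator are controlled separately. First I would exploit feasibility of the optimal pair under the \emph{true} kernel, namely $\bP^{\pi^{*\alpha},p^{*\alpha}}(a_h=1\mid y_h=1)=\bP^{\pi^{*\beta},p^{*\beta}}(a_h=1\mid y_h=1)$. Adding and subtracting these true conditional probabilities and applying the triangle inequality reduces the compatibility claim to bounding, for each $\vartheta\in\{\alpha,\beta\}$ separately,
$$\Delta^\vartheta_h:=\left|\bP^{\pi^{*\vartheta},p^\vartheta_k}(a_h=1\mid y_h=1)-\bP^{\pi^{*\vartheta},p^{*\vartheta}}(a_h=1\mid y_h=1)\right|,$$
and then summing over the two groups, which is where the $\sum_{\vartheta\in\{\alpha,\beta\}}$ in the statement comes from.

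Next I would write each conditional probability as a ratio. Set $P_k=\bP^{\pi^{*\vartheta},p^\vartheta_k}(a_h=1,y_h=1)$, $Q_k=\bP^{\pi^{*\vartheta},p^\vartheta_k}(y_h=1)$, and define $P^\star,Q^\star$ analogously under $p^{*\vartheta}$. Since $\{a_h=1,y_h=1\}\subseteq\{y_h=1\}$ we have $P^\star\le Q^\star$, and all four quantities lie in $[0,1]$. The algebraic identity
$$\frac{P_k}{Q_k}-\frac{P^\star}{Q^\star}=\frac{Q^\star(P_k-P^\star)+P^\star(Q^\star-Q_k)}{Q_kQ^\star},$$
combined with $P^\star\le Q^\star\le 1$, yields
$$\Delta^\vartheta_h\le\frac{|P_k-P^\star|+|Q_k-Q^\star|}{Q_kQ^\star}.$$
The two numerator terms are then handled exactly as in the proof of Lemma~\ref{lm:c_kd_k}: for $|P_k-P^\star|$ take the bounded reward $m(s,a)=\mathbbm{1}\{a=1,y=1\}$ supported at step $h$, and for $|Q_k-Q^\star|$ take $m(s,a)=\mathbbm{1}\{y=1\}$ at step $h$ (both are legitimate since $s=(x,y)$, so $y$ is encoded in the state). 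The simulation lemma (Lemma~\ref{lm:simulation}) rewrites each difference as an expected sum of Bellman errors $\cB^{\pi^{*\vartheta},p^\vartheta_k}_m$, and the same $\epsilon_k$-net concentration argument bounds each by $H\sqrt{2S\ln(\cdots)/N^{\vartheta,\min}_k}+\epsilon_k HS$ with high probability, uniformly over $h$ and $k$; the factor $3$ in the stated numerator loosely absorbs these two contributions.

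The key remaining step is an \emph{estimable} lower bound on the denominator $Q_kQ^\star$. By the law of total probability, for $h\ge 2$,
$$Q_k=\sum_{s,a}\bP^{\pi^{*\vartheta},p^\vartheta_k}(s_{h-1}=s,a_{h-1}=a)\,p^\vartheta_k(y=1\mid s,a)\ge p^{\vartheta,\min}_k,$$
and likewise $Q^\star\ge p^{*\vartheta,\min}$. Since $p^{*\vartheta,\min}$ is not observable, I would replace it by the computable $p^{\vartheta,\min}_k$ through a Hoeffding bound on the Bernoulli marginal $p^\vartheta(y=1\mid s,a)$, taken uniformly over all $(s,a)$ and all $k$ by a union bound, giving $p^{*\vartheta,\min}\ge p^{\vartheta,\min}_k-\sqrt{(4\ln 2+2\ln(4SAk^2/\delta))/N^{\vartheta,\min}_k}$. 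This produces precisely the product $p^{\vartheta,\min}_k\big(p^{\vartheta,\min}_k-\sqrt{\cdots}\big)$ in the denominator of $\hat d_{k,h}$. When $p^{\vartheta,\min}_k$ fails to exceed the square-root threshold the lower bound on $Q^\star$ is vacuous, so I fall back on the trivial estimate $\Delta^\vartheta_h\le 1$, which is the degenerate case $\hat d_{k,h}=1$. A final union bound over the net-concentration event and the Bernoulli-marginal event, together with summation over $\vartheta$, gives compatibility for all $h\in[H]$ and $k\in[K]$ simultaneously.

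I expect the denominator control to be the main obstacle: unlike the demographic-parity case in Lemma~\ref{lm:c_kd_k}, the conditional probability forces a ratio, so one must lower-bound $Q_kQ^\star$ away from zero in a way that depends only on estimable quantities. Replacing the unknown $p^{*\vartheta,\min}$ by $p^{\vartheta,\min}_k$ (and the attendant case split for the near-degenerate regime) is the delicate part; the numerator bounds are essentially a re-run of the earlier simulation-lemma argument.
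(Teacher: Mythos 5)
Your proposal is correct and follows essentially the same route as the paper's proof: decompose the conditional probability as a ratio, bound the numerator differences via the simulation lemma with the indicator objectives $\mathbbm{1}\{a=1,y=1\}$ and $\mathbbm{1}\{y=1\}$, lower-bound the denominator by $p^{\vartheta,\min}_k\bigl(p^{\vartheta,\min}_k-\sqrt{\cdot}\bigr)$ using a concentration bound on $p_k(y=1\mid s,a)$ together with the total-probability argument, and fall back to the trivial bound $1$ in the degenerate case. The only cosmetic difference is that the paper bounds $|Q_k-Q^\star|$ by summing the joint bound over $a\in\cA$ (yielding the factor contributing to the constant $3$), whereas you bound it directly with $m=\mathbbm{1}\{y=1\}$; both are valid and your explicit ratio identity is in fact a cleaner statement of the paper's decomposition.
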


\begin{proof} 
By Bretagnolle-Huber-Carol's inequality, we know that with probability at least $1-\delta$, for all $(s,a)\in\cS\times\cA$ and $k\in[K]$
$$\sum_{y\in\cY}|p_k(y|s,a)-p^*(y|s,a)|\le \sqrt{\frac{4\ln 2+2\ln(SAk^2/\delta)}{N_k(s,a)}}.$$

By proof as in Lemma~\ref{lm:c_kd_k}, we know that by taking 
\begin{equation}
  m_{h^*}(s_{k,h},a_{k,h})=m^*_{h^*}(s_{k,h},a_{k,h})=\begin{cases}
    \mathbbm{1}\{a_{k,h}=1,y_{k,h}=1\}, & \text{if $h=h^*$},\\
    0, & \text{otherwise},
  \end{cases}
\end{equation}

we have for any $\epsilon>0$, for any $a\in\cA$
$$|\bP^{\pi^*,p_k}(a_{k,h^*}=a,y_{k,h^*}=1)-\bP^{\pi^*,p^*}(a_{k,h^*}=a,y_{k,h^*}=1)|\le H \sqrt{\frac{2S\ln(8SAk^2/(\epsilon\delta))}{\min_{s,a}N_k(s,a)}}+\epsilon HS,$$
with probability at least $1-\delta$.


On the other hand, by summing over $a$ for $\bP^{\pi^*,p_k}(a_{k,h^*}=a,y_{k,h^*}=1)-\bP^{\pi^*,p^*}(a_{k,h^*}=a,y_{k,h^*}=1)$, we have 

\begin{align*}
|\bP^{\pi^*,p_k}(y_{k,h^*}=1)-\bP^{\pi^*,p^*}(y_{k,h^*}=1)|&\le \sum_a|\bP^{\pi^*,p_k}(a_{k,h^*}=a,y_{k,h^*}=1)-\bP^{\pi^*,p^*}(a_{k,h^*}=a,y_{k,h^*}=1)|\\
&\le 2H\sqrt{\frac{2S\ln(8SAk^2/(\epsilon\delta))}{\min_{s,a}N_k(s,a)}}+2\epsilon HS,
\end{align*}

If $h>1$
\begin{align*}
\bP^{\pi^*,p^*}(y_{k,h}=1)&\ge\sum_{s,a} p^*(y_{k,h}=1|s_{k,h-1}=s,a_{k,h-1}=a)\bP^{\pi^*,p^*}(s_{k,h-1}=s,a_{k,h-1}=a)\\
&\ge \min_{s,a}p^*(y=1|s,a).
\end{align*}
Since $s_1$ can be chosen by us, so we can make sure $\bP^{\pi^*,p^*}(y_{k,1}=1)$ bounded away from $0$. Actually, even we set $y_{k,1}=0$, conditioning on $\emptyset$ automatically satisfy EqOpt constraint.

Similarly, 
\begin{align*}
\bP^{\pi^*,p_k}(y_{k,h}=1)&\ge\sum_{s,a} p_k(y_{k,h}=1|s_{k,h-1}=s,a_{k,h-1}=a)\bP^{\pi^*,p_k}(s_{k,h-1}=s,a_{k,h-1}=a)\\
&\ge \min_{s,a}p_k(y=1|s,a).
\end{align*}
 By Bretagnolle-Huber-Carol's inequality, with probability at least $1-\delta$, for all $(s,a)\in\cS\times \cA$
$$|p_k(y=1|s,a)-p^*(y=1|s,a)|\le \sqrt{\frac{4\ln 2+2\ln(SAk^2/\delta)}{N_k(s,a)}}.$$

As a result, if $$\min_{s,a} p_k(y=1|s,a)>2\sqrt{\frac{4\ln 2+2\ln(SAk^2/\delta)}{\min_{s,a}N_k(s,a)}},$$
then, with probability at least $1-\delta$,
 $$\min_{s,a} p^*(y=1|s,a)>\min_{s,a}p_k(y=1|s,a)-\sqrt{\frac{4\ln 2+2\ln(SAk^2/\delta)}{\min_{s,a}N_k(s,a)}},$$

which further leads to
\begin{align*}
&|\bP^{\pi^*,p_k}(a_{k,h^*}=1|y_{k,h^*}=1)-\bP^{\pi^*,p^*}(a_{k,h^*}=1|y_{k,h^*}=1)|\\
&\le\frac{|\bP^{\pi^*,p_k}(a_{k,h^*}=1,y_{k,h^*}=1)-\bP^{\pi^*,p^*}(a_{k,h^*}=1,y_{k,h^*}=1)|+|\bP^{\pi^*,p_k}(a_{k,h^*}=1)-\bP^{\pi^*,p^*}(a_{k,h^*}=1)|}{p_k(y=1|s,a)p^*(y=1|s,a)}\\
&\le\frac{|\bP^{\pi^*,p_k}(a_{k,h^*}=1,y_{k,h^*}=1)-\bP^{\pi^*,p^*}(a_{k,h^*}=1,y_{k,h^*}=1)|+|\bP^{\pi^*,p_k}(a_{k,h^*}=1)-\bP^{\pi^*,p^*}(a_{k,h^*}=1)|}{\min_{s,a}p_k(y=1|s,a)(\min_{s,a}p_k(y=1|s,a)-\sqrt{\frac{4\ln 2+2\ln(SAk^2/\delta)}{\min_{s,a}N_k(s,a)}})}\\
&\le \frac{3H\sqrt{\frac{2S\ln(8SAk^2/(\epsilon\delta))}{\min_{s,a}N_k(s,a)}}+3\epsilon HS}{\min_{s,a}p_k(y=1|s,a)(\min_{s,a}p_k(y=1|s,a)-\sqrt{\frac{4\ln 2+2\ln(SAk^2/\delta)}{\min_{s,a}N_k(s,a)}})}.
\end{align*}

Notice our argument still valid if we set $\epsilon_k$ for episode $k$ and apply union bounds for all the events mentioned above and $\vartheta=\{\alpha,\beta\}$, then, we have that with probability at least $1-\delta$

$$\hat d_{k,h}=\left\{\begin{matrix}
	\sum_{\vartheta}\frac{3H\sqrt{\frac{2S\ln(32SAk^2/(\epsilon_k\delta))}{N^{\vartheta,min}_{k}}}+3\epsilon_k HS}{p^{\vartheta,min}_{k}\left(p^{\vartheta,min}_{k}-\sqrt{\frac{4\ln 2+2\ln(4SAk^2/\delta)}{N^{\vartheta,min}_{k}}}\right)} ,&\text{if}~p^{\vartheta,min}_{k}>\sqrt{\frac{4\ln 2+2\ln(4SAk^2/\delta)}{N^{\vartheta,min}_{k}}};\\ 
	1,&\text{otherwise};
\end{matrix}\right.$$
is compatible for all $k\in[K]$.
\end{proof}

\subsection{Proof of Theorem~\ref{thm:reward}}
We first need a lemma regarding the lower bound of $\min_{s,a}N^\vartheta_k(s,a)$.

\paragraph{Restatement of Result in \cite{kontorovich2008concentration}}In order to do that, let us consider a simplified variant of Theorem 1.1 in \cite{kontorovich2008concentration}. Let $Z_i\in S$, where $S$ is a finite set, and $Z=(Z_1,Z_2,\cdots,Z_n)$. We further denote $Z^j_i=(Z_i,Z_{i+1},\cdots,Z_j)$ as a random vector for  $1\leq i< j\leq n$.  Correspondingly, we let  $z^j_i=(z_i,z_{i+1},\cdots,z_j)$ be a subsequence for $(z_1,z_2,\cdots,z_n)$. And let 
$$\bar{\eta}_{i,j}=\sup_{v^{i-1}_1\in S^{i-1}, w,w'\in S,~\bP(Z^i_1=Y^{i-1}w)>0,~\bP(Z^i_1=V^{i-1}w')>0}\eta_{i,j}(v^{i-1}_1,w,w'),$$
where 
$$\eta_{i,j}(v^{i-1}_1,w,w')=TV\Big(\cD(Z^n_j|Z^i_1=v^{i-1}_1w),\cD(Z^n_j|Z^i_1=v^{i-1}_1w')\Big).$$
Here $TV$ is the total variational distance, and $\cD(Z^n_j|Z^i_1=v^i_1w)$ is the conditional distribution of $Z^n_j$ conditioning on $\{Z^i_1=v^i_1w\}$.

Let $H_n$ be $n\times n$ upper triangular matrix defined by
$$(H_n)_{ij}=\left\{\begin{matrix}
 1&i=j   \\ 
 \bar{\eta}_{i,j}&i<j   \\ 
 0&   o.w.
\end{matrix}\right.$$
Then, 
$$\|H_n\|_\infty=\max_{1\leq i\leq n}J_{n,i},$$
where 
$$J_{n,i}=1+\bar{\eta}_{i,i+1}+\cdots+\bar{\eta}_{i,n},$$
and $J_{n,n}=1$.

\begin{theorem}[Variant of Result in \cite{kontorovich2008concentration}]\label{thm:konto}
Let $f$ be a $L_f$-Lipschitz function (with respect to the Hamming distance) on $S^n$ for some constant $L_f>0$. Then, for any $t>0$,
$$\bP(|f(Z)-Ef(z)|\geq t)\leq 2\exp\Big(-\frac{t^2}{2nL^2_f\|H_n\|^2_\infty}\Big).$$

\end{theorem}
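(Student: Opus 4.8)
The plan is to prove this as a bounded-difference (Azuma--Hoeffding) inequality for the Doob martingale of $f(Z)$, where the per-coordinate oscillation constants are exactly the row sums $J_{n,i}$ built from the mixing coefficients $\bar\eta_{i,j}$. Write $Z_1^i=(Z_1,\dots,Z_i)$ and define the martingale difference sequence $V_i=\E[f(Z)\mid Z_1^i]-\E[f(Z)\mid Z_1^{i-1}]$ for $i\in[n]$, with $Z_1^0$ the trivial conditioning. Then $f(Z)-\E f(Z)=\sum_{i=1}^n V_i$, so the whole problem reduces to controlling the oscillation of each $V_i$ and invoking a martingale concentration bound.

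The heart of the argument, and the step I expect to be the main obstacle, is bounding the oscillation of $V_i$. Fixing the prefix $z_1^{i-1}$, note that $\E[f\mid Z_1^{i-1}=z_1^{i-1}]=\sum_w \E[f\mid Z_1^i=z_1^{i-1}w]\,\bP(Z_i=w\mid Z_1^{i-1}=z_1^{i-1})$ is a weighted average of the terms $\E[f\mid Z_1^i=z_1^{i-1}w]$, so $V_i$ is the deviation of $\E[f\mid Z_1^i=z_1^{i-1}Z_i]$ from that average; hence $\sup|V_i|$ is at most $R_i:=\max_{w,w'}\bigl|\E[f\mid Z_1^i=z_1^{i-1}w]-\E[f\mid Z_1^i=z_1^{i-1}w']\bigr|$. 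I would bound $R_i\le L_f J_{n,i}$ by a coupling/telescoping argument: switching coordinate $i$ from $w$ to $w'$ changes that coordinate directly (cost $L_f\cdot 1$ by Lipschitzness with respect to Hamming distance) and shifts the conditional law of each future block $Z_j^n$. Comparing $f$ under an optimal coupling of $\cD(Z_j^n\mid Z_1^i=z_1^{i-1}w)$ and $\cD(Z_j^n\mid Z_1^i=z_1^{i-1}w')$, whose total-variation distance is precisely $\eta_{i,j}(z_1^{i-1},w,w')\le\bar\eta_{i,j}$, each future coordinate mismatch is charged at most $L_f$, yielding $R_i\le L_f\bigl(1+\sum_{j>i}\bar\eta_{i,j}\bigr)=L_f J_{n,i}$. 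The delicate point is making the telescoping over $j>i$ precise, so that the per-block total-variation costs are summed without double counting; this is exactly where the upper-triangular structure of $H_n$ enters.

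With $|V_i|\le L_f J_{n,i}$ in hand, I would apply the Azuma--Hoeffding inequality. Treating each centered $V_i$ as a bounded difference of range at most $2L_f J_{n,i}$, we get $\sum_{i=1}^n (2L_f J_{n,i})^2\le 4L_f^2\sum_{i=1}^n J_{n,i}^2\le 4nL_f^2\max_i J_{n,i}^2=4nL_f^2\|H_n\|_\infty^2$, so that $\bP(|f(Z)-\E f(Z)|\ge t)\le 2\exp\bigl(-2t^2/(4nL_f^2\|H_n\|_\infty^2)\bigr)=2\exp\bigl(-t^2/(2nL_f^2\|H_n\|_\infty^2)\bigr)$, which is the claimed bound. (I note in passing that using the centered range $R_i\le L_f J_{n,i}$ directly, rather than the cruder $2L_fJ_{n,i}$, would sharpen the exponent by a factor of four; the stated constant corresponds to the looser choice.) Since the statement is a specialization of the main theorem of \cite{kontorovich2008concentration}, an alternative is simply to invoke that result for the $L_f$-Lipschitz $f$ and read off $\|H_n\|_\infty=\max_i J_{n,i}$, with the coupling estimate above recorded as the lemma that supplies the martingale-difference bound.
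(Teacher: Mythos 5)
The paper does not actually prove this statement: Theorem~\ref{thm:konto} is imported verbatim as a specialization of Theorem~1.1 of \cite{kontorovich2008concentration}, so the only ``proof'' in the paper is the citation. Your proposal reconstructs the argument that the cited source itself gives, and the skeleton is the right one: the Doob martingale decomposition $f(Z)-\E f(Z)=\sum_i V_i$, a bound $|V_i|\le L_f J_{n,i}$ on the martingale differences in terms of the mixing coefficients, and Azuma--Hoeffding. Your constant bookkeeping at the end is correct, and your parenthetical remark that the conditional range of $V_i$ is really $L_f J_{n,i}$ rather than $2L_f J_{n,i}$ (so the exponent could be sharpened by a factor of four) is also accurate.

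The gap is exactly where you flag it, and it is a real one: the step $R_i\le L_f\bigl(1+\sum_{j>i}\bar\eta_{i,j}\bigr)$. As written, your telescoping charges coordinate $j$ a mismatch probability of at most $\bar\eta_{i,j}$ under a \emph{single} coupling of the two conditional tail laws. But $\eta_{i,j}$ is the total-variation distance between the laws of the whole suffixes $Z_j^n$, and a generic (even maximal) coupling of $\cD(Z_{i+1}^n\mid Z_1^i=z_1^{i-1}w)$ and $\cD(Z_{i+1}^n\mid Z_1^i=z_1^{i-1}w')$ gives no control of the individual mismatch probabilities $\bP(\hat Z_j\ne\tilde Z_j)$; a greedy coordinate-by-coordinate maximal coupling does not work either, since once the two copies decouple they need not re-coalesce at the advertised rate. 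To close the argument you need a coupling that is \emph{simultaneously} maximal for every tail, i.e., $\bP(\hat Z_j^n\ne\tilde Z_j^n)=\eta_{i,j}$ for all $j>i$ at once; such a coupling exists by Goldstein's maximal coupling theorem, after which $\bP(\hat Z_j\ne\tilde Z_j)\le\bP(\hat Z_j^n\ne\tilde Z_j^n)\le\bar\eta_{i,j}$ and your chain of inequalities goes through. Kontorovich and Ramanan instead prove the martingale-difference bound by a different, more combinatorial route (a recursive/linear-programming bound over the class of Hamming-Lipschitz functions). So either invoke Goldstein's theorem explicitly at that point, or do what the paper does and simply cite \cite{kontorovich2008concentration}; as it stands, the one lemma that carries the whole theorem is asserted rather than proved.
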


\begin{lemma}
Under Assumption~\ref{ass:kernel}, with probability at least $1-\delta$, for all $k\in[K]$, $(s,a)\in\cS\times\cA$, $\vartheta\in\{\alpha,\beta\}$
$$N^\vartheta_k(s,a)\ge \left(1-\sqrt{\frac{2H^2\ln(4k^2SA/\delta)}{c\eta_kn_k(H-1)(k-1)}}\right)c\eta_kn_k(H-1)(k-1).$$
\end{lemma}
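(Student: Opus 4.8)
The plan is to write $N^\vartheta_k(s,a)$ as a sum of independent contributions from the $(k-1)n^\vartheta_k$ sampled trajectories, lower bound its expectation using the irreducibility guaranteed by Assumption~\ref{ass:kernel}, and then apply a concentration inequality to show it cannot fall far below that expectation; a union bound over $(s,a)$, the two groups, and episodes then delivers the stated ``for all $k$ simultaneously'' conclusion.

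First I would bound the expectation. Dropping the truncation at $1$, write $N^\vartheta_k(s,a)=\sum_{t\in[k-1]}\sum_{i\in[n^\vartheta_k]}W^{(t,i)}(s,a)$, where $W^{(t,i)}(s,a)=\sum_{h\in[H]}\mathbbm{1}\{s^{\vartheta,(i)}_{t,h}=s,a^{\vartheta,(i)}_{t,h}=a\}\in[0,H]$ counts the visits of a single trajectory. For $h\ge 2$, Assumption~\ref{ass:kernel}(a) gives $p^{*\vartheta}(s\mid\cdot,\cdot)\ge c$, so the individual occupies state $s$ at step $h$ with probability at least $c$ regardless of the previous state-action pair; the action floor defining $\Pi_t$ then forces the policy to play $a$ with probability at least $\eta_t$, whence $\bP(s^\vartheta_{t,h}=s,a^\vartheta_{t,h}=a)\ge c\,\eta_t$ for every $h\ge 2$. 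Since $\eta_t=t^{-1/3}$ is nonincreasing and $t\le k-1<k$, we have $\eta_t\ge\eta_k$, so summing over $h=2,\dots,H$ and over all $(k-1)n^\vartheta_k$ trajectories yields $\bE[N^\vartheta_k(s,a)]\ge c\,\eta_k\,n_k\,(H-1)(k-1)=:\mu$.

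Next I would concentrate $N^\vartheta_k(s,a)$ around $\mu$. The trajectories are mutually independent across $(t,i)$, but the per-step indicators within one trajectory are dependent through the Markov transitions, which is exactly the regime of Theorem~\ref{thm:konto}. Viewing the concatenation of all sampled trajectories as the sequence $Z$ and $N^\vartheta_k(s,a)$ as a Hamming-Lipschitz function of it, the mixing coefficients $\bar\eta_{i,j}$ vanish across distinct trajectories by independence and are controlled within a trajectory of length $H$, so $\|H_n\|_\infty$ is bounded (crudely by $H$, or by a constant via the geometric mixing implied by the minorization $p^{*\vartheta}\ge c$). Feeding this into Theorem~\ref{thm:konto} produces a lower-tail bound of the form $\bP(N^\vartheta_k(s,a)\le \mu-t)\le 2\exp(-t^2/(2V))$ for an appropriate variance proxy $V$, and choosing $t=\sqrt{2H^2\ln(\cdot)\,\mu}$ gives the multiplicative statement $N^\vartheta_k(s,a)\ge(1-\sqrt{2H^2\ln(\cdot)/\mu})\mu$. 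Finally, setting the per-event failure probability to $\delta/(4k^2SA)$ and summing over the $SA$ state-action pairs, the groups $\vartheta\in\{\alpha,\beta\}$, and all episodes $k$ (using $\sum_k k^{-2}<\infty$ to keep the total below $\delta$) produces the factor $\ln(4k^2SA/\delta)$.

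The main obstacle is this concentration step. An additive Hoeffding/McDiarmid-type inequality such as Theorem~\ref{thm:konto} naturally yields a deviation scaling like $H\sqrt{(k-1)n_k}$, whereas the stated bound carries the tighter, variance-adaptive scaling $\sqrt{H^2\mu}=H\sqrt{c\eta_k(H-1)(k-1)n_k}$, which is smaller precisely when $c\eta_k(H-1)<1$. Obtaining the $\sqrt{\mu}$ dependence therefore calls for a relative (multiplicative) Chernoff bound applied to the independent per-trajectory counts $W^{(t,i)}(s,a)\in[0,H]$, or a Bernstein-type refinement using $\bE[(W^{(t,i)})^2]\le H\,\bE[W^{(t,i)}]$. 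Reconciling this multiplicative form with the within-episode Markov dependence, while tracking the episode-varying policies through the monotonicity $\eta_t\ge\eta_k$, is the delicate part of the argument.
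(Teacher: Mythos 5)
Your proposal follows the same route as the paper: lower-bound $\bE[N^\vartheta_k(s,a)]$ by $\mu:=C\eta_k n_k(H-1)(k-1)$ using Assumption~\ref{ass:kernel}(a) together with the policy floor and the monotonicity $\eta_t\ge\eta_k$, then concentrate via Theorem~\ref{thm:konto} (independence across trajectories gives $\bar\eta_{i,j}=0$ beyond lag $H$, hence $\|H_n\|_\infty\le H$), and finish with a union bound at level $\delta/(2SAk^2)$ over $(s,a)$, $\vartheta$, and $k$. The obstacle you flag in your final paragraph is genuine, and the paper's own proof does not resolve it: substituting $t=\varepsilon\mu$ into Theorem~\ref{thm:konto} with sequence length $n$ of order $H(k-1)n_k$ and $L_f=1$ yields a tail of $2\exp\bigl(-\varepsilon^2\mu^2/(2nH^2)\bigr)$, whose exponent carries a factor $(C\eta_k)^2$, whereas the paper's displayed tail $2\exp\bigl(-\varepsilon^2 C\eta_k n_k(H-1)(k-1)/(2H^2)\bigr)$ carries only one factor of $C\eta_k$ --- precisely the variance-adaptive improvement you observe an additive McDiarmid-type bound cannot supply. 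Your proposed repair is the right one and is cleaner than invoking Theorem~\ref{thm:konto} at all: before truncation, $N^\vartheta_k(s,a)=\sum_{t,i}W^{(t,i)}(s,a)$ is a sum of independent per-trajectory counts $W^{(t,i)}\in[0,H]$, so a multiplicative Chernoff lower-tail bound gives $\Pr\bigl(N^\vartheta_k(s,a)\le(1-\varepsilon)\,\bE N^\vartheta_k(s,a)\bigr)\le\exp\bigl(-\varepsilon^2\,\bE N^\vartheta_k(s,a)/(2H)\bigr)\le\exp\bigl(-\varepsilon^2\mu/(2H)\bigr)$, which is in fact slightly stronger than the $\exp\bigl(-\varepsilon^2\mu/(2H^2)\bigr)$ rate implicit in the lemma; the within-trajectory Markov dependence becomes irrelevant once one works at the trajectory level. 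In short, your write-up identifies, and correctly patches, a step that the paper's proof glosses over.
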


\begin{proof}

By $(a)$ in Assumption~\ref{ass:kernel}, we know that for any $(s,a)$, $(s',a')$
$$\bP^{\pi^{\vartheta}_k,p^{*\vartheta}}(s^\vartheta_{k,h}=(x,y),a^\vartheta_{k,h}=a)\ge\pi^\vartheta_k(a^\vartheta_{k,h}=a|x^\vartheta_{k,h}=x)p^{*\vartheta}(s^{\vartheta}_{k,h}=(x,y)|s^{\vartheta}_{k,h-1}=s',a^\vartheta_{k,h-1}=a')\ge C\eta_k.$$

As $\eta_k$ is decreasing with respect to $k$, we must have for all $t\le k$,
$$\bP^{\pi^{\vartheta}_t,p^{*\vartheta}}(s^\vartheta_{t,h},a^\vartheta_{t,h})\ge\pi^\vartheta_t(a^\vartheta_{t,h}|x^\vartheta_{t,h})p^{*\vartheta}(s^{\vartheta}_{t,h}|s^{\vartheta}_{t,h-1},a^\vartheta_{t,h-1})\ge C\eta_k.$$

Notice each individual is independent thus,

$$\bE\frac{N_k(s,a)}{(H-1)(K-1)}\ge C\eta_kn_k.$$

Meanwhile, the data between different episodes are independent, thus, using the notation in \cite{kontorovich2008concentration}, we know $\bar{\eta}_{ij}=0$ if $|i-j|\ge H$. Notice total variational distance is always bounded by $1$, so
$$\|H_n\|_\infty=\max_{1\leq i\leq n}J_{n,i}\leq H.$$

then, by applying Theorrem~\ref{thm:konto} about the concentration result for non-stationary Markov chain, 
$$\bP\left(|N_t(s,a)-\bE N_t(s,a)|\ge \varepsilon cn_k\eta_k(H-1)(k-1)\right)\le 2\exp\left(\frac{-\varepsilon^2C\eta_kn_k(H-1)(k-1)}{2H^2}\right).$$

In other words, with probability at least $1-\delta'$,
$$N_k(s,a)\ge \left(1-\sqrt{\frac{2H^2\ln(2/\delta')}{C\eta_kn_k(H-1)(k-1)}}\right)C\eta_kn_k(H-1)(k-1)$$

Taking $\delta'=\delta/(2SAk^2)$, we have with probability at least $1-\delta$, for all $k\in[K]$, $(s,a)\in\cS\times\cA$, $\vartheta\in\{\alpha,\beta\}$
$$N^\vartheta_k(s,a)\ge \left(1-\sqrt{\frac{2H^2\ln(4k^2SA/\delta)}{C\eta_kn_k(H-1)(k-1)}}\right)C\eta_kn_k(H-1)(k-1).$$

\end{proof}

Next, we will use the concept of \textit{optimism} in \cite{brantley2020constrained} in our proof.
\begin{definition}[Optimism]
We call $(p_k,r_k)$ is optimisitc if 
$$\bE\Big[V^{\pi^*,p_k}_{r_k}(s_1,1)\Big]\ge \bE\Big[V^{\pi^*,p^*}_{r^*}(s_1,1)\Big].$$
\end{definition}

\begin{lemma}
If $\hat b_k$ is valid in episode $k$ for all $k$ simultaneously, we have 
$$\bE\Big[V^{\pi^*,p_k}_{r_k}(s_1,1)\Big]\ge \bE\Big[V^{\pi^*,p^*}_{r^*}(s_1,1)\Big].$$
\end{lemma}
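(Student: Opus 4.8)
The plan is to prove the stronger, state-wise optimism statement $V^{\pi^*,p_k}_{r_k}(s,h) \ge V^{\pi^*,p^*}_{r^*}(s,h)$ for every $s\in\cS$ and every $h\in[H+1]$ by backward induction on $h$, and then specialize to $h=1$, $s=s_1$ and take expectations. I would deliberately avoid routing this through the simulation lemma: validity controls a Bellman-type residual measured against the \emph{true} value function $V^{\pi^*,p^*}_{r^*}$, whereas the Bellman error $\cB^{\pi^*,p_k}_{r_k}$ produced by the simulation lemma is measured against the \emph{model} value function $V^{\pi^*,p_k}_{r_k}$, and the induction is exactly what reconciles these two value functions.

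The base case $h=H+1$ is immediate since both value functions are defined to be $0$. For the inductive step I fix $s$ and assume $V^{\pi^*,p_k}_{r_k}(s',h+1) \ge V^{\pi^*,p^*}_{r^*}(s',h+1)$ for all $s'$. Expanding both value functions by one step of the Bellman recursion under $\pi^*$ and subtracting, the difference $V^{\pi^*,p_k}_{r_k}(s,h)-V^{\pi^*,p^*}_{r^*}(s,h)$ becomes the expectation over $a\sim\pi^*(\cdot|s)$ of
$$\big(r_k(s,a)-r^*(s,a)\big)+\sum_{s'}p_k(s'|s,a)\big(V^{\pi^*,p_k}_{r_k}(s',h+1)-V^{\pi^*,p^*}_{r^*}(s',h+1)\big)+\sum_{s'}\big(p_k(s'|s,a)-p^*(s'|s,a)\big)V^{\pi^*,p^*}_{r^*}(s',h+1).$$
The key algebraic move is to add and subtract $\sum_{s'}p_k(s'|s,a)V^{\pi^*,p^*}_{r^*}(s',h+1)$, which splits the transition gap into a $p_k$-weighted difference of value functions (the middle sum) and a transition-mismatch term evaluated against the true value function (the last sum).

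The middle sum is nonnegative by the induction hypothesis, since $p_k(\cdot|s,a)$ is a probability distribution with nonnegative weights. For the remaining terms I substitute $r_k=\hat r_k+\hat b_k$, so that they read $\hat b_k(s,a)+\big[\hat r_k(s,a)-r^*(s,a)+\sum_{s'}(p_k(s'|s,a)-p^*(s'|s,a))V^{\pi^*,p^*}_{r^*}(s',h+1)\big]$; the validity of $\hat b_k$ says precisely that the bracketed quantity is at least $-\hat b_k(s,a)$, so this sum is $\ge 0$. Taking the expectation over $a$ then gives $V^{\pi^*,p_k}_{r_k}(s,h)\ge V^{\pi^*,p^*}_{r^*}(s,h)$, closing the induction. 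Evaluating at $h=1$, $s=s_1$ and taking $\bE$ yields the stated conclusion.

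The single point needing care, and the main obstacle, is exactly the value-function mismatch noted above: validity is stated through $V^{\pi^*,p^*}_{r^*}$, not $V^{\pi^*,p_k}_{r_k}$, so a naive telescoping argument does not close. The induction circumvents this by using the monotonicity already established at stage $h+1$ to absorb the $p_k$-weighted value difference before validity is invoked on the residual term, after which the bonus $\hat b_k$ cancels the validity bound exactly.
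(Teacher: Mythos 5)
Your proof is correct and matches the paper's argument essentially exactly: the paper also proceeds by backward induction on $h$ (at the level of $Q$-functions rather than $V$-functions), uses the inductive hypothesis to replace $V^{\pi^*,p_k}_{r_k}(\cdot,h+1)$ by $V^{\pi^*,p^*}_{r^*}(\cdot,h+1)$ under the $p_k$-expectation, and then invokes validity of $\hat b_k$ on the residual reward-plus-transition-mismatch term. Your observation that validity is stated against the true value function and that the induction is what reconciles the two value functions is precisely the mechanism in the paper's proof.
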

\begin{proof}
This proof mainly follows \cite{brantley2020constrained} by using induction.

Since the setting ends at episode $H$,
$$Q^{\pi^*,p_k}_{r_k}(s,a,H+1)=Q^{\pi^*,p^*}_{r^*}(s,a,H+1)=0.$$

We assume that the inductive hypothesis $Q^{\pi^*,p_k}_{r_k}(s,a,h)\ge Q^{\pi^*,p^*}_{r^*}(s,a,h+1)$ (thus, $V^{\pi^*,p_k}_{r_k}(s,h+1)\ge V^{\pi^*,p^*}_{r^*}(s,h+1)$ holds)). Then, for $h$,
\begin{align*}
Q^{\pi^*,p_k}_{r_k}(s,a,h+1)&=r_k+\sum_{s'\in\cS}p_k(s'|s,a)V^{\pi^*,p_k}_{r_k}(s',h+1)\\
&\ge r_k+\sum_{s'\in\cS}p_k(s'|s,a)V^{\pi^*,p^*}_{r^*}(s',h+1).
\end{align*}

Meanwhile, we know,
$$Q^{\pi^*,p^*}_{r^*}(s,a,h)=r^*(s,a)+ \sum_{s'\in\cS}p^*(s'|s,a)V^{\pi^*,p^*}_{r^*}(s',h+1).$$

Subtracting the above two formulas, we have 
\begin{align*}
Q^{\pi^*,p_k}_{r_k}(s,a,h)-Q^{\pi^*,p^*}_{r^*}(s,a,h)&\ge (\hat{r}_k(s,a)+\hat{b}_k(s,a)-r^*(s,a))\\
&+ \sum_{s'\in\cS}(p_k(s'|s,a)-p^*(s'|s,a))V^{\pi^*,p^*}_{r^*}(s',h+1)\ge 0
\end{align*}
where the last inequality holds because of the bonuses are valid.
\end{proof}

Let us summarize briefly and informally: with probability $1-4\delta$, we have:

\begin{itemize}
    \item[1.] $\{c_{k,h}\}_{k,h}$ are compatible;
    \item[2.] $\{d_{k,h}\}_{k,h}$ are compatible;
    \item[3.] $\{b^\vartheta_{k,h}\}_{k,h}$ are valid;
    \item[4.] $N^\vartheta_k(s,a)\ge \left(1-\sqrt{\frac{2H^2\ln(4k^2SA/\delta)}{c\eta_kn_k(H-1)(k-1)}}\right)C\eta_kn_k(H-1)(k-1).$
\end{itemize}

We denote event $\cE_1$ as the events 1, 3, 4 hold simultaneously. We denote event $\cE_2$ as the events 2, 3, 4 hold simultaneously. 

\begin{lemma}\label{lm:keyreg}
When for DP case, $\cE_1$ holds (similar for EqOpt case, $\cE_2$ holds), and when $\eta_k\le \tilde C$, we have 
\end{lemma}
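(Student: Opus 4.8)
The plan is to reduce the per-episode reward regret to a weighted sum of expected Bellman errors under the learned model, and then to bound those Bellman errors using the validity of the bonus together with a uniform concentration estimate. Writing the episodic objective as a value function, $\sum_{h=1}^H\cR^*_h(\p^*,\bpi)=\sum_{\vartheta}p_\vartheta\,\bE[V^{\pi^\vartheta,p^{*\vartheta}}_{r^{*\vartheta}}(s_1,1)]$, the unnormalized regret becomes $\sum_\vartheta p_\vartheta\big(\bE[V^{\pi^{*\vartheta},p^{*\vartheta}}_{r^{*\vartheta}}(s_1,1)]-\bE[V^{\pi^\vartheta_k,p^{*\vartheta}}_{r^{*\vartheta}}(s_1,1)]\big)$. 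First I would insert the learned model $(p^\vartheta_k,r^\vartheta_k)$ and telescope each group's term into an \emph{optimism} gap $\bE[V^{\pi^{*\vartheta},p^{*\vartheta}}_{r^{*\vartheta}}]-\bE[V^{\pi^{*\vartheta},p^\vartheta_k}_{r^\vartheta_k}]$, an \emph{optimality} gap $\bE[V^{\pi^{*\vartheta},p^\vartheta_k}_{r^\vartheta_k}]-\bE[V^{\pi^\vartheta_k,p^\vartheta_k}_{r^\vartheta_k}]$, and a \emph{simulation} gap $\bE[V^{\pi^\vartheta_k,p^\vartheta_k}_{r^\vartheta_k}]-\bE[V^{\pi^\vartheta_k,p^{*\vartheta}}_{r^{*\vartheta}}]$.

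The first two gaps vanish in sign under the stated hypotheses. On $\cE_1$ (respectively $\cE_2$) the bonus is valid, so the optimism lemma gives $\bE[V^{\pi^{*\vartheta},p^\vartheta_k}_{r^\vartheta_k}]\ge\bE[V^{\pi^{*\vartheta},p^{*\vartheta}}_{r^{*\vartheta}}]$ per group, making every optimism gap nonpositive. For the optimality gaps I would sum over $\vartheta$ with the weights $p_\vartheta$ to recover the surrogate objective $\sum_h\cR_{k,h}(\p_k,\cdot)$; nonpositivity of the weighted sum is then exactly the optimality of $\bpi_k$, \emph{provided} $\bpi^*$ is feasible for the surrogate program. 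This is where both hypotheses enter: compatibility of $\{\hat c_{k,h}\}$ (resp.\ $\{\hat d_{k,h}\}$) says $\bpi^*$ obeys the relaxed fairness constraint evaluated under $p^\vartheta_k$, while $\eta_k\le\tilde C$ combined with Assumption~\ref{ass:kernel}(b) gives $\pi^{*\vartheta}(a=1\mid x)\ge\tilde C\ge\eta_k$, so $\bpi^*\in\Pi_k$. Hence $\bpi^*$ is surrogate-feasible and $\sum_\vartheta p_\vartheta\bE[V^{\pi^{*\vartheta},p^\vartheta_k}_{r^\vartheta_k}]\le\sum_\vartheta p_\vartheta\bE[V^{\pi^\vartheta_k,p^\vartheta_k}_{r^\vartheta_k}]$, killing the optimality contribution.

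What survives is the simulation gap, which I would rewrite via the simulation lemma (Lemma~\ref{lm:simulation}) as $\bE^{\pi^\vartheta_k}\big[\sum_{h=1}^H\cB^{\pi^\vartheta_k,p^\vartheta_k}_{r^\vartheta_k}(s_h,a_h,h)\big]$, whose summand expands to $\hat r^\vartheta_k-r^{*\vartheta}+\hat b^\vartheta_k+\sum_{s'}\big(p^\vartheta_k-p^{*\vartheta}\big)(s'\mid s,a)\,V^{\pi^\vartheta_k,p^\vartheta_k}_{r^\vartheta_k}(s',h+1)$. I expect the main obstacle here to be that validity (Lemma~\ref{lm:b_k}) controls the transition error only against the \emph{fixed} optimal value $V^{\pi^{*\vartheta},p^{*\vartheta}}_{r^{*\vartheta}}$, whereas this Bellman error carries the \emph{data-dependent} learned value $V^{\pi^\vartheta_k,p^\vartheta_k}_{r^\vartheta_k}$, so the Hoeffding-type bonus does not apply verbatim. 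I would resolve this exactly as in the proof of Lemma~\ref{lm:c_kd_k}, bounding $|\sum_{s'}(p^\vartheta_k-p^{*\vartheta})V|$ uniformly over an $\epsilon_k$-net of the value cube (the learned value lies in a box of $\|\cdot\|_\infty$-radius polynomial in $H$ regardless of the policy), which yields a per-step bound of order $H\sqrt{S\ln(SAHk^2/(\epsilon_k\delta))/N^{\vartheta,\min}_k}+\epsilon_k HS$; the nonnegative $\hat b^\vartheta_k$ and the reward error $|\hat r^\vartheta_k-r^{*\vartheta}|$ are of the same or smaller order and are absorbed. Summing over the $H$ steps and the two groups, dividing by $H$, and substituting the high-probability lower bound $N^{\vartheta,\min}_k\gtrsim C\eta_k n_k(H-1)(k-1)$ (legitimate once $k\ge T$, so the bracketed factor exceeds $1/2$) then gives the regret bound; with $\eta_k=k^{-1/3}$ and $\epsilon_k=(kHS)^{-1}$ the dominant $1/\sqrt{N^{\vartheta,\min}_k}$ term decays like $k^{-1/3}$, matching Theorem~\ref{thm:reward}.
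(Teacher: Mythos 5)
Your proposal is correct and follows essentially the same route as the paper: the same three-way decomposition into an optimism gap (killed by validity of $\hat b^\vartheta_k$), an optimality gap (killed by surrogate-feasibility of $\bpi^*$, which is exactly where compatibility and $\eta_k\le\tilde C$ with Assumption~\ref{ass:kernel}(b) enter), and a simulation gap handled via Lemma~\ref{lm:simulation} and the lower bound on $N^{\vartheta,\min}_k$. The only cosmetic difference is that you bound the Bellman error of the data-dependent value $V^{\pi^\vartheta_k,p^\vartheta_k}_{r^\vartheta_k}$ by an explicit $\epsilon_k$-net argument mirroring Lemma~\ref{lm:c_kd_k}, whereas the paper imports this uniform concentration as Lemma B.4 of \cite{brantley2020constrained}; both yield the same $k^{-1/3}$ rate.
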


\begin{proof}

Recall
$$\cR^*_h(\p^*,\bpi)=\sum_{\vartheta\in\{\alpha,\beta\}}p_\vartheta\cdot  \bE^{\pi^\vartheta,p^{*\vartheta}}[r^{*\vartheta}(s^\vartheta_h,a^\vartheta_h)].$$
Our aim is to bound
$$\cR_{reg}(k)= \frac{1}{H}\sum_{h=1}^H\Big(\cR^*_h (\p^*,\bpi^{*})-\cR^*_h (\p^*,\bpi_k)\Big)=\frac{1}{H}\sum_{\vartheta}p^\vartheta\left(\bE\Big[V^{\pi^{*\vartheta},p^{*\vartheta}}_{r^{*\vartheta}}(s_1,1)\Big]-\bE\Big[V^{\pi^{\vartheta}_k,p^{*\vartheta}}_{r^{*\vartheta}}(s_1,1)\Big]\right).$$

Let us first study $\sum_{\vartheta}p^\vartheta\left(\bE\Big[V^{\pi^{*\vartheta},p^{*\vartheta}}_{r^{*\vartheta}}(s_1,1)\Big]-\bE\Big[V^{\pi^{\vartheta}_k,p^{*\vartheta}}_{r^{*\vartheta}}(s_1,1)\Big]\right)$.

If $\{b^\vartheta_{k,h}\}_{k,h}$ are valid, by optimism, we have 
$$\bE\Big[V^{\pi^{*\vartheta},p^{*\vartheta}}_{r^{*\vartheta}}(s_1,1)\Big]\le \bE\Big[V^{\pi^{*\vartheta},p^\vartheta_k}_{r^\vartheta_k}(s_1,1)\Big].$$

As a result, we have 
\begin{align*}
\sum_{\vartheta}p^\vartheta\left(\bE\Big[V^{\pi^{*\vartheta},p^{*\vartheta}}_{r^{*\vartheta}}(s_1,1)\Big]-\bE\Big[V^{\pi^{\vartheta}_k,p^{*\vartheta}}_{r^{*\vartheta}}(s_1,1)\Big]\right)&\le \sum_{\vartheta}p^\vartheta\left(\bE\Big[V^{\pi^{*\vartheta},p^\vartheta_k}_{r^\vartheta_k}(s_1,1)\Big]-\bE\Big[V^{\pi^{\vartheta}_k,p^{*\vartheta}}_{r^{*\vartheta}}(s_1,1)\Big]\right).
\end{align*}

Throughout the proof and proofs afterwards, let us take $\eta_k=k^{-1/3}$.

If $\eta_k\le \tilde C$ (equivalently $k>(\tilde C)^{-3}$ if we take $\eta_k=k^{-1/3}$), then by compatibility of $\{\hat{c}_{k,h}\}_{k,h}$ or $\{\hat{d}_{k,h}\}_{k,h}$, $(\pi^{*\alpha},\pi^{*\beta})$ is a feasible solution to our algorithm, as a result

\begin{align*}
\sum_{\vartheta}p^\vartheta\left(\bE\Big[V^{\pi^{*\vartheta},p^{*\vartheta}}_{r^{*\vartheta}}(s_1,1)\Big]-\bE\Big[V^{\pi^{\vartheta}_k,p^{*\vartheta}}_{r^{*\vartheta}}(s_1,1)\Big]\right)&\le \sum_{\vartheta}p^\vartheta\left(\bE\Big[V^{\pi^{*\vartheta},p^\vartheta_k}_{r^\vartheta_k}(s_1,1)\Big]-\bE\Big[V^{\pi^{\vartheta}_k,p^{*\vartheta}}_{r^{*\vartheta}}(s_1,1)\Big]\right)\\
&\le \sum_{\vartheta}p^\vartheta\left(\bE\Big[V^{\pi^{\vartheta}_k,p^\vartheta_k}_{r^\vartheta_k}(s_1,1)\Big]-\bE\Big[V^{\pi^{\vartheta}_k,p^{*\vartheta}}_{r^{*\vartheta}}(s_1,1)\Big]\right)\\
&\le \sum_{\vartheta}p^\vartheta\left[\cB^{\pi^{\vartheta}_k,p^\vartheta_k}_{r^\vartheta_k}(s_{k,h},a_{k,h},h)\right].
\end{align*}
Thus,
\begin{align*}
\cR_{reg}(k)&= \frac{1}{H}\sum_{h=1}^H\Big(\cR^*_h (\p^*,\bpi^{*})-\cR^*_h (\p^*,\bpi_k)\Big)\\
&=\frac{1}{H}\sum_{\vartheta}p^\vartheta\left(\bE\Big[V^{\pi^{*\vartheta},p^{*\vartheta}}_{r^{*\vartheta}}(s_1,1)\Big]-\bE\Big[V^{\pi^{\vartheta}_k,p^{*\vartheta}}_{r^{*\vartheta}}(s_1,1)\Big]\right)\\
&\le \frac{1}{H}\sum_{\vartheta}p^\vartheta\left[\cB^{\pi^{\vartheta}_k,p^\vartheta_k}_{r^\vartheta_k}(s_{k,h},a_{k,h},h)\right].
\end{align*}

By Lemma B.4 of \cite{brantley2020constrained}, with probability $1-2\delta$ for any $\vartheta\in\{\alpha,\beta\}$,
$$\left|\cB^{\pi^{\vartheta}_k,p^\vartheta_k}_{r^\vartheta_k}(s_{k,h},a_{k,h},h)\right|\le 4H^2\sqrt{\frac{2S\ln(16S^2AH^2k^3/\delta)}{\min_{s,a}N_k(s,a)}}+\frac{1}{k}.$$

Then, when $k\ge \frac{8H^2\ln(4k^2SA/\delta)}{C\eta_kn_k(H-1)}$, that is  $k\ge C'+ \left(\frac{8H^2\ln(4SA/\delta)}{Cn_k(H-1)}\right)^{3}$ for some constant $C'$, we have

$$N^\vartheta_k(s,a)\ge \frac{1}{2}C\eta_kn_k(H-1)(k-1). $$

Thus,
$$\left|\cB^{\pi^{\vartheta}_k,p^\vartheta_k}_{r^\vartheta_k}(s_{k,h},a_{k,h},h)\right|\le 4H^2\sqrt{\frac{4S\ln(16S^2AH^2k^3/\delta)}{Ck^{-1/3}n_k(H-1)(k-1)}}+\frac{1}{k}.$$
Thus, with probability $1-2\delta$
$$\cR_{reg}(k)\le 4H\sqrt{\frac{4S\ln(16S^2AH^2k^3/\delta)}{Ck^{-1/3}n_k(H-1)(k-1)}}+\frac{1}{kH}$$

\end{proof}

\begin{theorem}[Restatement of Theorem~\ref{thm:reward}]
For $type\in\{\mathit{DP},\mathit{EqOpt}\}$, with probability at least $1-\delta$, there exists a threshold $T=\cO\left(\Big(\frac{H\ln(SA/\delta)}{n_k}\Big)^{3}\right)$, such that for all $k\ge T$,
$$\cR^{\text{type}}_{\mathrm{reg}}(k)=\cO\left(Hk^{-\frac{1}{3}}\sqrt{HS\ln(S^2AH^2k^3/\delta)}\right).$$
\end{theorem}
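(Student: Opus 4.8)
The plan is to obtain Theorem~\ref{thm:reward} as an essentially immediate consequence of the machinery already assembled, the key input being the explicit per-episode bound derived in the proof of Lemma~\ref{lm:keyreg}. First I would fix $type\in\{\mathit{DP},\mathit{EqOpt}\}$ and work on the associated good event: for DP this is $\cE_1$ (validity of $\{\hat b^\vartheta_k\}$, compatibility of $\{\hat c_{k,h}\}$, and the visitation lower bound on $N^\vartheta_k$ holding simultaneously for all $k$), and for EqOpt the analogous $\cE_2$ built from $\{\hat d_{k,h}\}$. Each constituent event fails with probability at most a constant multiple of $\delta$ by Lemmas~\ref{lm:b_k},~\ref{lm:c_kd_k},~\ref{lm:d}, the visitation lemma, and the Bellman-error bound (Brantley B.4), so after rescaling each confidence parameter a union bound gives $\Pr(\cE_1)\ge 1-\delta$ and $\Pr(\cE_2)\ge 1-\delta$.

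Next I would pin down the threshold $T$, which arises from the two conditions needed for Lemma~\ref{lm:keyreg} to apply at episode $k$. The first is $\eta_k=k^{-1/3}\le\tilde C$, which by Assumption~\ref{ass:kernel}(b) places the constrained optimum $\bpi^*$ in $\Pi_k$ and, together with compatibility, makes it feasible for the empirical program; this is exactly what licenses the optimism-plus-feasibility chain that collapses the reward regret into an expected cumulative Bellman error. The second is the burn-in condition under which the visitation bound tightens to $N^\vartheta_k(s,a)\ge\frac12 C\eta_k n_k(H-1)(k-1)$, namely $k\ge\cO\big((H\ln(SA/\delta)/n_k)^{3}\big)$. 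Taking $T$ to be the larger of the two gives a threshold of the stated order, and for all $k\ge T$ on the good event Lemma~\ref{lm:keyreg} yields
$$\cR^{\text{type}}_{\mathrm{reg}}(k)\le 4H\sqrt{\frac{4S\ln(16S^2AH^2k^3/\delta)}{Ck^{-1/3}n_k(H-1)(k-1)}}+\frac{1}{kH}.$$

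Finally I would simplify this explicit bound into the claimed rate. Since $k^{-1/3}(k-1)=\Theta(k^{2/3})$, the denominator under the square root contributes a factor $k^{-1/3}$, and gathering the remaining powers of $H$ and $S$ and the logarithmic factor while absorbing the universal constant $C$ and the per-episode sample size $n_k$ into $\cO(\cdot)$ leaves a bound of order $\cO\big(Hk^{-1/3}\sqrt{HS\ln(S^2AH^2k^3/\delta)}\big)$ (the bound only needs to be an upper bound, so the milder true $H$-dependence coming from the $(H-1)$ in the denominator is harmless), with the additive $\tfrac{1}{kH}$ term of strictly smaller order. The identical argument runs for EqOpt on $\cE_2$, replacing compatibility of $\{\hat c_{k,h}\}$ by that of $\{\hat d_{k,h}\}$, so the conclusion holds uniformly for both constraint types.

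As for the main obstacle: essentially all the difficulty has been front-loaded into the supporting lemmas, and the single genuinely substantive ingredient is the visitation lower bound on $N^\vartheta_k$, which both forces the burn-in threshold $T$ and pins down the $k^{-1/3}$ rate. Establishing it is the hard part, requiring the non-stationary Markov-chain concentration inequality (Theorem~\ref{thm:konto}), the per-step reachability $\bP^{\pi^\vartheta_k,p^{*\vartheta}}(s^\vartheta_{k,h}=s,a^\vartheta_{k,h}=a)\ge C\eta_k$ furnished by Assumption~\ref{ass:kernel}(a) and the policy floor $\eta_k$, and the observation that independence across episodes together with the finite horizon caps the mixing-matrix norm at $\|H_n\|_\infty\le H$. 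Granting that lemma, deriving the theorem from Lemma~\ref{lm:keyreg} is only careful bookkeeping of the failure probabilities and of the asymptotic orders.
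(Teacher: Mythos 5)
Your proposal matches the paper's own proof essentially step for step: the paper likewise front-loads all the work into the visitation lower bound on $N^\vartheta_k(s,a)$ (via the non-stationary Markov-chain concentration result), the optimism lemma, and Lemma~\ref{lm:keyreg}, whose explicit per-episode bound you quote verbatim, and then derives Theorem~\ref{thm:reward} by observing $n_k\ge 1$ and absorbing the burn-in conditions (including $\eta_k\le\tilde C$) into the threshold $T$. The argument and the bookkeeping of failure probabilities are the same as in the paper, so the proposal is correct.
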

\begin{proof}
Noticing $n_k\ge 1$ and the result follows immediately from Lemma~\ref{lm:keyreg}.
\end{proof}

\subsection{Proof of Theorem~\ref{thm:constraint}}

Recall for the fairness constraints, we consider violation for each type of constraint in episode $k$ as the following:
$$\cC_{reg}^{DP}(k)=\frac{1}{H}\sum_{h=1}^H \big|\bP^{\pi^{\alpha}_k,p^{*\alpha}}(a^\alpha_{k,h}=1)-\bP^{\pi^\beta_k,p^{*\beta}}(a^\beta_{k,h}=1)\big|.$$
and 
\begin{align*}
 \cC_{reg}^{EqOpt}(k)=\frac{1}{H}\sum_{h=1}^H \big|\bP^{\pi^{\alpha}_k,p^{*\alpha}}(a^\alpha_{k,h}=1\big|y^\alpha_{k,h}=1)-\bP^{\pi^\beta_k,p^{*\beta}}(a^\beta_{k,h}=1|y^\beta_{k,h}=1)\big|.  
\end{align*}
\begin{theorem}[Restatement of Theorem~\ref{thm:constraint}]
For $type\in\{\mathit{DP},\mathit{EqOpt}\}$, with probability at least $1-\delta$, there exists a threshold $T=\cO\left(\Big(\frac{H\ln(SA/\delta)}{n_k}\Big)^{3}\right)$, such that for all $k\ge T$,
$$\cC_{\mathrm{reg}}^{\text{type}}(k)\le \cO\left( k^{-\frac{1}{3}}\sqrt{SH\ln(S^2HAk^3/\delta)}\right).$$

\end{theorem}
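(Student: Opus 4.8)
The plan is to control the fairness violation, which is stated under the \emph{true} kernel $p^{*\vartheta}$, by relating it to the constraint the algorithm actually enforces, which is stated under the \emph{estimated} kernel $p^\vartheta_k$. Consider first the DP case and fix a step $h$. Inserting the estimated-kernel probabilities and using the triangle inequality,
\begin{align*}
\big|\bP^{\pi^\alpha_k,p^{*\alpha}}(a^\alpha_{k,h}=1)-\bP^{\pi^\beta_k,p^{*\beta}}(a^\beta_{k,h}=1)\big|
&\le \big|\bP^{\pi^\alpha_k,p^{*\alpha}}(a^\alpha_{k,h}=1)-\bP^{\pi^\alpha_k,p^\alpha_k}(a^\alpha_{k,h}=1)\big|\\
&\quad+\big|\bP^{\pi^\alpha_k,p^\alpha_k}(a^\alpha_{k,h}=1)-\bP^{\pi^\beta_k,p^\beta_k}(a^\beta_{k,h}=1)\big|\\
&\quad+\big|\bP^{\pi^\beta_k,p^\beta_k}(a^\beta_{k,h}=1)-\bP^{\pi^\beta_k,p^{*\beta}}(a^\beta_{k,h}=1)\big|.
\end{align*}
The middle term is at most $\hat c_{k,h}$, because $\bpi_k$ is by construction feasible for the practical DP program. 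Each of the two outer terms has the form $|\bP^{\pi^\vartheta_k,p^{*\vartheta}}(a^\vartheta_{k,h}=1)-\bP^{\pi^\vartheta_k,p^\vartheta_k}(a^\vartheta_{k,h}=1)|$: the \emph{same} executed policy run under the two kernels.

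The second step is to bound each outer term exactly as in the proof of Lemma~\ref{lm:c_kd_k}, with the only change that the optimal policy $\pi^{*\vartheta}$ is replaced by the executed policy $\pi^\vartheta_k$. Choosing the indicator reward $m_h(s,a)=\mathbbm{1}\{a=1\}$ at stage $h$ (and $0$ elsewhere), the Simulation Lemma~\ref{lm:simulation} rewrites the kernel-mismatch gap as a sum of Bellman errors $\sum_{h'}\cB^{\pi^\vartheta_k,p^\vartheta_k}_{m_h}(s_{h'},a_{h'},h')$, and the same $\epsilon_k$-net plus Hoeffding/union-bound argument bounds each Bellman error by $\sqrt{2S\ln(\cdots)/N^{\vartheta,\min}_k}+\epsilon_k S$. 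Summing over the $H$ stages and the two groups shows the outer terms are of the same order as $\hat c_{k,h}$ itself, so the whole per-step violation is $\cO(\hat c_{k,h})$. It then remains to substitute the explicit $\hat c_{k,h}$ from Lemma~\ref{lm:c_kd_k} and the high-probability lower bound $N^\vartheta_k(s,a)\ge \tfrac12 C\eta_k n_k(H-1)(k-1)$ from the concentration lemma, which holds once $k\ge T$ for the stated threshold $T=\cO((H\ln(SA/\delta)/n_k)^3)$. With $\eta_k=k^{-1/3}$ and $\epsilon_k=(kHS)^{-1}$ the dominant contribution is $H\sqrt{S\ln(\cdots)/(k^{-1/3}n_k(H-1)(k-1))}=k^{-1/3}\sqrt{HS\ln(\cdots)/n_k}$, while the $\epsilon_k$-terms are $\cO(1/k)$; averaging over $h$ and using $n_k\ge 1$ gives the claimed rate. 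All of this is on the event $\cE_1$, which holds with probability at least $1-\delta$ after rescaling $\delta$.

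For the EqOpt case the scheme is identical but with conditional probabilities. I would write each conditional probability as the ratio of the joint probability $\bP(a^\vartheta_{k,h}=1,y^\vartheta_{k,h}=1)$ to the marginal $\bP(y^\vartheta_{k,h}=1)$, bound the joint-probability mismatch by the same simulation-lemma argument now with indicator $\mathbbm{1}\{a=1,y=1\}$, and use $\hat d_{k,h}$ from Lemma~\ref{lm:d} for the feasibility slack on the event $\cE_2$. The main obstacle is the denominator: I must keep $\bP^{\pi^\vartheta_k,p^{*\vartheta}}(y^\vartheta_{k,h}=1)$ bounded away from $0$. This follows, as in Lemma~\ref{lm:d}, from Assumption~\ref{ass:kernel}(a) via $\bP^{\pi^\vartheta_k,p^{*\vartheta}}(y^\vartheta_{k,h}=1)\ge \min_{s,a}p^{*\vartheta}(y=1\mid s,a)$ for $h>1$ (the freely chosen $s_0$ handles $h=1$), combined with the Bretagnolle--Huber--Carol bound linking $p^\vartheta_k(y=1\mid s,a)$ to $p^{*\vartheta}(y=1\mid s,a)$. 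Since these denominator factors are positive constants, they affect only the hidden constant in $\cO(\cdot)$, so the final EqOpt rate coincides with the DP one.
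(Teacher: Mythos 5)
Your proposal is correct and follows essentially the same route as the paper: the same triangle-inequality decomposition into the enforced slack ($\hat c_{k,h}$ or $\hat d_{k,h}$) plus two kernel-mismatch terms, the same reuse of the Lemma~\ref{lm:c_kd_k}/\ref{lm:d} argument with the executed policy $\pi^\vartheta_k$ in place of $\pi^{*\vartheta}$ (justified by the $\epsilon$-net covering any value function), and the same denominator control for EqOpt via Assumption~\ref{ass:kernel} and the Bretagnolle--Huber--Carol bound before substituting the lower bound on $N^\vartheta_k(s,a)$ with $\eta_k=k^{-1/3}$, $\epsilon_k=(kHS)^{-1}$. No substantive differences to report.
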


\begin{proof}

Let us first consider $\cC_{reg}^{DP}(k)$. Notice

\begin{align*}
\cC_{reg}^{DP}(k)&=\frac{1}{H}\sum_{h=1}^H \big|\bP^{\pi^{\alpha}_k,p^{*\alpha}}(a^\alpha_{k,h}=1)-\bP^{\pi^\beta_k,p^{*\beta}}(a^\beta_{k,h}=1)\big|\\
&\le \frac{1}{H}\sum_{h=1}^H \big|\bP^{\pi^{\alpha}_k,p^{\alpha}_k}(a^\alpha_{k,h}=1)-\bP^{\pi^\beta_k,p^{\beta}_k}(a^\beta_{k,h}=1)\big|+\frac{1}{H}\sum_{h=1}^H \big|\bP^{\pi^{\alpha}_k,p^{*\alpha}}(a^\alpha_{k,h}=1)-\bP^{\pi^\alpha_k,p^{\alpha}_k}(a^\alpha_{k,h}=1)\big|\\
&+\frac{1}{H}\sum_{h=1}^H \big|\bP^{\pi^{\beta}_k,p^{*\beta}}(a^\beta_{k,h}=1)-\bP^{\pi^\beta_k,p^{\beta}_k}(a^\beta_{k,h}=1)\big|\\
&\le \sum_h\frac{\hat{c}_{k,h}}{H}+\frac{1}{H}\sum_{h=1}^H \big|\bP^{\pi^{\alpha}_k,p^{*\alpha}}(a^\alpha_{k,h}=1)-\bP^{\pi^\alpha_k,p^{\alpha}_k}(a^\alpha_{k,h}=1)\big|\\
&+\frac{1}{H}\sum_{h=1}^H \big|\bP^{\pi^{\beta}_k,p^{*\beta}}(a^\beta_{k,h}=1)-\bP^{\pi^\beta_k,p^{\beta}_k}(a^\beta_{k,h}=1)\big|.
\end{align*}

Notice that switching $\pi^{*\vartheta}$ to $\pi^\vartheta_k$ doesn't change our argument in Lemma~\ref{lm:c_kd_k}, thus with probability at least $1-\delta$, for $\epsilon_k$ (we take $\epsilon_k$ to be consistent to $\hat{c}_{k,h}$) for both $\vartheta=\{\alpha,\beta\}$

$$\big|\bP^{\pi^{\vartheta}_k,p^{*\vartheta}}(a^\vartheta_{k,h}=1)-\bP^{\pi^\vartheta_k,p^{\vartheta}_k}(a^\vartheta_{k,h}=1)\big|\le H\sqrt{\frac{2S\ln(16SAk^2/(\epsilon_k\delta))}{\min_{s,a}N^\vartheta_k(s,a)}}+2\epsilon_k HS.$$

Thus, with probability at least $1-2\delta$,

$$\cC_{reg}^{DP}(k)\le \sum_\vartheta 2H\sqrt{\frac{2S\ln(16SAk^2/(\epsilon_k\delta))}{\min_{s,a}N^\vartheta_k(s,a)}}+4\epsilon_k HS.$$

By taking $\epsilon_k=1/(kHS)$,
$$\cC_{reg}^{DP}(k)\le \sum_\vartheta 2H\sqrt{\frac{2S\ln(16S^2Ak^3H/(\delta))}{\min_{s,a}N^\vartheta_k(s,a)}}+\frac{4}{k}.$$

Now, let us consider $$\cC_{reg}^{EqOpt}(k)=\frac{1}{H}\sum_{h=1}^H \big|\bP^{\pi^{\alpha}_k,p^{*\alpha}}(a^\alpha_{k,h}=1\big|y^\alpha_{k,h}=1)-\bP^{\pi^\beta_k,p^{*\beta}}(a^\beta_{k,h}=1|y^\beta_{k,h}=1)\big|.$$

Similarly, by triangular inequality, 

\begin{align*}
\cC_{reg}^{EqOpt}(k)&=\frac{1}{H}\sum_{h=1}^H \big|\bP^{\pi^{\alpha}_k,p^{*\alpha}}(a^\alpha_{k,h}=1\big|y^\alpha_{k,h}=1)-\bP^{\pi^\beta_k,p^{*\beta}}(a^\beta_{k,h}=1|y^\beta_{k,h}=1)\big|\\
&\le \frac{1}{H}\sum_{h=1}^H \big|\bP^{\pi^{\alpha}_k,p^{\alpha}_k}(a^\alpha_{k,h}=1\big|y^\alpha_{k,h}=1)-\bP^{\pi^\beta_k,p^{\beta}_k}(a^\beta_{k,h}=1|y^\beta_{k,h}=1)\big|\\
&+ \frac{1}{H}\sum_{h=1}^H\sum_{\vartheta} \big|\bP^{\pi^{\vartheta}_k,p^{\vartheta}_k}(a^\vartheta_{k,h}=1\big|y^\vartheta_{k,h}=1)-\bP^{\pi^{\vartheta}_k,p^{*\vartheta}}(a^\vartheta_{k,h}=1|y^\vartheta_{k,h}=1)\big|.
\end{align*}

Notice that switching $\pi^{*\vartheta}$ to $\pi^\vartheta_k$ doesn't change our argument in Lemma~\ref{lm:d}, thus with probability at least $1-2\delta$, for $\epsilon_k$ (we take $\epsilon_k$ to be consistent to $\hat{c}_{k,h}$),

$$\cC_{reg}^{EqOpt}(k)\le\left\{\begin{matrix}
	\sum_{\vartheta}\frac{6H\sqrt{\frac{2S\ln(32SAk^2/(\epsilon_k\delta))}{N^{\vartheta,min}_{k}}}+6\epsilon_k HS}{p^{\vartheta,min}_{k}\left(p^{\vartheta,min}_{k}-\sqrt{\frac{4\ln 2+2\ln(4SAk^2/\delta)}{N^{\vartheta,min}_{k}}}\right)} ,&\text{if}~p^{\vartheta,min}_{k}>\sqrt{\frac{4\ln 2+2\ln(4SAk^2/\delta)}{N^{\vartheta,min}_{k}}};\\ 
	1,&\text{otherwise}.
\end{matrix}\right.$$
Meanwhile, it also holds simultaneously that for all $(s,a)\in\cS\times \cA$,
$$|p_k(y=1|s,a)-p^*(y=1|s,a)|\le \sqrt{\frac{4\ln 2+2\ln(4SAk^2/\delta)}{N_k(s,a)}}.$$

Thus, if $4\sqrt{\frac{4\ln 2+2\ln(4SAk^2/\delta)}{\min_{s,a}N_k(s,a)}}<C$, we have 
$$\cC_{reg}^{EqOpt}(k)\le \sum_{\vartheta}\frac{6H\sqrt{\frac{2S\ln(32SAk^2/(\epsilon_k\delta))}{N^{\vartheta,min}_{k}}}+6\epsilon_k HS}{c^2/4}$$


Recall with probability at least $1-\delta$, for all $k\in[K]$, $(s,a)\in\cS\times\cA$, $\vartheta\in\{\alpha,\beta\}$
$$N^\vartheta_k(s,a)\ge \left(1-\sqrt{\frac{2H^2\ln(4k^2SA/\delta)}{C\eta_kn_k(H-1)(k-1)}}\right)C\eta_kn_k(H-1)(k-1).$$

Then, when $k\ge \frac{8H^2\ln(4k^2SA/\delta)}{C\eta_kn_k(H-1)}$,

$$N^\vartheta_k(s,a)\ge \frac{1}{2}C\eta_kn_k(H-1)(k-1). $$

Thus, we have the following properties:

\begin{itemize}
    \item With probability $1-3\delta$,  there exists a constant threshold $C$, for $k\ge C'+ \left(\frac{8H^2\ln(4SA/\delta)}{Cn_k(H-1)}\right)^{3}$,
    we have 
   $$\min_{s,a}N^\vartheta_k(s,a)\ge \frac{1}{2}C n_k(H-1)(k-1)k^{-1/3}. $$
   As a result, 
   $$\cC_{reg}^{DP}(k)\le \sum_\vartheta 2H\sqrt{\frac{4S\ln(16S^2HAk^3/\delta)}{C n_k(H-1)(k-1)k^{-1/3} }}+\frac{4}{k}.$$
   \item With probability $1-3\delta$,  there exists a constant threshold $C'$, for $$k\ge\max\left\{ C'+ \left(\frac{8H^2\ln(4SA/\delta)}{Cn_k(H-1)}\right)^{3}, \left(\frac{32\ln2+16\ln(4SA/\delta)}{C^3(H-1)n_k}\right)^3\right\},$$
   $$\cC_{reg}^{EqOpt}(k)\le \sum_{\vartheta}\frac{24H\sqrt{\frac{4S\ln(32S^2HAk^3/\delta)}{C n_k(H-1)(k-1)k^{-1/3}}}+24/k}{C^2}.$$

\end{itemize}

Noticing $n_k\ge 1$, we obtain the final result.
\end{proof}

\section{Further Details About Experiments}\label{app:experiment}


\subsection{Additional Optimizations}
\label{Optimizations}
For the formulation of our algorithm via occupancy measure, please refer to Appendix~\ref{app:occupancy}. We here describe additional formulations for the surrogate optimization. The formulation for surrogate optimization with different fairness notions are discussed below. 
\subsubsection{Demographic Parity Penalized Objective Surrogate}
For episode $k$, the optimization problem can be reformulated as:
$$\max_\rho \sum_{x,y,a,h,\vartheta}p_\vartheta\rho^\vartheta_k(x,y,a,h) \hat{r}^{\vartheta}_k(x,y,a) - \lambda \sum_{h,a}(\sum_{y,x} \rho_k^\alpha(x,y,a,h)-\sum_{y,x} \rho_k^\beta(x,y,a,h))^2$$
such that 
$$\forall \vartheta, x, t~~ \frac{\rho^\vartheta_k(x,y=1,a=1,h)}{\sum_{a}\rho^\vartheta_k(x,y=1,a,h)}=\frac{\rho^\vartheta_k(x,y=0,a=1,h)}{\sum_{a}\rho^\vartheta_k(x,y=0,a,h)}$$
$$\forall \vartheta, x',y',h\quad \sum_{a} \rho^\vartheta_k(x',y',a,h+1)=\sum_{x,y,a}\rho^\vartheta_k(x,y,a,h)p^\vartheta_{(k)}(x',y'|x,y,a),$$
$$\forall \vartheta,x,y,a,h  ~~0\le \rho^\vartheta_k(x,y,a,h)\le 1, \quad \sum_{x,y,a}\rho^\vartheta_k(x,y,a,h)=1.$$

\subsubsection{Equal Opportunity Penalized Objective Surrogate}
We implement change of variable techniques to convert the polynomial optimization problem to quadratic optimization problems for computational purpose. For episode $k$, the optimization problem can be reformulated as:
$$\max_{\rho,u,v} \sum_{x,y,a,h,\vartheta}p_\vartheta\rho^\vartheta_k(x,y,a,h) \hat{r}^{\vartheta}_k(x,y,a)-\sum_h \lambda (u_h - v_h)^2$$
such that 

$$\forall h~~ v_h=\sum_x\rho_k^\alpha(x,y=1,a=1,h)\sum_{x,a}\rho_k^\beta(x,y=1,a,h)$$
$$\forall h~~ u_h=\sum_x\rho_k^\beta(x,y=1,a=1,h)\sum_{x,a}\rho_k^\alpha(x,y=1,a,h)$$

$$\forall \vartheta, x, t~~ \frac{\rho^\vartheta_k(x,y=1,a=1,h)}{\sum_{a}\rho^\vartheta_k(x,y=1,a,h)}=\frac{\rho^\vartheta_k(x,y=0,a=1,h)}{\sum_{a}\rho^\vartheta_k(x,y=0,a,h)}$$

$$\forall \vartheta, x',y',h\quad \sum_{a} \rho^\vartheta_k(x',y',a,h+1)=\sum_{x,y,a}\rho^\vartheta_k(x,y,a,h)p^\vartheta_{(k)}(x',y'|x,y,a),$$

$$\forall \vartheta,x,y,a,h  ~~0\le \rho^\vartheta_k(x,y,a,h)\le 1, \quad \sum_{x,y,a}\rho^\vartheta_k(x,y,a,h)=1.$$

\subsection{Additional Results for Synthetic Data}
\subsubsection{Synthetic Data}
For the population dynamics,  
we model the intial qualification distribution as $\bP^\vartheta(y^{\vartheta}_0=1)$, and initial feature distribution conditioned on qualification
$\bP^\vartheta(x_0=j|y_0=w)$. For a loan setting, we can interpret a higher feature value $i$ as corresponding to a better credit score. 
Then, we generate the underlying group-independent and time invariant transition kernel $p^{*\vartheta}$ in the following way: we first set a distribution for $p^{*\vartheta}(y'=w'|y = w, a = v)$ for $w',w,v\in\{0,1\}$. 
Then, we set $p^{*\vartheta}(x'=j'|x=j,y'=w',a=v)$ for $j',j\in\cX$ and $w',v\in\{0,1\}$. 
Thus we set $p^{*\vartheta}(x',y'|x,y,a)$ as $p^{*\vartheta}(y'|y, a)$ $p^{*\vartheta}(x'|x,y',a)$

\subsubsection{Experimental Results}
Figure~\ref{DpSimGenerativeMix_linear_estConstr_policyUpdateAvgReward} shows the Pareto frontier in terms of episodic total return and episodic step-average fairness violation for demographic parity, and Figure \ref{EqOptSimGenerativeMix_linear_estConstr_policyUpdateAvgReward} shows the counterpart for equal opportunity.
Figure~\ref{DpSimGenerativeMix_linear_estConstr_trainEpisodeIndex} and~\ref{DpSimGenerativeMix_linear_policyUpdateAvgReward_trainEpisodeIndex} demonstrate the training dynamics of different algorithms
for demographic parity, and 
Figures~\ref{EqOptSimGenerativeMix_linear_estConstr_trainEpisodeIndex} and~\ref{EqOptSimGenerativeMix_linear_policyUpdateAvgReward_trainEpisodeIndex} demonstrate the counterpart
for equal opportunity.
Our proposed method outperforms the baselines in terms of Pareto frontiers, and converges to a stable level in terms of fairness violation over the training episodes. 
Overall, from the Pareto frontier, we can observe that our method reaches smaller fairness violation level for two groups of individuals in both fairness notions, under a fixed reward level for the decision maker. In addition, from the confidence intervals, we can see that our algorithm has a much narrower confidence band than the other baseline.

\begin{figure*}[h!]
\centering

\begin{subfigure}[b]{0.32\textwidth}
\centering
\includegraphics[width = \textwidth]{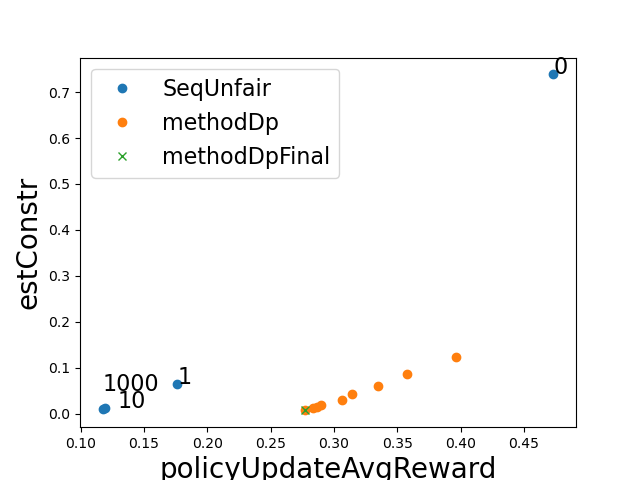}
\caption{DP Synthetic Pareto ~~~~~~~~~ \label{DpSimGenerativeMix_linear_estConstr_policyUpdateAvgReward}}
\end{subfigure}
\hfill
\begin{subfigure}[b]{0.32\textwidth}
\centering
\includegraphics[width = 1\textwidth]{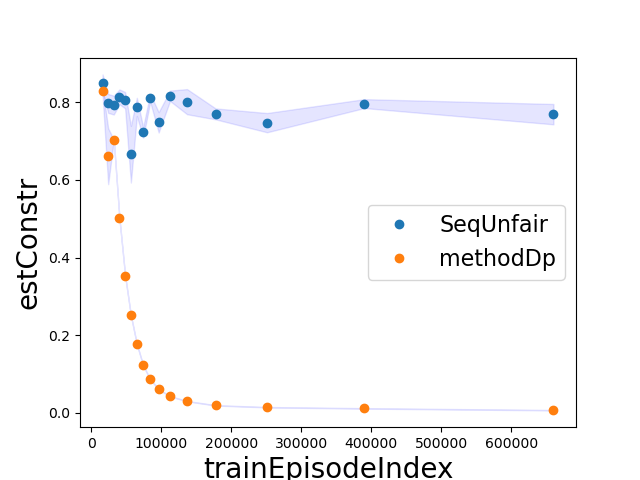}
\caption{DP Synthetic estConstr \label{DpSimGenerativeMix_linear_estConstr_trainEpisodeIndex}}
\end{subfigure}
\hfill
\begin{subfigure}[b]{0.32\textwidth}
\centering
\includegraphics[width = 1\textwidth]{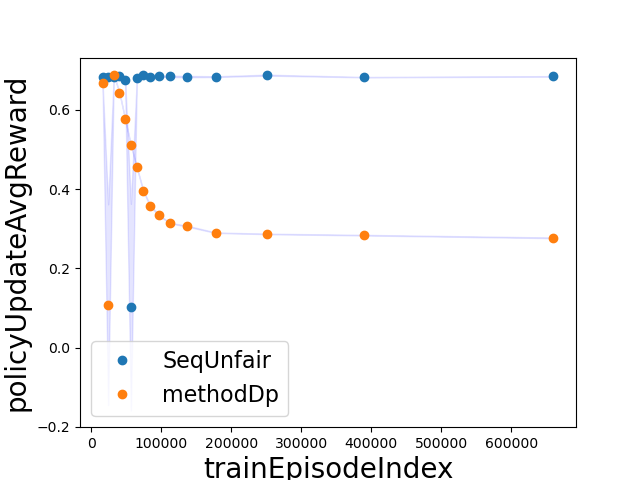}
\caption{DP Synthetic return \label{DpSimGenerativeMix_linear_policyUpdateAvgReward_trainEpisodeIndex}}
\end{subfigure}
\vskip\baselineskip
\begin{subfigure}[b]{0.32\textwidth}
\centering
\includegraphics[width = \textwidth]{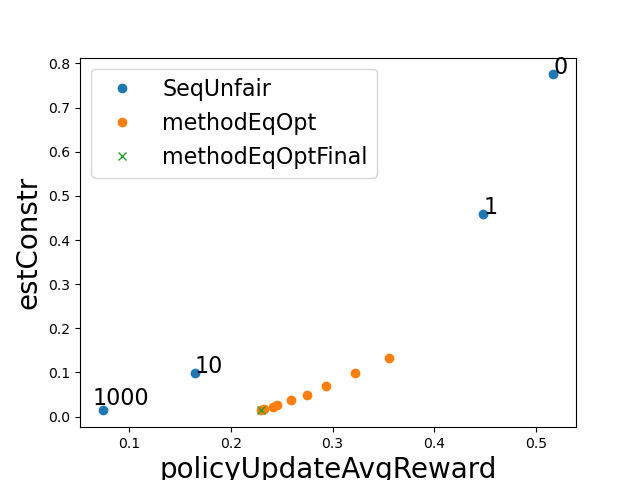}
\caption{EqOpt Synthetic Pareto ~~~~~~~~~ \label{EqOptSimGenerativeMix_linear_estConstr_policyUpdateAvgReward}}
\end{subfigure}
\hfill
\begin{subfigure}[b]{0.32\textwidth}
\centering
\includegraphics[width = 1\textwidth]{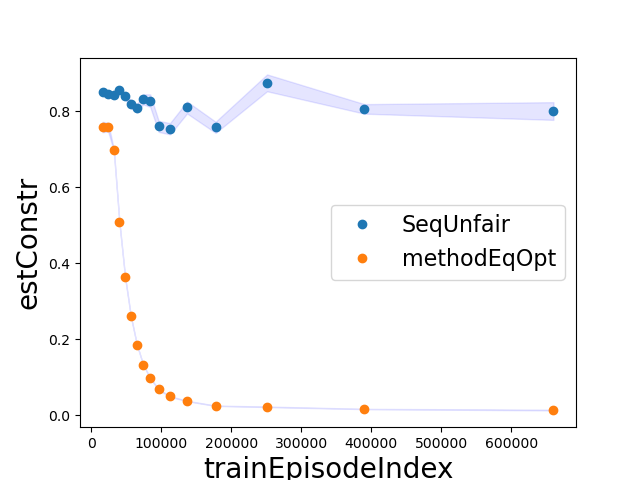}
\caption{EqOpt Synthetic estConstr ~~~\label{EqOptSimGenerativeMix_linear_estConstr_trainEpisodeIndex}}
\end{subfigure}
\hfill
\begin{subfigure}[b]{0.32\textwidth}
\centering
\includegraphics[width = 1\textwidth]{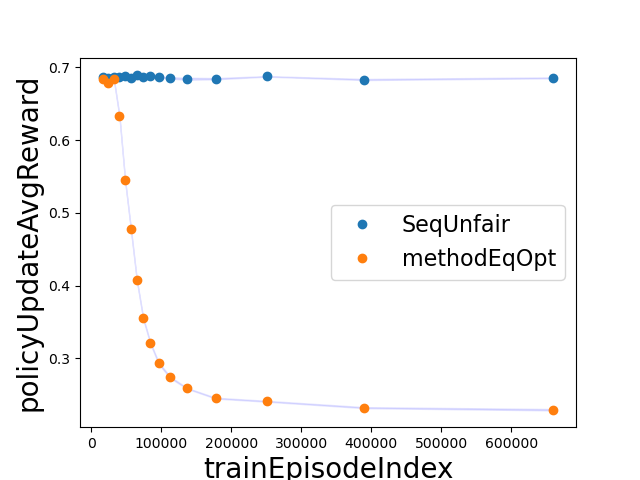}
\caption{EqOpt Synthetic return \label{EqOptSimGenerativeMix_linear_policyUpdateAvgReward_trainEpisodeIndex}}
\end{subfigure}
$$\vspace{-0.5in}$$
\caption{Synthetic data result. \ref{DpSimGenerativeMix_linear_estConstr_policyUpdateAvgReward} and \ref{EqOptSimGenerativeMix_linear_estConstr_policyUpdateAvgReward} for Pareto frontier, \ref{DpSimGenerativeMix_linear_estConstr_trainEpisodeIndex}\ref{DpSimGenerativeMix_linear_policyUpdateAvgReward_trainEpisodeIndex} for the constraint violation level in the training dynamics  \ref{EqOptSimGenerativeMix_linear_estConstr_trainEpisodeIndex} \ref{EqOptSimGenerativeMix_linear_policyUpdateAvgReward_trainEpisodeIndex} for the avergage episodic return over the training dynamics. In Pareto plots, the point with cross marker is the result for the final episode. The text close to points stand for penalty parameters}
\end{figure*}

\subsection{Detailed choice for parameters for experiments}
We discuss the choice of parameters for our data generating processes. 
\subsubsection{Synthetic Data}
\label{Synthetic Data}
We set $\bP(x_0=i|y_0=1) = 0.2$ and $\bP(x_0=i|y_0=0) = 0.2$ for initial probability conditioned on the qualification status of the individual. \\
We set $p^{*\vartheta}(y'=1|y = 1, a = 1) = 0.6$, $p^{*\vartheta}(y'=1|y = 1, a = 0) = 0.4$, $p^{*\vartheta}(y'=1|y = 0, a = 1) = 0.6$ and $p^{*\vartheta}(y'=1|y = 0, a = 0) = 0.4$ .\\
We define $d_{{\vartheta},j',j,w'} := p^{*\vartheta}(x'=j'|x = j, y' = w'), \forall \vartheta,j',j,w'$. 
We further define a vector: $D_{\vartheta,j,w'}=[d_{\vartheta,j',j,w'}]_{j'}$.
And we set $D_{\vartheta,0,w'} = [0.3, 0.25, 0.2, 0.15, 0.1]$, $D_{\vartheta,1,w'} = [0.22, 0.26, 0.22, 0.17, 0.13]$, $D_{\vartheta,2,w'} = [0.17, 0.21, 0.24, 0.21, 0.17]$, $D_{\vartheta,3,w'} = [0.13, 0.17, 0.22, 0.26, 0.22]
$ and $D_{\vartheta,4,w'} = [0.1, 0.15, 0.2, 0.25, 0.3], \forall \vartheta, w'$\\
Here, we use asymmetric $p^{*\vartheta}$ value such that we have higher probability to obtain sampled individual who is qualified for the next time step if we give positive decision at this step.\\
This design purpose is we would like to reflect the fact that the sampled individual will be motivated by positive decision from decision maker and demotivated by negative decisions. 

\subsubsection{Semi-Realistic Data}
\label{Semi-Realistic Data}
Notice our experiments in the main context on FICO data is semi-realistic data. We here include the details about the full data generating process. We define 
$g_{\vartheta,j',j, 1, 1} := p^{*\vartheta}(x'=j'|x = j, y = 1, a = 1)$, $g_{\vartheta,j',j, 1, 0} := p^{*\vartheta}(x'=j'|x = j, y = 1, a = 0)$, $g_{\vartheta,j',j, 0, 1} := p^{*\vartheta}(x'=j'|x = j, y = 0, a = 1)$ and $g_{\vartheta,j',j, 0, 0} := p^{*\vartheta}(x'=j'|x = j, y = 0, a = 0)$. 
We further define a vector: $G_{\vartheta,j,w,v}=[g_{\vartheta,j',j,w,v}]_{j'}$, 
and we set,\\
$G_{\vartheta,0, 1, 1} = [0.3, 0.25, 0.2, 0.15, 0.1]$, $G_{\vartheta,1, 1, 1} = [0.18, 0.27, 0.23, 0.18, 0.14]$, $G_{\vartheta,2, 1, 1} = [0.14, 0.18, 0.27, 0.23, 0.18]$, $G_{\vartheta,3, 1, 1} = [0.1, 0.15, 0.2, 0.3, 0.25]$, $G_{\vartheta,4, 1, 1} = [0.06, 0.13, 0.19, 0.24, 0.38]$, \\
$G_{\vartheta,0, 1, 0} = [0.38, 0.24, 0.19, 0.13, 0.06]$, $G_{\vartheta,1, 1, 0} = [0.25, 0.3, 0.2, 0.15, 0.1]$, $G_{\vartheta,2, 1, 0} = [0.18, 0.23, 0.27, 0.18, 0.14]$, $G_{\vartheta,3, 1, 0} = [0.14, 0.18, 0.23, 0.27, 0.18]
$, $G_{\vartheta,4, 1, 0} = [0.1, 0.15, 0.2, 0.25, 0.3]$,\\
$G_{\vartheta,0, 0, 1} = [0.3, 0.25, 0.2, 0.15, 0.1]$, $G_{\vartheta,1, 0, 1} = [0.18, 0.27, 0.23, 0.18, 0.14]$, $G_{\vartheta,2, 0, 1} = [0.14, 0.18, 0.27, 0.23, 0.18]$, $G_{\vartheta,3, 0, 1} = [0.1, 0.15, 0.2, 0.3, 0.25]
$, $G_{\vartheta,4, 0, 1} = [0.1, 0.15, 0.2, 0.25, 0.3]$,\\
$G_{\vartheta,0, 0, 0} = [0.38, 0.24, 0.19, 0.13, 0.06]$, $G_{\vartheta,1, 0, 0} = [0.25, 0.3, 0.2, 0.15, 0.1]$, $G_{\vartheta,2, 0, 0} = [0.18, 0.23, 0.27, 0.18, 0.14]$, $G_{\vartheta,3, 0, 0} = [0.14, 0.18, 0.23, 0.27, 0.18]
$, $G_{\vartheta,4, 0, 0} = [0.1, 0.15, 0.2, 0.25, 0.3]$

Similar to synthetic data generating process, $g_{\vartheta,j',j,w,v}$ are set such that $x$ has higher probability to transition to a higher value of $x'$ for the next step when we make a positive decision at the current step, and lower probability to transition to a lower value of $x'$ for the next step when we make a negative decision at the current step.\\
For example, when $x = 2$, we have higher probability that $x$ will transition to $x' = 3$ than $x' = 1$ given $a=1$. \\

\end{document}